\theoremstyle{plain}
\newtheorem{thm}{\protect\theoremname}
  \theoremstyle{plain}
  \newtheorem{prop}[thm]{\protect\propositionname}
  \theoremstyle{definition}
  \newtheorem{defn}[thm]{\protect\definitionname}
  \theoremstyle{plain}
  \newtheorem{lem}[thm]{\protect\lemmaname}
\newcommand\myshade{55}
\colorlet{mylinkcolor}{violet}
\colorlet{mycitecolor}{blue}
\colorlet{myurlcolor}{BrickRed}
\newcites{sup}{References}
\newcommand{\diag}{\mathop{\mathrm{diag}}}
\DeclareMathOperator*{\plim}{\mathrm{plim}}
\newcommand{\lyxdot}{.}
  \providecommand{\definitionname}{Definition}
  \providecommand{\lemmaname}{Lemma}
  \providecommand{\propositionname}{Proposition}
\providecommand{\theoremname}{Theorem}
\begin{document}
\newcommand{\ourtitle}{A Linear-Time Kernel Goodness-of-Fit Test}
\theoremstyle{remark} 
\newtheorem{remark}{Remark} 
\date{}

\title{\ourtitle{}}

\author{Wittawat Jitkrittum \\
\vphantom{{\'E}}Gatsby Unit, UCL\\
{\small \url{wittawatj@gmail.com}} \\  \And 
Wenkai Xu \\
\vphantom{{\'E}}Gatsby Unit, UCL\\
{\small \url{wenkaix@gatsby.ucl.ac.uk}}\\ \And
Zolt{\'a}n Szab{\'o}\thanks{Zolt{\'a}n Szab{\'o}'s ORCID ID: 0000-0001-6183-7603. Arthur Gretton's ORCID ID: 0000-0003-3169-7624.} \\
CMAP, {\'E}cole Polytechnique \\
{\small \url{zoltan.szabo@polytechnique.edu}} \\ \And
Kenji Fukumizu \\
The Institute of Statistical Mathematics\\
\url{fukumizu@ism.ac.jp} \\ \And
Arthur Gretton$^*$ \\
Gatsby Unit, UCL \\
\url{arthur.gretton@gmail.com}}

\maketitle
\vspace{-2mm}
\begin{abstract}
\vspace{-2mm}
We propose a novel adaptive test of goodness-of-fit, with computational cost linear in the number of samples. We learn the test features that best indicate the differences between observed samples and a reference model, by minimizing the false negative rate. These features are constructed via Stein's method, meaning that it is not necessary to compute the normalising constant of the model. We analyse the asymptotic Bahadur efficiency of the new test, and prove that under a mean-shift alternative, our test always has greater relative efficiency than a previous linear-time kernel test, regardless of the choice of parameters for that test. In experiments, the performance of our method exceeds that of the earlier linear-time test, and matches or exceeds the power of a quadratic-time kernel test. In high dimensions and where model structure may be exploited, our goodness of fit test performs far better than a quadratic-time two-sample test based on the Maximum Mean Discrepancy, with samples drawn from the model.
\vspace{-3mm}
\end{abstract}

\section{Introduction}

\vspace{-1mm}The goal of goodness of fit testing is to determine
how well a model density $p(\mathbf{x})$ fits an observed sample
$\mathsf{D}=\{\mathbf{x}_{i}\}_{i=1}^{n}\subset\mathcal{X}\subseteq\mathbb{R}^{d}$
from an unknown distribution $q(\mathbf{x})$. This goal may be achieved
via a hypothesis test, where the null hypothesis $H_{0}\colon p=q$
is tested against $H_{1}\colon p\neq q$. The problem of testing goodness
of fit has a long history in statistics \cite{Massey51}, with a number
of tests proposed for particular parametric models. Such tests can
require space partitioning \cite{GyoMeu90,BeiGyoLug94}, which works
poorly in high dimensions; or closed-form integrals under the model,
which may be difficult to obtain, besides in certain special cases
\cite{BaringhausHenze88,BowFos93,SzeRiz05,Rizzo09}. An alternative
is to conduct a two-sample test using samples drawn from \emph{both}
$p$ and $q$. This approach was taken by \cite{LloGha15}, using
a test based on the (quadratic-time) Maximum Mean Discrepancy \cite{Gretton2012},
however this does not take advantage of the known structure of $p$
(quite apart from the increased computational cost of dealing with
samples from $p$).

More recently, measures of discrepancy with respect to a model have
been proposed based on Stein's method \cite{leyReiSwa2017}. A Stein
operator for $p$ may be applied to a class of test functions, yielding
functions that have zero expectation under $p$. Classes of test functions
can include the $W^{2,\infty}$ Sobolev space \cite{GorMac2015},
and reproducing kernel Hilbert spaces (RKHS) \cite{OatGirCho2016}.
Statistical tests have been proposed by \cite{Chwialkowski2016,LiuLeeJor2016}
based on classes of Stein transformed RKHS functions, where the test
statistic is the norm of the smoothness-constrained function with
largest expectation under $q$ . We will refer to this statistic as
the Kernel Stein Discrepancy (KSD). For consistent tests, it is sufficient
to use $C_{0}$-universal kernels \cite[Definition 4.1]{Carmeli2010},
as shown by \cite[Theorem 2.2]{Chwialkowski2016}, although inverse
multiquadric kernels may be preferred if uniform tightness is required
\cite{GorMac2017}.\footnote{Briefly, \cite{GorMac2017} show that when an exponentiated quadratic
kernel is used, a sequence of sets $\mathsf{D}$ may be constructed
that does not correspond to any $q$, but for which the KSD nonetheless
approaches zero. In a statistical testing setting, however, we assume
identically distributed samples from $q$, and the issue does not
arise.}

The minimum variance unbiased estimate of the KSD is a U-statistic,
with computational cost quadratic in the number $n$ of samples from
$q$. It is desirable to reduce the cost of testing, however, so that
larger sample sizes may be addressed. A first approach is to replace
the U-statistic with a running average with linear cost, as proposed
by \cite{LiuLeeJor2016} for the KSD, but this results in an increase
in variance and corresponding decrease in test power. An alternative
approach is to construct explicit features of the distributions, whose
empirical expectations may be computed in linear time. In the two-sample
and independence settings, these features were initially chosen at
random by \cite{EppSin86,ChwRamSejGre15,ZhaFilGreSej17}. More recently,
features have been constructed explicitly to maximize test power in
the two-sample \cite{Jitkrittum2016a} and independence testing \cite{Jitkrittum2016}
settings, resulting in tests that are not only more interpretable,
but which can yield performance matching quadratic-time tests.

We propose to construct explicit linear-time features for testing
goodness of fit, chosen so as to maximize test power. These features
further reveal where the model and data differ, in a readily interpretable
way. Our first theoretical contribution is a derivation of the null
and alternative distributions for tests based on such features, and
a corresponding power optimization criterion. Note that the goodness-of-fit
test requires somewhat different strategies to those employed for
two-sample and independence testing \cite{Jitkrittum2016a,Jitkrittum2016},
which become computationally prohibitive in high dimensions for the
Stein discrepancy (specifically, the normalization used in prior work
to simplify the asymptotics would incur a cost cubic in the dimension
$d$ and the number of features in the optimization). Details may
be found in Section \ref{sec:main_results}.

Our second theoretical contribution, given in Section \ref{sec:bahadur_efficiency},
is an analysis of the relative Bahadur efficiency of our test vs the
linear time test of \cite{LiuLeeJor2016}: this represents the relative
rate at which the p-value decreases under $H_{1}$ as we observe more
samples. We prove that our test has greater asymptotic Bahadur efficiency
relative to the test of \cite{LiuLeeJor2016}, for Gaussian distributions
under the mean-shift alternative. This is shown to hold regardless
of the bandwidth of the exponentiated quadratic kernel used for the
earlier test. The proof techniques developed are of independent interest,
and we anticipate that they may provide a foundation for the analysis
of relative efficiency of linear-time tests in the two-sample and
independence testing domains. In experiments (Section \ref{sec:experiments}),
our new linear-time test is able to detect subtle local differences
between the density $p(\mathbf{x})$, and the unknown $q(\mathbf{x})$
as observed through samples. We show that our linear-time test constructed
based on optimized features has comparable performance to the quadratic-time
test of \cite{Chwialkowski2016,LiuLeeJor2016}, while uniquely providing
an explicit visual indication of where the model fails to fit the
data.\vspace{-2mm}

\section{Kernel Stein Discrepancy (KSD) Test \label{sec:kstein_test}}

\vspace{-1mm}We begin by introducing the Kernel Stein Discrepancy
(KSD) and associated statistical test, as proposed independently by
\cite{Chwialkowski2016} and \cite{LiuLeeJor2016}. Assume that the
data domain is a connected open set $\mathcal{X}\subseteq\mathbb{R}^{d}$.
Consider a Stein operator $T_{p}$ that takes in a multivariate function
$\mathbf{f}(\mathbf{x})=(f_{1}(\mathbf{x}),\ldots,f_{d}(\mathbf{x}))^{\top}\in\mathbb{R}^{d}$
and constructs a function $\left(T_{p}\mathbf{f}\right)(\mathbf{x})\colon\mathbb{R}^{d}\to\mathbb{R}$.
The constructed function has the key property that for all $\mathbf{f}$
in an appropriate function class, $\mathbb{E}_{\mathbf{x}\sim q}\left[(T_{p}\mathbf{f})(\mathbf{x})\right]=0$
if and only if $q=p$. Thus, one can use this expectation as a statistic
for testing goodness of fit.

The function class $\mathcal{F}^{d}$ for the function\textbf{ $\mathbf{f}$
}is chosen to be a unit-norm ball in a reproducing kernel Hilbert
space (RKHS) in \cite{Chwialkowski2016,LiuLeeJor2016}. More precisely,
let $\mathcal{F}$ be an RKHS associated with a positive definite
kernel $k\colon\mathcal{X}\times\mathcal{X}\to\mathbb{R}$. Let $\phi(\mathbf{x})=k(\mathbf{x},\cdot)$
denote a feature map of $k$ so that $k(\mathbf{x},\mathbf{x}')=\left\langle \phi(\mathbf{x}),\phi(\mathbf{x}')\right\rangle _{\mathcal{F}}$.
Assume that $f_{i}\in\mathcal{F}$ for all $i=1,\ldots,d$ so that
$\mathbf{f}\in\mathcal{F}\times\cdots\times\mathcal{F}:=\mathcal{F}^{d}$
where $\mathcal{F}^{d}$ is equipped with the standard inner product
$\left\langle \mathbf{f},\mathbf{g}\right\rangle _{\mathcal{F}^{d}}:=\sum_{i=1}^{d}\left\langle f_{i},g_{i}\right\rangle _{\mathcal{F}}$.
The kernelized Stein operator $T_{p}$ studied in \cite{Chwialkowski2016}
is $\left(T_{p}\mathbf{f}\right)(\mathbf{x}):=\sum_{i=1}^{d}\left(\frac{\partial\log p(\mathbf{x})}{\partial x_{i}}f_{i}(\mathbf{x})+\frac{\partial f_{i}(\mathbf{x})}{\partial x_{i}}\right)\stackrel{(a)}{=}\left\langle \mathbf{f},\boldsymbol{\xi}_{p}(\mathbf{x},\cdot)\right\rangle _{\mathcal{F}^{d}},$
where at $(a)$ we use the reproducing property of $\mathcal{F}$,
i.e., $f_{i}(\mathbf{x})=\left\langle f_{i},k(\mathbf{x},\cdot)\right\rangle _{\mathcal{F}}$,
and that $\frac{\partial k(\mathbf{x},\cdot)}{\partial x_{i}}\in\mathcal{F}$
\cite[Lemma 4.34]{Steinwart2008}, hence $\boldsymbol{\xi}_{p}(\mathbf{x},\cdot):=\frac{\partial\log p(\mathbf{x})}{\partial\mathbf{x}}k(\mathbf{x},\cdot)+\frac{\partial k(\mathbf{x},\cdot)}{\partial\mathbf{x}}$
is in $\mathcal{F}^{d}$. We note that the Stein operator presented
in \cite{LiuLeeJor2016} is defined such that $\left(T_{p}\mathbf{f}\right)(\mathbf{x})\in\mathbb{R}^{d}$.
This distinction is not crucial and leads to the same goodness-of-fit
test. Under appropriate conditions, e.g. that $\lim_{\|\mathbf{x}\|\to\infty}p(\mathbf{x})f_{i}(\mathbf{x})=0$
for all $i=1,\ldots,d$, it can be shown using integration by parts
that $\mathbb{E}_{\mathbf{x}\sim p}(T_{p}\mathbf{f})(\mathbf{x})=0$
for any $\mathbf{f}\in\mathcal{F}^{d}$ \cite[Lemma 5.1]{Chwialkowski2016}.
Based on the Stein operator, \cite{Chwialkowski2016,LiuLeeJor2016}
define the kernelized Stein discrepancy as 
\begin{equation}
S_{p}(q):=\sup_{\|\mathbf{f}\|_{\mathcal{F}^{d}}\le1}\mathbb{E}_{\mathbf{x}\sim q}\left\langle \mathbf{f},\boldsymbol{\xi}_{p}(\mathbf{x},\cdot)\right\rangle _{\mathcal{F}^{d}}\stackrel{(a)}{=}\sup_{\|\mathbf{f}\|_{\mathcal{F}^{d}}\le1}\left\langle \mathbf{f},\mathbb{E}_{\mathbf{x}\sim q}\boldsymbol{\xi}_{p}(\mathbf{x},\cdot)\right\rangle _{\mathcal{F}^{d}}=\|\mathbf{g}(\cdot)\|_{\mathcal{F}^{d}},\label{eq:ksd_measure}
\end{equation}
where at $(a)$, $\boldsymbol{\xi}_{p}(\mathbf{x},\cdot)$ is Bochner
integrable \cite[Definition A.5.20]{Steinwart2008} as long as $\mathbb{E}_{\mathbf{x}\sim q}\|\boldsymbol{\xi}_{p}(\mathbf{x},\cdot)\|_{\mathcal{F}^{d}}<\infty$,
and $\mathbf{g}(\mathbf{y}):=\mathbb{E}_{\mathbf{x}\sim q}\boldsymbol{\xi}_{p}(\mathbf{x},\mathbf{y})$
is what we refer to as the \emph{Stein witness function}. The Stein
witness function will play a crucial role in our new test statistic
in Section \ref{sec:main_results}. When a $C_{0}$-universal kernel
is used \cite[Definition 4.1]{Carmeli2010}, and as long as $\mathbb{E}_{\mathbf{x}\sim q}\|\nabla_{\mathbf{x}}\log p(\mathbf{x})-\nabla_{\mathbf{x}}\log q(\mathbf{x})\|^{2}<\infty,$
it can be shown that $S_{p}(q)=0$ if and only if $p=q$ \cite[Theorem 2.2]{Chwialkowski2016}. 

 The KSD $S_{p}(q)$ can be written as $S_{p}^{2}(q)=\mathbb{E}_{\mathbf{x}\sim q}\mathbb{E}_{\mathbf{x}'\sim q}h_{p}(\mathbf{x},\mathbf{x}'),$
where $h_{p}(\mathbf{x},\mathbf{y}):=\mathbf{s}_{p}^{\top}(\mathbf{x})\mathbf{s}_{p}(\mathbf{y})k(\mathbf{x},\mathbf{y})+\mathbf{s}_{p}^{\top}(\mathbf{y})\nabla_{\mathbf{x}}k(\mathbf{x},\mathbf{y})+\mathbf{s}_{p}^{\top}(\mathbf{x})\nabla_{\mathbf{y}}k(\mathbf{x},\mathbf{y})+\sum_{i=1}^{d}\frac{\partial^{2}k(\mathbf{x},\mathbf{y})}{\partial x_{i}\partial y_{i}},$
and $\mathbf{s}_{p}(\mathbf{x}):=\nabla_{\mathbf{x}}\log p(\mathbf{x})$
is a column vector. An unbiased empirical estimator of $S_{p}^{2}(q)$,
denoted by $\widehat{S^{2}}=\frac{2}{n(n-1)}\sum_{i<j}h_{p}(\mathbf{x}_{i},\mathbf{x}_{j})$
\cite[Eq.\ 14]{LiuLeeJor2016}, is a degenerate U-statistic under
$H_{0}$. For the goodness-of-fit test, the rejection threshold can
be computed by a bootstrap procedure. All these properties make $\widehat{S^{2}}$
a very flexible criterion to detect the discrepancy of $p$ and $q$:
in particular, it can be computed even if $p$ is known only up to
a normalization constant.  Further studies on nonparametric Stein
operators can be found in \cite{OatGirCho2016,GorMac2015}.

\textbf{Linear-Time Kernel Stein (LKS) Test} Computation of $\widehat{S^{2}}$
costs $\mathcal{O}(n^{2})$. To reduce this cost, a linear-time (i.e.,
$\mathcal{O}(n)$) estimator based on an incomplete U-statistic is
proposed in \cite[Eq.\ 17]{LiuLeeJor2016}, given by $\widehat{S_{l}^{2}}:=\frac{2}{n}\sum_{i=1}^{n/2}h_{p}(\mathbf{x}_{2i-1},\mathbf{x}_{2i}),$
where we assume $n$ is even for simplicity. Empirically \cite{LiuLeeJor2016}
observed that the linear-time estimator performs much worse (in terms
of test power) than the quadratic-time U-statistic estimator, agreeing
with our findings presented in Section \ref{sec:experiments}. 

\section{New Statistic: The Finite Set Stein Discrepancy (FSSD)}

\label{sec:main_results}Although shown to be powerful, the main drawback
of the KSD test is its high computational cost of $\mathcal{O}(n^{2})$.
The LKS test is one order of magnitude faster. Unfortunately, the
decrease in the test power outweighs the computational gain \cite{LiuLeeJor2016}.
We therefore seek a variant of the KSD statistic that can be computed
in linear time, and whose test power is comparable to the KSD test. 

\textbf{Key Idea} The fact that $S_{p}(q)=0$ if and only if $p=q$
implies that $\mathbf{g}(\mathbf{v})=\mathbf{0}$ for all $\mathbf{v}\in\mathcal{X}$
if and only if $p=q$, where $\mathbf{g}$ is the Stein witness function
in (\ref{eq:ksd_measure}). One can see \textbf{$\mathbf{g}$ }as
a function witnessing the differences of $p,q$, in such a way that
$|g_{i}(\mathbf{v})|$ is large when there is a discrepancy in the
region around $\mathbf{v}$, as indicated by the $i^{th}$ output
of $\mathbf{g}$. The test statistic of \cite{LiuLeeJor2016,Chwialkowski2016}
is essentially given by the degree of ``flatness'' of $\mathbf{g}$
as measured by the RKHS norm $\|\cdot\|_{\mathcal{F}^{d}}$. The core
of our proposal is to use a different measure of flatness of $\mathbf{g}$
which can be computed in linear time. 

The idea is to use a real analytic kernel $k$ which makes $g_{1},\ldots,g_{d}$
real analytic. If $g_{i}\neq0$ is an analytic function, then the
Lebesgue measure of the set of roots $\{\mathbf{x}\mid g_{i}(\mathbf{x})=0\}$
is zero \cite{Mityagin2015}. This property suggests that one can
evaluate $g_{i}$ at a finite set of locations $V=\{\mathbf{v}_{1},\ldots,\mathbf{v}_{J}\}$,
drawn from a distribution with a density (w.r.t. the Lebesgue measure).
If $g_{i}\neq0$, then almost surely $g_{i}(\mathbf{v}_{1}),\ldots,g_{i}(\mathbf{v}_{J})$
will not be zero. This idea was successfully exploited in recently
proposed linear-time tests of \cite{ChwRamSejGre15} and \cite{Jitkrittum2016a,Jitkrittum2016}.
Our new test statistic based on this idea is called the Finite Set
Stein Discrepancy (FSSD) and is given in Theorem \ref{thm:fssd}.
All proofs are given in the appendix. 

\begin{restatable}[The Finite Set Stein Discrepancy (FSSD)]{thm}{fssd}

\label{thm:fssd} Let $V=\{\mathbf{v}_{1},\ldots,\mathbf{v}_{J}\}\subset\mathbb{R}^{d}$
be random vectors drawn i.i.d. from a distribution $\eta$ which has
a density. Let $\mathcal{X}$ be a connected open set in $\mathbb{R}^{d}$.
Define $\mathrm{FSSD}_{p}^{2}(q):=\frac{1}{dJ}\sum_{i=1}^{d}\sum_{j=1}^{J}g_{i}^{2}(\mathbf{v}_{j})$.
Assume that \uline{1)} $k\colon\mathcal{X}\times\mathcal{X}\to\mathbb{R}$
is $C_{0}$-universal \cite[Definition 4.1]{Carmeli2010} and real
analytic i.e., for all $\mathbf{v}\in\mathcal{X}$, $f(\mathbf{x}):=k(\mathbf{x},\mathbf{v})$
is a real analytic function on $\mathcal{X}$. \uline{2)} $\mathbb{E}_{\mathbf{x}\sim q}\mathbb{E}_{\mathbf{x}'\sim q}h_{p}(\mathbf{x},\mathbf{x}')<\infty$.
\uline{3)} $\mathbb{E}_{\mathbf{x}\sim q}\|\nabla_{\mathbf{x}}\log p(\mathbf{x})-\nabla_{\mathbf{x}}\log q(\mathbf{x})\|^{2}<\infty$.
\uline{4)} $\lim_{\|\mathbf{x}\|\to\infty}p(\mathbf{x})\mathbf{g}(\mathbf{x})=0$.

Then, for any $J\ge1$, $\eta$-almost surely $\mathrm{FSSD}_{p}^{2}(q)=0$
if and only if $p=q$.

\end{restatable}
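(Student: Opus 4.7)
The statement has two directions, so I would prove each separately.

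The forward direction ($p=q \Rightarrow \mathrm{FSSD}_p^2(q)=0$) is straightforward: under assumption 4 and the integrability implied by assumption 2, the integration-by-parts argument cited from \cite[Lemma 5.1]{Chwialkowski2016} gives $\mathbb{E}_{\mathbf{x}\sim p}(T_p \mathbf{f})(\mathbf{x}) = 0$ for every $\mathbf{f}\in\mathcal{F}^d$. Taking $\mathbf{f} = \boldsymbol{\xi}_p(\cdot,\mathbf{v})e_i$ (or more simply, noting that $\mathbf{g}(\mathbf{v}) = \mathbb{E}_{\mathbf{x}\sim p}\boldsymbol{\xi}_p(\mathbf{x},\mathbf{v})$ evaluated via the reproducing property) yields $\mathbf{g} \equiv 0$, hence $\mathrm{FSSD}_p^2(q) = 0$ regardless of the $\mathbf{v}_j$'s.

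For the reverse direction I would argue by contrapositive: suppose $p \neq q$. Assumptions 1, 2 and 3 place us exactly in the setting of \cite[Theorem 2.2]{Chwialkowski2016}, which gives $S_p(q) > 0$, i.e.\ $\|\mathbf{g}\|_{\mathcal{F}^d} > 0$. Therefore at least one component $g_i\colon\mathcal{X}\to\mathbb{R}$ is not identically zero. The next step is the crux: I want to show each $g_i$ is a real analytic function on $\mathcal{X}$. By definition,
\begin{equation*}
g_i(\mathbf{v}) \;=\; \mathbb{E}_{\mathbf{x}\sim q}\!\left[\tfrac{\partial \log p(\mathbf{x})}{\partial x_i} k(\mathbf{x},\mathbf{v}) + \tfrac{\partial k(\mathbf{x},\mathbf{v})}{\partial x_i}\right],
\end{equation*}
so the integrand is real analytic in $\mathbf{v}$ for each fixed $\mathbf{x}$ by assumption 1 (together with the fact that partial derivatives of a real analytic function are real analytic). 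Pushing analyticity through the expectation is the main technical obstacle: I would do this locally, by extending $\mathbf{v}\mapsto k(\mathbf{x},\mathbf{v})$ to a holomorphic function on a complex neighbourhood of each $\mathbf{v}_0\in\mathcal{X}$ and applying dominated convergence plus Morera's theorem, where the dominating function is controlled using assumptions 2 and 3 (which bound $\mathbb{E}_{q}\|\boldsymbol{\xi}_p(\mathbf{x},\cdot)\|_{\mathcal{F}^d}$ and thus the integrands uniformly on suitable compact sets). I expect this dominated-convergence step to require the most care, since it is where the integrability assumptions are actually used beyond invoking the KSD theorem.

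Once each $g_i$ is real analytic and some $g_{i^*}$ is not identically zero, the result of \cite{Mityagin2015} implies that $Z := \{\mathbf{v}\in\mathcal{X}\mid g_{i^*}(\mathbf{v})=0\}$ has Lebesgue measure zero. Since $\eta$ is absolutely continuous with respect to Lebesgue measure, $\eta(Z)=0$, so $\mathbb{P}_{\mathbf{v}_j\sim\eta}(\mathbf{v}_j\in Z)=0$ for each $j$ and hence, $\eta$-almost surely, $g_{i^*}(\mathbf{v}_j)\neq 0$ for every $j=1,\ldots,J$. Therefore
\begin{equation*}
\mathrm{FSSD}_p^2(q) \;=\; \tfrac{1}{dJ}\sum_{i=1}^{d}\sum_{j=1}^{J} g_i^2(\mathbf{v}_j) \;\ge\; \tfrac{1}{dJ} g_{i^*}^2(\mathbf{v}_1) \;>\; 0
\end{equation*}
$\eta$-almost surely, contradicting $\mathrm{FSSD}_p^2(q)=0$. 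This completes the contrapositive and hence the theorem. The analyticity-under-the-integral step is the only nontrivial point; everything else is a direct assembly of assumptions 1--4 together with the already-established KSD characterisation.
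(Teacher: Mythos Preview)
Your overall architecture matches the paper's proof exactly: reduce to $\mathbf{g}\equiv 0$ via \cite[Theorem 2.2]{Chwialkowski2016}, establish that each $g_i$ is real analytic, then invoke \cite{Mityagin2015} to conclude that a nonzero analytic $g_{i^*}$ cannot vanish on a set of positive Lebesgue (hence $\eta$) measure.

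The only divergence is in how you obtain analyticity of $g_i$. You propose a direct parameter-integral argument: holomorphically extend $\mathbf{v}\mapsto k(\mathbf{x},\mathbf{v})$ locally and push analyticity through the expectation via Morera plus dominated convergence. The paper instead observes that $\mathbf{g}=\mathbb{E}_{\mathbf{x}\sim q}\boldsymbol{\xi}_p(\mathbf{x},\cdot)$ is a Bochner integral in $\mathcal{F}^d$ (this is exactly equation \eqref{eq:ksd_measure}), so each $g_i$ already lives in the RKHS $\mathcal{F}$; it then cites the lemma from \cite{ChwRamSejGre15} that every element of an RKHS with a bounded analytic kernel is real analytic. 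This route is shorter and sidesteps the domination issue you flag: you do not need to produce an integrable envelope uniform over a complex neighbourhood of $\mathbf{v}_0$, because the Bochner integrability $\mathbb{E}_{\mathbf{x}\sim q}\|\boldsymbol{\xi}_p(\mathbf{x},\cdot)\|_{\mathcal{F}^d}<\infty$ (implied by assumption~2) has already placed $g_i$ inside $\mathcal{F}$, and analyticity is then a property of the \emph{space}, not of this particular integral. Your approach is not wrong, but it reproves a special case of that lemma by hand; leaning on the RKHS structure is the cleaner move here.
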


This measure depends on a set of $J$ test locations (or features)
$\{\mathbf{v}_{i}\}_{i=1}^{J}$ used to evaluate the Stein witness
function, where $J$ is fixed and is typically small. A kernel which
is  $C_{0}$-universal and real analytic is the Gaussian kernel $k(\mathbf{x},\mathbf{y})=\exp\left(-\frac{\|\mathbf{x}-\mathbf{y}\|_{2}^{2}}{2\sigma_{k}^{2}}\right)$
(see \cite[Proposition 3]{Jitkrittum2016} for the result on analyticity).
Throughout this work, we will assume all the conditions stated in
Theorem \ref{thm:fssd}, and consider only the Gaussian kernel. Besides
the requirement that the kernel be real and analytic, the remaining
conditions in Theorem \ref{thm:fssd} are the same as given in \cite[Theorem 2.2]{Chwialkowski2016}.
Note that if the FSSD is to be employed in a setting otherwise than
testing, for instance to obtain pseudo-samples converging to  $p$,
then stronger conditions may be needed \cite{GorMac2017}.

\subsection{Goodness-of-Fit Test with the FSSD Statistic}

Given a significance level $\alpha$ for the goodness-of-fit test,
the test can be constructed so that $H_{0}$ is rejected when $n\widehat{\mathrm{FSSD^{2}}}>T_{\alpha}$,
where $T_{\alpha}$ is the rejection threshold (critical value), and
$\widehat{\mathrm{FSSD^{2}}}$ is an empirical estimate of $\mathrm{FSSD}_{p}^{2}(q)$.
The threshold which guarantees that the type-I error (i.e., the probability
of rejecting $H_{0}$ when it is true) is bounded above by $\alpha$
is given by the $(1-\alpha$)-quantile of the null distribution i.e.,
the distribution of $n\widehat{\mathrm{FSSD^{2}}}$ under $H_{0}$.
In the following, we start by giving the expression for $\widehat{\mathrm{FSSD^{2}}}$,
and summarize its asymptotic distributions in Proposition \ref{prop:fssd_asymp_dists}.

Let $\boldsymbol{\Xi}(\mathbf{x})\in\mathbb{R}^{d\times J}$ such
that $[\boldsymbol{\Xi}(\mathbf{x})]_{i,j}=\xi_{p,i}(\mathbf{x},\mathbf{v}_{j})/\sqrt{dJ}$.
Define $\boldsymbol{\tau}(\mathbf{x}):=\mathrm{vec}(\boldsymbol{\Xi}(\mathbf{x}))\in\mathbb{R}^{dJ}$
where $\mathrm{vec}(\mathbf{M})$ concatenates columns of the matrix
$\mathbf{M}$ into a column vector. We note that $\boldsymbol{\tau}(\mathbf{x})$
depends on the test locations $V=\{\mathbf{v}_{j}\}_{j=1}^{J}$. Let
$\Delta(\mathbf{x},\mathbf{y}):=\boldsymbol{\tau}(\mathbf{x})^{\top}\boldsymbol{\tau}(\mathbf{y})=\mathrm{tr}(\mathbf{\Xi}(\mathbf{x})^{\top}\mathbf{\Xi}(\mathbf{y}))$.
Given an i.i.d. sample $\{\mathbf{x}_{i}\}_{i=1}^{n}\sim q$, a consistent,
unbiased estimator of $\mathrm{FSSD}_{p}^{2}(q)$ is {\small{}
\begin{align}
\widehat{\mathrm{FSSD^{2}}} & =\frac{1}{dJ}\sum_{l=1}^{d}\sum_{m=1}^{J}\frac{1}{n(n-1)}\sum_{i=1}^{n}\sum_{j\neq i}\xi_{p,l}(\mathbf{x}_{i},\mathbf{v}_{m})\xi_{p,l}(\mathbf{x}_{j},\mathbf{v}_{m})=\frac{2}{n(n-1)}\sum_{i<j}\Delta(\mathbf{x}_{i},\mathbf{x}_{j}),\label{eq:fssd_ustat}
\end{align}
}which is a one-sample second-order U-statistic with $\Delta$ as
its U-statistic kernel \cite[Section 5.1.1]{Serfling2009}. Being
a U-statistic, its asymptotic distribution can easily be derived.
We use $\stackrel{d}{\to}$ to denote convergence in distribution.
\begin{prop}[Asymptotic distributions of $\widehat{\mathrm{FSSD^{2}}}$]
\label{prop:fssd_asymp_dists} Let $Z_{1},\ldots,Z_{dJ}\stackrel{i.i.d.}{\sim}\mathcal{N}(0,1)$.
Let $\boldsymbol{\mu}:=\mathbb{E}_{\mathbf{x}\sim q}[\boldsymbol{\tau}(\mathbf{x})]$,
$\boldsymbol{\Sigma}_{r}:=\mathrm{cov}_{\mathbf{x}\sim r}[\boldsymbol{\tau}(\mathbf{x})]\in\mathbb{R}^{dJ\times dJ}$
for $r\in\{p,q\}$, and $\{\omega_{i}\}_{i=1}^{dJ}$ be the eigenvalues
of $\boldsymbol{\Sigma}_{p}=\mathbb{E}_{\mathbf{x}\sim p}[\boldsymbol{\tau}(\mathbf{x})\boldsymbol{\tau}^{\top}(\mathbf{x})]$.
Assume that $\mathbb{E}_{\mathbf{x}\sim q}\mathbb{E}_{\mathbf{y}\sim q}\Delta^{2}(\mathbf{x},\mathbf{y})<\infty$.
Then, for any realization of $V=\{\mathbf{v}_{j}\}_{j=1}^{J}$, the
following statements hold.
\begin{enumerate}
\item Under $H_{0}:p=q$, $n\widehat{\mathrm{FSSD^{2}}}\stackrel{d}{\to}\sum_{i=1}^{dJ}(Z_{i}^{2}-1)\omega_{i}$. 
\item Under $H_{1}:p\neq q$, if $\sigma_{H_{1}}^{2}:=4\boldsymbol{\mu}^{\top}\boldsymbol{\Sigma}_{q}\boldsymbol{\mu}>0$,
then $\sqrt{n}(\widehat{\mathrm{FSSD^{2}}}-\mathrm{FSSD^{2}})\stackrel{d}{\to}\mathcal{N}(0,\sigma_{H_{1}}^{2})$.
\end{enumerate}
\end{prop}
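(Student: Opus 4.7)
The plan is to recognize $n\widehat{\mathrm{FSSD^{2}}}$ as a one-sample second-order U-statistic with kernel $\Delta(\mathbf{x},\mathbf{y})=\boldsymbol{\tau}(\mathbf{x})^{\top}\boldsymbol{\tau}(\mathbf{y})$, and to apply the standard asymptotic theory for U-statistics in its degenerate and non-degenerate regimes (see \cite[Ch.~5]{Serfling2009}); the moment hypothesis $\mathbb{E}_{\mathbf{x}\sim q}\mathbb{E}_{\mathbf{y}\sim q}\Delta^{2}(\mathbf{x},\mathbf{y})<\infty$ licenses every limit theorem I need. As a preparatory computation, the first Hoeffding projection is $h_{1}(\mathbf{x}):=\mathbb{E}_{\mathbf{y}\sim q}\Delta(\mathbf{x},\mathbf{y})=\boldsymbol{\tau}(\mathbf{x})^{\top}\boldsymbol{\mu}$, so $\mathrm{Var}_{q}(h_{1})=\boldsymbol{\mu}^{\top}\boldsymbol{\Sigma}_{q}\boldsymbol{\mu}$; this already exposes the constant $\sigma_{H_{1}}^{2}=4\boldsymbol{\mu}^{\top}\boldsymbol{\Sigma}_{q}\boldsymbol{\mu}$.

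Part~(2) is then the easy direction: the hypothesis $\sigma_{H_{1}}^{2}>0$ says precisely $\mathrm{Var}_{q}(h_{1})>0$, so $\Delta$ is non-degenerate under $q$, and the classical CLT for non-degenerate U-statistics delivers $\sqrt{n}(\widehat{\mathrm{FSSD^{2}}}-\mathrm{FSSD}^{2})\stackrel{d}{\to}\mathcal{N}(0,4\mathrm{Var}_{q}(h_{1}))=\mathcal{N}(0,\sigma_{H_{1}}^{2})$. For part~(1) I would first establish that $p=q$ forces $\boldsymbol{\mu}=\mathbf{0}$ and hence $h_{1}\equiv 0$, i.e.\ first-order degeneracy. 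This comes from specializing Stein's identity $\mathbb{E}_{\mathbf{x}\sim p}(T_{p}\mathbf{f})(\mathbf{x})=0$ to the test functions $\mathbf{f}(\mathbf{x})=k(\mathbf{x},\mathbf{v}_{j})\mathbf{e}_{i}$ for each $(i,j)$, which yields $\mathbb{E}_{p}\xi_{p,i}(\mathbf{x},\mathbf{v}_{j})=0$ entrywise; the decay hypothesis~4) of Theorem~\ref{thm:fssd}, combined with the rapid tails of the Gaussian kernel, supplies the boundary terms required for the integration by parts. Next, because $\Delta$ is bilinear in the $dJ$-dimensional map $\boldsymbol{\tau}$, the integral operator $f\mapsto\mathbb{E}_{\mathbf{y}\sim p}\Delta(\cdot,\mathbf{y})f(\mathbf{y})$ has finite rank at most $dJ$ and admits an explicit Mercer expansion: diagonalize $\boldsymbol{\Sigma}_{p}=U\,\mathrm{diag}(\omega_{1},\ldots,\omega_{dJ})\,U^{\top}$ with $U$ orthogonal, set $\tilde{\boldsymbol{\tau}}(\mathbf{x}):=U^{\top}\boldsymbol{\tau}(\mathbf{x})$ so that $\mathbb{E}_{p}[\tilde{\boldsymbol{\tau}}\tilde{\boldsymbol{\tau}}^{\top}]=\mathrm{diag}(\omega_{i})$ and $\Delta(\mathbf{x},\mathbf{y})=\sum_{i=1}^{dJ}\tilde{\tau}_{i}(\mathbf{x})\tilde{\tau}_{i}(\mathbf{y})$; the orthonormal eigenfunctions of the operator are $e_{i}=\tilde{\tau}_{i}/\sqrt{\omega_{i}}$ for $\omega_{i}>0$ (vanishing $\omega_{i}$ contribute nothing to the limit). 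Feeding this finite spectral decomposition into the standard limit theorem for degenerate second-order U-statistics produces $n\widehat{\mathrm{FSSD^{2}}}\stackrel{d}{\to}\sum_{i=1}^{dJ}\omega_{i}(Z_{i}^{2}-1)$ with $Z_{i}\stackrel{i.i.d.}{\sim}\mathcal{N}(0,1)$, exactly as claimed.

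The main technical hurdle sits inside part~(1): the Stein-identity argument pinning down $\boldsymbol{\mu}=\mathbf{0}$ under $H_{0}$ and, more importantly, the clean identification of the limiting eigenvalues $\{\omega_{i}\}$ with the spectrum of the finite-dimensional matrix $\boldsymbol{\Sigma}_{p}$ via the finite-rank bilinear structure of $\Delta$. Once these two observations are in hand, both assertions reduce to textbook U-statistic CLTs; the fact that the Mercer expansion is a \emph{finite} sum (in contrast with the general KSD case) is precisely what makes the null distribution so transparent.
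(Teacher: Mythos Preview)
Your proposal is correct and follows essentially the same route as the paper: both recognize $\widehat{\mathrm{FSSD^{2}}}$ as a second-order U-statistic with kernel $\Delta$ and invoke the standard degenerate/non-degenerate limit theorems from \cite[Sections~5.5.1--5.5.2]{Serfling2009}. The paper's proof is a one-line citation to Serfling, whereas you have spelled out the Hoeffding projection, the Stein-identity verification that $\boldsymbol{\mu}=\mathbf{0}$ under $H_{0}$, and the finite-rank Mercer decomposition identifying the limiting eigenvalues with those of $\boldsymbol{\Sigma}_{p}$---these are exactly the ingredients one unpacks from the cited reference.
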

\begin{proof}
Recognizing that (\ref{eq:fssd_ustat}) is a degenerate U-statistic,
the results follow directly from \cite[Section 5.5.1, 5.5.2]{Serfling2009}.
\end{proof}
Claims 1 and 2 of Proposition \ref{prop:fssd_asymp_dists} imply that
under $H_{1}$, the test power (i.e., the probability of correctly
rejecting $H_{1}$) goes to 1 asymptotically, if the threshold $T_{\alpha}$
is defined as above. In practice, simulating from the asymptotic null
distribution in Claim 1 can be challenging, since the plug-in estimator
of $\boldsymbol{\Sigma}_{p}$ requires a sample from $p$, which is
not available. A straightforward solution is to draw sample from $p$,
either by assuming that $p$ can be sampled easily or by using a Markov
chain Monte Carlo (MCMC) method, although this adds an additional
computational burden to the test procedure. A more subtle issue is
that when dependent samples from $p$ are used in obtaining the test
threshold, the test may become more conservative than required for
i.i.d. data \cite{ChwSejGre14}. An alternative approach is to use
the plug-in estimate $\hat{\boldsymbol{\Sigma}}_{q}$ instead of $\boldsymbol{\Sigma}_{p}$.
The covariance matrix $\hat{\boldsymbol{\Sigma}}_{q}$ can be directly
computed from the data. This is the approach we take. Theorem \ref{thm:sigma_q_consistent}
guarantees that the replacement of the covariance in the computation
of the asymptotic null distribution still yields a consistent test.
We write $\mathbb{P}_{H_{1}}$ for the distribution of $n\widehat{\mathrm{FSSD^{2}}}$
under $H_{1}$.

\begin{restatable}[]{thm}{sigmaqconsistent}

\label{thm:sigma_q_consistent} Let $\hat{\boldsymbol{\Sigma}}_{q}:=\frac{1}{n}\sum_{i=1}^{n}\boldsymbol{\tau}(\mathbf{x}_{i})\boldsymbol{\tau}^{\top}(\mathbf{x}_{i})-[\frac{1}{n}\sum_{i=1}^{n}\boldsymbol{\tau}(\mathbf{x}_{i})][\frac{1}{n}\sum_{j=1}^{n}\boldsymbol{\tau}(\mathbf{x}_{j})]^{\top}$
with $\{\mathbf{x}_{i}\}_{i=1}^{n}\sim q$. Suppose that the test
threshold $T_{\alpha}$ is set to the $(1-\alpha)$-quantile of the
distribution of $\sum_{i=1}^{dJ}(Z_{i}^{2}-1)\hat{\nu_{i}}$ where
$\{Z_{i}\}_{i=1}^{dJ}\stackrel{i.i.d.}{\sim}\mathcal{N}(0,1)$, and
$\hat{\nu}_{1},\ldots,\hat{\nu}_{dJ}$ are eigenvalues of $\hat{\boldsymbol{\Sigma}}_{q}$.
Then, under $H_{0}$, asymptotically the false positive rate is $\alpha$.
Under $H_{1}$, for $\{\mathbf{v}_{j}\}_{j=1}^{J}$ drawn from a distribution
with a density, the test power $\mathbb{P}_{H_{1}}(n\widehat{\mathrm{FSSD^{2}}}>T_{\alpha})\to1$
as $n\to\infty$.

\end{restatable}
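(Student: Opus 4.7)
The overall plan is to show that plugging $\hat{\boldsymbol{\Sigma}}_q$ in place of $\boldsymbol{\Sigma}_p$ preserves both the asymptotic calibration of the test and its consistency. The key observation is that $\hat{\boldsymbol{\Sigma}}_q$ estimates $\boldsymbol{\Sigma}_q$ regardless of whether $H_0$ or $H_1$ holds, and under $H_0$ we have $\boldsymbol{\Sigma}_p=\boldsymbol{\Sigma}_q$, while under $H_1$ the statistic diverges so the precise finite threshold is immaterial. The argument decomposes into (i) a consistency step for the plug-in covariance, (ii) a quantile-convergence step for the simulated reference distribution, and (iii) Slutsky-type arguments under each hypothesis.

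First I would establish $\hat{\boldsymbol{\Sigma}}_q \stackrel{P}{\to} \boldsymbol{\Sigma}_q$ entrywise. Since $\Delta(\mathbf{x},\mathbf{x})=\|\boldsymbol{\tau}(\mathbf{x})\|^{2}$ and the assumption $\mathbb{E}_{\mathbf{x}\sim q}\mathbb{E}_{\mathbf{y}\sim q}\Delta^{2}(\mathbf{x},\mathbf{y})<\infty$ of Proposition \ref{prop:fssd_asymp_dists} implies $\mathbb{E}_{\mathbf{x}\sim q}\|\boldsymbol{\tau}(\mathbf{x})\|^{2}<\infty$, the WLLN applies to each entry of the sample mean and the sample outer product. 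Weyl's perturbation inequality then yields $\hat{\nu}_i \stackrel{P}{\to} \nu_i$ for the sorted eigenvalues $\nu_1,\ldots,\nu_{dJ}$ of $\boldsymbol{\Sigma}_q$. Now let $\hat{U}_n := \sum_{i=1}^{dJ}(Z_i^{2}-1)\hat{\nu}_i$ and $U_q := \sum_{i=1}^{dJ}(Z_i^{2}-1)\nu_i$, with the $Z_i$ drawn independently of the sample. The continuous mapping theorem gives $\hat{U}_n \stackrel{d}{\to} U_q$. Since $U_q$ is an affine combination of independent $\chi_1^{2}$ variates, it admits a continuous and strictly increasing CDF in a neighborhood of its $(1-\alpha)$-quantile $q_{1-\alpha}$, so the data-dependent threshold $T_\alpha$ (the conditional $(1-\alpha)$-quantile of $\hat{U}_n$) satisfies $T_\alpha \stackrel{P}{\to} q_{1-\alpha}$.

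Under $H_0$ we have $\nu_i=\omega_i$, so $U_q$ is precisely the asymptotic null limit of Proposition \ref{prop:fssd_asymp_dists}(1); combining $n\widehat{\mathrm{FSSD^{2}}} \stackrel{d}{\to} U_q$ with $T_\alpha \stackrel{P}{\to} q_{1-\alpha}$ via Slutsky gives $\mathbb{P}_{H_0}(n\widehat{\mathrm{FSSD^{2}}} > T_\alpha) \to \mathbb{P}(U_q > q_{1-\alpha}) = \alpha$. Under $H_1$, Theorem \ref{thm:fssd} ensures $\mathrm{FSSD}_p^{2}(q)>0$ for $\eta$-almost every $V$, so Proposition \ref{prop:fssd_asymp_dists}(2) yields $\widehat{\mathrm{FSSD^{2}}} \stackrel{P}{\to} \mathrm{FSSD}_p^{2}(q)>0$ and hence $n\widehat{\mathrm{FSSD^{2}}} \to \infty$ in probability, while $T_\alpha$ still converges to a finite constant (the eigenvalues $\nu_i$ of $\boldsymbol{\Sigma}_q$ are finite, even though under $H_1$ they do not coincide with the $\omega_i$ of $\boldsymbol{\Sigma}_p$). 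Consistency of the test follows.

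The step I expect to be the main technical obstacle is the quantile-convergence claim $T_\alpha \stackrel{P}{\to} q_{1-\alpha}$, because the law of $\hat{U}_n$ is itself random through the $\hat{\nu}_i$. Justifying this rigorously requires the limiting CDF of $U_q$ to be strictly increasing in a neighborhood of $q_{1-\alpha}$, which holds generically due to the $\chi_1^{2}$ structure but demands care in edge cases where some $\nu_i$ vanish or coincide. Everything else is a routine application of the WLLN, Weyl's inequality, and Slutsky's theorem.
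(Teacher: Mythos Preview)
Your proposal is correct and follows essentially the same route as the paper: covariance consistency via the WLLN, eigenvalue convergence via Weyl's perturbation theorem, quantile convergence (the paper invokes \cite[Lemma 21.2]{Vaa2000} for this step), and then Slutsky-type arguments under each hypothesis. If anything, you are more explicit than the paper about the moment condition justifying the WLLN and about the continuity/strict-monotonicity of the limiting CDF needed for quantile convergence; the paper's proof glosses over both.
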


\begin{remark}

The proof of Theorem \ref{thm:sigma_q_consistent} relies on two facts.
First, under $H_{0}$, $\hat{\boldsymbol{\Sigma}}_{q}=\hat{\boldsymbol{\Sigma}}_{p}$
i.e., the plug-in estimate of $\boldsymbol{\Sigma}_{p}$. Thus, under
$H_{0}$, the null distribution approximated with $\hat{\boldsymbol{\Sigma}}_{q}$
is asymptotically correct, following the convergence of $\hat{\boldsymbol{\Sigma}}_{p}$
to $\boldsymbol{\Sigma}_{p}$. Second, the rejection threshold obtained
from the approximated null distribution is asymptotically constant.
Hence, under $H_{1}$, claim 2 of Proposition \ref{prop:fssd_asymp_dists}
implies that $n\widehat{\mathrm{FSSD^{2}}}\stackrel{d}{\to}\infty$
as $n\to\infty$, and consequently $\mathbb{P}_{H_{1}}(n\widehat{\mathrm{FSSD^{2}}}>T_{\alpha})\to1$.

\end{remark}

\subsection{Optimizing the Test Parameters}

\label{subsec:param_optimization}Theorem \ref{thm:fssd} guarantees
that the population quantity $\mathrm{FSSD^{2}}=0$ if and only if
$p=q$ for any choice of $\{\mathbf{v}_{i}\}_{i=1}^{J}$ drawn from
a distribution with a density. In practice, we are forced to rely
on the empirical $\widehat{\mathrm{FSSD^{2}}}$, and some test locations
will give a higher detection rate (i.e., test power) than others for
finite $n$. Following the approaches of \cite{Gretton2012a,Jitkrittum2016,Jitkrittum2016a,Sutherland2016},
we choose the test locations $V=\left\{ \mathbf{v}_{j}\right\} _{j=1}^{J}$
and kernel bandwidth $\sigma_{k}^{2}$ so as to maximize the test
power i.e., the probability of rejecting $H_{0}$ when it is false.
We first give an approximate expression for the test power when $n$
is large. 
\begin{prop}[Approximate test power of $n\widehat{\mathrm{FSSD^{2}}}$]
\label{prop:fssd_power} Under $H_{1}$, for large $n$ and fixed
$r$, the test power $\mathbb{P}_{H_{1}}(n\widehat{\mathrm{FSSD^{2}}}>r)\approx1-\Phi\left(\frac{r}{\sqrt{n}\sigma_{H_{1}}}-\sqrt{n}\frac{\mathrm{FSSD^{2}}}{\sigma_{H_{1}}}\right)$,
where $\Phi$ denotes the cumulative distribution function of the
standard normal distribution, and $\sigma_{H_{1}}$ is defined in
Proposition \ref{prop:fssd_asymp_dists}.
\end{prop}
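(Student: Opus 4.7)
The plan is to derive the approximation directly from claim 2 of Proposition \ref{prop:fssd_asymp_dists}, which asserts $\sqrt{n}(\widehat{\mathrm{FSSD^2}}-\mathrm{FSSD^2})\xrightarrow{d}\mathcal{N}(0,\sigma_{H_1}^2)$ under $H_1$ whenever $\sigma_{H_1}^2>0$. This gives, for large $n$, the informal approximation $\widehat{\mathrm{FSSD^2}}\stackrel{\mathrm{approx}}{\sim}\mathcal{N}\!\left(\mathrm{FSSD^2},\,\sigma_{H_1}^2/n\right)$, and the proposition is just the standardized tail probability of this Gaussian.

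Concretely, I would first rewrite the event as $\{n\widehat{\mathrm{FSSD^2}}>r\}=\{\widehat{\mathrm{FSSD^2}}>r/n\}$, then subtract $\mathrm{FSSD^2}$ and divide by $\sigma_{H_1}/\sqrt{n}$ on both sides, obtaining
\[
\mathbb{P}_{H_1}(n\widehat{\mathrm{FSSD^2}}>r)=\mathbb{P}_{H_1}\!\left(\frac{\sqrt{n}(\widehat{\mathrm{FSSD^2}}-\mathrm{FSSD^2})}{\sigma_{H_1}}>\frac{r}{\sqrt{n}\,\sigma_{H_1}}-\sqrt{n}\,\frac{\mathrm{FSSD^2}}{\sigma_{H_1}}\right).
\]
By claim 2 of Proposition \ref{prop:fssd_asymp_dists}, the left-hand quantity inside the probability converges in distribution to a standard normal. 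Replacing its distribution by the limiting $\mathcal{N}(0,1)$ and using $\mathbb{P}(Z>t)=1-\Phi(t)$ with $Z\sim\mathcal{N}(0,1)$ yields exactly the claimed expression.

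There is essentially no serious obstacle here: the statement is called "approximate" precisely because we are substituting the asymptotic law of the standardized statistic for its finite-sample law, which is justified by Proposition \ref{prop:fssd_asymp_dists}. The only mild subtlety worth noting is that the threshold argument $r/(\sqrt{n}\sigma_{H_1})-\sqrt{n}\,\mathrm{FSSD^2}/\sigma_{H_1}$ is not held fixed as $n\to\infty$ (it tends to $-\infty$ under $H_1$), so the statement is not a direct pointwise convergence result; however, the approximation is valid in the sense of replacing the finite-$n$ cdf by its Gaussian limit, which is the standard power-analysis heuristic used in \cite{Jitkrittum2016a,Sutherland2016} and suffices for the subsequent optimization of test parameters.
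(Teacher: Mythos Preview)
Your proposal is correct and follows essentially the same approach as the paper: rewrite the event $\{n\widehat{\mathrm{FSSD^{2}}}>r\}$ as $\{\widehat{\mathrm{FSSD^{2}}}>r/n\}$, standardize using $\sigma_{H_1}/\sqrt{n}$, and invoke the asymptotic normality in claim~2 of Proposition~\ref{prop:fssd_asymp_dists}. Your additional remark about the threshold drifting to $-\infty$ is a useful caveat that the paper does not spell out, but it does not change the argument.
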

\begin{proof}
$\mathbb{P}_{H_{1}}(n\widehat{\mathrm{FSSD^{2}}}>r)=\mathbb{P}_{H_{1}}(\widehat{\mathrm{FSSD^{2}}}>r/n)=\mathbb{P}_{H_{1}}\left(\sqrt{n}\frac{\widehat{\mathrm{FSSD^{2}}}-\mathrm{FSSD^{2}}}{\sigma_{H_{1}}}>\sqrt{n}\frac{r/n-\mathrm{FSSD^{2}}}{\sigma_{H_{1}}}\right)$.
For sufficiently large $n$, the alternative distribution is approximately
normal as given in Proposition \ref{prop:fssd_asymp_dists}. It follows
that $\mathbb{P}_{H_{1}}(n\widehat{\mathrm{FSSD^{2}}}>r)\approx1-\Phi\left(\frac{r}{\sqrt{n}\sigma_{H_{1}}}-\sqrt{n}\frac{\mathrm{FSSD^{2}}}{\sigma_{H_{1}}}\right)$. 
\end{proof}
Let $\boldsymbol{\zeta}:=\{V,\sigma_{k}^{2}\}$ be the collection
of all tuning parameters. Assume that $n$ is sufficiently large.
Following the same argument as in \cite{Sutherland2016}, in $\frac{r}{\sqrt{n}\sigma_{H_{1}}}-\sqrt{n}\frac{\mathrm{FSSD^{2}}}{\sigma_{H_{1}}}$,
we observe that the first term $\frac{r}{\sqrt{n}\sigma_{H_{1}}}=\mathcal{O}(n^{-1/2})$
going to 0 as $n\to\infty$, while the second term $\sqrt{n}\frac{\mathrm{FSSD^{2}}}{\sigma_{H_{1}}}=\mathcal{O}(n^{1/2})$,
dominating the first for large $n$. Thus, the best parameters that
maximize the test power are given by $\boldsymbol{\zeta}^{*}=\arg\max_{\boldsymbol{\zeta}}\mathbb{P}_{H_{1}}(n\widehat{\mathrm{FSSD^{2}}}>T_{\alpha})\approx\arg\max_{\boldsymbol{\zeta}}\frac{\mathrm{FSSD^{2}}}{\sigma_{H_{1}}}.$
Since $\mathrm{FSSD^{2}}$ and $\sigma_{H_{1}}$ are unknown, we divide
the sample $\{\mathbf{x}_{i}\}_{i=1}^{n}$ into two disjoint training
and test sets, and use the training set to compute $\frac{\widehat{\mathrm{FSSD^{2}}}}{\hat{\sigma}_{H_{1}}+\gamma}$,
where a small regularization parameter $\gamma>0$ is added for numerical
stability. The goodness-of-fit test is performed on the test set
to avoid overfitting. The idea of splitting the data into training
and test sets to learn good features for hypothesis testing was successfully
used in \cite{Sutherland2016,Jitkrittum2016,Jitkrittum2016a,Gretton2012a}. 

To find a local maximum of $\frac{\widehat{\mathrm{FSSD^{2}}}}{\hat{\sigma}_{H_{1}}+\gamma}$,
we use gradient ascent for its simplicity. The initial points of $\{\mathbf{v}_{i}\}_{i=1}^{J}$
are set to random draws from a normal distribution fitted to the training
data, a heuristic we found to perform well in practice. The objective
is non-convex in general, reflecting many possible ways to capture
the differences of $p$ and $q$. The regularization parameter $\gamma$
is not tuned, and is fixed to a small constant. Assume that $\nabla_{\mathbf{x}}\log p(\mathbf{x})$
costs $\mathcal{O}(d^{2})$ to evaluate. Computing $\nabla_{\boldsymbol{\zeta}}\frac{\widehat{\mathrm{FSSD^{2}}}}{\hat{\sigma}_{H_{1}}+\gamma}$
costs $\mathcal{O}(d^{2}J^{2}n)$. The computational complexity of
$n\widehat{\mathrm{FSSD^{2}}}$ and $\hat{\sigma}_{H_{1}}^{2}$ is
$\mathcal{O}(d^{2}Jn)$. Thus, finding a local optimum via gradient
ascent is still linear-time, for a fixed maximum number of iterations.
Computing $\hat{\boldsymbol{\Sigma}}_{q}$ costs $\mathcal{O}(d^{2}J^{2}n)$,
and obtaining all the eigenvalues of $\hat{\boldsymbol{\Sigma}}_{q}$
costs $\mathcal{O}(d^{3}J^{3})$ (required only once). If the eigenvalues
decay to zero sufficiently rapidly, one can approximate the asymptotic
null distribution with only a few eigenvalues. The cost to obtain
the largest few eigenvalues alone can be much smaller. 

\begin{remark}

Let $\hat{\boldsymbol{\mu}}:=\frac{1}{n}\sum_{i=1}^{n}\boldsymbol{\tau}(\mathbf{x}_{i})$.
It is possible to normalize the FSSD statistic to get a new statistic
$\hat{\lambda}_{n}:=n\hat{\boldsymbol{\mu}}^{\top}(\hat{\boldsymbol{\Sigma}}_{q}+\gamma\mathbf{I})^{-1}\hat{\boldsymbol{\mu}}$
where $\gamma\ge0$ is a regularization parameter that goes to 0 as
$n\to\infty$. This was done in the case of the ME (mean embeddings)
statistic of \cite{ChwRamSejGre15,Jitkrittum2016a}. The asymptotic
null distribution of this statistic takes the convenient form of $\chi^{2}(dJ)$
(independent of $p$ and $q$), eliminating the need to obtain the
eigenvalues of $\hat{\boldsymbol{\Sigma}}_{q}$. It turns out that
the test power criterion for tuning the parameters in this case is
the statistic $\hat{\lambda}_{n}$ itself. However, the optimization
is computationally expensive as $(\hat{\boldsymbol{\Sigma}}_{q}+\gamma\mathbf{I})^{-1}$
(costing $\mathcal{O}(d^{3}J^{3})$) needs to be reevaluated in each
gradient ascent iteration. This is not needed in our proposed FSSD
statistic.

\end{remark}

\section{Relative Efficiency and Bahadur Slope}

\label{sec:bahadur_efficiency}Both the linear-time kernel Stein (LKS)
and FSSD tests have the same computational cost of $\mathcal{O}(d^{2}n)$,
and are consistent, achieving maximum power of 1 as $n\to\infty$
under $H_{1}$. It is thus of theoretical interest to understand which
test is more sensitive in detecting the differences of $p$ and $q$.
This can be quantified by the \emph{Bahadur slope} of the test \cite{Bah1960}.
Two given tests can then be compared by computing the \emph{Bahadur
efficiency} (Theorem \ref{thm:effgaussmean}) which is given by the
ratio of the slopes of the two tests. We note that the constructions
and techniques in this section may be of independent interest, and
can be generalised to other statistical testing settings. 

We start by introducing the concept of Bahadur slope for a general
test, following the presentation of \cite{Gle1964,Gle1966}. Consider
a hypothesis testing problem on a parameter $\theta$. The test proposes
a null hypothesis $H_{0}:\theta\in\Theta_{0}$ against the alternative
hypothesis $H_{1}:\theta\in\Theta\backslash\Theta_{0}$, where $\Theta,\Theta_{0}$
are arbitrary sets.  Let $T_{n}$ be a test statistic computed from
a sample of size $n$, such that large values of $T_{n}$ provide
an evidence to reject $H_{0}$. We use $\plim$ to denote convergence
in probability, and write $\mathbb{E}_{r}$ for $\mathbb{E}_{\mathbf{x}\sim r}\mathbb{E}_{\mathbf{x}'\sim r}$. 

\textbf{Approximate Bahadur Slope (ABS)} For $\theta_{0}\in\Theta_{0}$,
let the asymptotic null distribution of $T_{n}$ be $F(t)=\lim_{n\to\infty}P_{\theta_{0}}(T_{n}<t)$,
where we assume that the CDF ($F$) is continuous and common to all
$\theta_{0}\in\Theta_{0}$. The continuity of $F$ will be important
later when Theorem \ref{thm:gleser1964_slope} and \ref{thm:gleser1964_efficiency}
are used to compute the slopes of LKS and FSSD tests. Assume that
there exists a continuous strictly increasing function $\rho:(0,\infty)\to(0,\infty)$
such that $\lim_{n\to\infty}\rho(n)=\infty$, and that $-2\plim_{n\to\infty}\frac{\log(1-F(T_{n}))}{\rho(n)}=c(\theta)$
where $T_{n}\sim P_{\theta}$, for some function $c$ such that $0<c(\theta_{A})<\infty$
for $\theta_{A}\in\Theta\backslash\Theta_{0}$, and $c(\theta_{0})=0$
when $\theta_{0}\in\Theta_{0}$. The function $c(\theta)$ is known
as the \emph{approximate Bahadur slope} (ABS) of the sequence $T_{n}$.
The quantifier ``approximate'' comes from the use of the asymptotic
null distribution instead of the exact one \cite{Bah1960}. Intuitively
the slope $c(\theta_{A})$, for $\theta_{A}\in\Theta\backslash\Theta_{0}$,
is the rate of convergence of p-values (i.e., $1-F(T_{n})$) to 0,
as $n$ increases. The higher the slope, the faster the p-value vanishes,
and thus the lower the sample size required to reject $H_{0}$ under
$\theta_{A}$.

\textbf{Approximate Bahadur Efficiency} Given two sequences of test
statistics, $T_{n}^{(1)}$ and $T_{n}^{(2)}$ having the same $\rho(n)$
(see Theorem \ref{thm:gleser1964_efficiency}), the approximate Bahadur
efficiency of $T_{n}^{(1)}$ relative to $T_{n}^{(2)}$ is defined
as $E(\theta_{A}):=c^{(1)}(\theta_{A})/c^{(2)}(\theta_{A})$ for $\theta_{A}\in\Theta\backslash\Theta_{0}$.
If $E(\theta_{A})>1$, then $T_{n}^{(1)}$ is asymptotically more
efficient than $T_{n}^{(2)}$ in the sense of Bahadur, for the particular
problem specified by $\theta_{A}\in\Theta\backslash\Theta_{0}$. We
now give approximate Bahadur slopes for two sequences of linear time
test statistics: the proposed $n\widehat{\mathrm{FSSD^{2}}}$, and
the LKS test statistic $\sqrt{n}\widehat{S_{l}^{2}}$ discussed in
Section \ref{sec:kstein_test}. 

\begin{restatable}[]{thm}{fssdslope}

\label{thm:fssd_slope}The approximate Bahadur slope of $n\widehat{\mathrm{FSSD^{2}}}$
is $c^{(\mathrm{FSSD})}:=\mathrm{FSSD^{2}}/\omega_{1}$, where $\omega_{1}$
is the maximum eigenvalue of $\boldsymbol{\Sigma}_{p}:=\mathbb{E}_{\mathbf{x}\sim p}[\boldsymbol{\tau}(\mathbf{x})\boldsymbol{\tau}^{\top}(\mathbf{x})]$
and $\rho(n)=n$. 

\end{restatable}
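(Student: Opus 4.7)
The plan is to compute $c^{(\mathrm{FSSD})}$ directly from the definition of approximate Bahadur slope, using the asymptotic null distribution identified in Proposition~\ref{prop:fssd_asymp_dists}. With $\rho(n)=n$ and $T_n := n\widehat{\mathrm{FSSD}^2}$, I need to show that
\[
-2\,\plim_{n\to\infty}\frac{\log\bigl(1-F(T_n)\bigr)}{n} \;=\; \frac{\mathrm{FSSD}^2}{\omega_1},
\]
where $F$ is the CDF of $W := \sum_{i=1}^{dJ}(Z_i^2-1)\omega_i$ and $Z_i\stackrel{i.i.d.}{\sim}\mathcal{N}(0,1)$.

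First, under $H_1$, I would invoke consistency of the U-statistic: since $\widehat{\mathrm{FSSD}^2}$ is a one-sample U-statistic with kernel $\Delta$ and $\mathbb{E}_q\mathbb{E}_q\Delta^2(\mathbf{x},\mathbf{y})<\infty$ (a hypothesis of Proposition~\ref{prop:fssd_asymp_dists}), the strong law for U-statistics gives $\widehat{\mathrm{FSSD}^2}\xrightarrow{p}\mathrm{FSSD}^2>0$. Hence $T_n/n\xrightarrow{p}\mathrm{FSSD}^2$ and in particular $T_n\xrightarrow{p}\infty$.

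Next, the main technical step is to pin down the exponential tail rate
\[
\lim_{t\to\infty}\frac{-\log\bigl(1-F(t)\bigr)}{t} \;=\; \frac{1}{2\omega_1}.
\]
The upper bound comes from a Chernoff argument: the moment generating function of $W$ is $M_W(s)=\prod_{i=1}^{dJ}(1-2s\omega_i)^{-1/2}e^{-s\omega_i}$, finite for $0<s<1/(2\omega_1)$, and Markov's inequality yields $1-F(t)\le e^{-st}M_W(s)$; optimizing $s\uparrow 1/(2\omega_1)$ gives a rate of $1/(2\omega_1)$. For the matching lower bound, I would single out the largest-eigenvalue coordinate: writing $W\ge \omega_1(Z_1^2-1)-\sum_{i\ge 2}\omega_i$ a.s., one gets $1-F(t)\ge \Pr\!\bigl(Z_1^2> (t+\sum_i\omega_i)/\omega_1\bigr)$, and the standard Mills' ratio bound for the Gaussian tail supplies the same rate $1/(2\omega_1)$. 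Together these give $-\log(1-F(t))=\frac{t}{2\omega_1}(1+o(1))$ as $t\to\infty$.

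Finally I would combine the two ingredients. Since $T_n\xrightarrow{p}\infty$, the tail estimate applied along the (random) sequence $T_n$ gives $-2\log(1-F(T_n))/T_n \xrightarrow{p} 1/\omega_1$, and multiplying by $T_n/n\xrightarrow{p}\mathrm{FSSD}^2$ yields
\[
-2\,\frac{\log(1-F(T_n))}{n} \;=\; \frac{T_n}{n}\cdot\frac{-2\log(1-F(T_n))}{T_n} \;\xrightarrow{p}\; \frac{\mathrm{FSSD}^2}{\omega_1},
\]
which is the claimed slope with $\rho(n)=n$. The main obstacle is the tail-rate lemma for the shifted weighted chi-squared $W$; this is a classical fact but needs both a Chernoff upper bound and a direct lower bound isolating the $\omega_1$ term, and continuity of $F$ (also classical) to ensure $1-F(T_n)$ is well-defined in the $\plim$. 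Everything else is an application of U-statistic consistency and arithmetic.
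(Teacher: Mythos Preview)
Your argument is correct. The tail computation is sound: the Chernoff bound with $s\uparrow 1/(2\omega_1)$ gives the upper rate, and the inequality $W\ge \omega_1(Z_1^2-1)-\sum_{i\ge 2}\omega_i$ (valid since $Z_i^2\ge 0$) together with the Gaussian tail gives the matching lower rate, so $-2\log(1-F(t))=t/\omega_1\,(1+o(1))$. Combining with $T_n/n\xrightarrow{p}\mathrm{FSSD}^2$ and $T_n\xrightarrow{p}\infty$ yields the slope via the factorisation you wrote.

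The paper's proof reaches the same destination by a different packaging. Rather than computing the tail of $W$ by hand, it invokes Gleser's framework (Theorem~\ref{thm:gleser1964_slope}): it observes that the CDF of $\chi^2_f$ lies in the class $\mathcal{D}(1,1)$ (a result of Bahadur), then applies the closure rules for that class (Theorem~\ref{thm:cdf_in_class}, claims~5 and~2) to conclude that the CDF of $\sum_i\omega_i Z_i^2-\sum_i\omega_i$ lies in $\mathcal{D}(1/\omega_1,1)$. Membership in $\mathcal{D}(1/\omega_1,1)$ is precisely your tail statement, so the content is identical; the paper just cites it rather than deriving it. Your approach has the advantage of being self-contained and of making transparent why only the largest eigenvalue $\omega_1$ governs the rate. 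The paper's approach is shorter once the Gleser machinery is in place, and that machinery is reused verbatim for the LKS slope in Theorem~\ref{thm:lks_slope}, so there is an economy-of-scale argument for the abstraction.
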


\begin{restatable}[]{thm}{lksslope}

\label{thm:lks_slope} The approximate Bahadur slope of the linear-time
kernel Stein (LKS) test statistic $\sqrt{n}\widehat{S_{l}^{2}}$ is
$c^{(\mathrm{LKS})}=\frac{1}{2}\frac{\left[\mathbb{E}_{q}h_{p}(\mathbf{x},\mathbf{x}')\right]^{2}}{\mathbb{E}_{p}\left[h_{p}^{2}(\mathbf{x},\mathbf{x}')\right]}$,
where $h_{p}$ is the U-statistic kernel of the KSD statistic, and
$\rho(n)=n$.

\end{restatable}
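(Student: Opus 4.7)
The plan is to apply Theorem~\ref{thm:gleser1964_slope} to the linear-time kernel Stein statistic. That theorem reduces the computation of an approximate Bahadur slope to two inputs: (i) a continuous asymptotic null CDF $F$ of $T_n=\sqrt{n}\widehat{S_l^2}$; and (ii) the probability limit of $T_n$ at an appropriate rate under $H_1$. The argument parallels Theorem~\ref{thm:fssd_slope}, but the i.i.d.\ structure of $\widehat{S_l^2}$ produces a Gaussian null limit instead of the weighted $\chi^2$ obtained for the FSSD.

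\textbf{Null distribution.} Because the pairs $(\mathbf{x}_{2i-1},\mathbf{x}_{2i})$ are disjoint, $\widehat{S_l^2}=\frac{2}{n}\sum_{i=1}^{n/2}h_p(\mathbf{x}_{2i-1},\mathbf{x}_{2i})$ is an ordinary i.i.d.\ sample mean of $n/2$ copies of $h_p(\mathbf{x},\mathbf{x}')$. Under $H_0\colon p=q$, each summand has mean $\mathbb{E}_p h_p(\mathbf{x},\mathbf{x}')=S_p^2(p)=0$; assuming the standard moment condition $\mathbb{E}_p h_p^2(\mathbf{x},\mathbf{x}')<\infty$, the classical CLT gives $\sqrt{n/2}\,\widehat{S_l^2}\stackrel{d}{\to}\mathcal{N}(0,\mathbb{E}_p h_p^2(\mathbf{x},\mathbf{x}'))$. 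Multiplying by $\sqrt{2}$ yields
\[
T_n=\sqrt{n}\,\widehat{S_l^2}\stackrel{d}{\to}\mathcal{N}(0,\sigma_0^2),\qquad \sigma_0^2:=2\mathbb{E}_p h_p^2(\mathbf{x},\mathbf{x}'),
\]
and the CDF $F$ of this limit is continuous, as required by Theorem~\ref{thm:gleser1964_slope}.

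\textbf{Alternative limit and slope.} Under $H_1$, the strong law of large numbers applied to the same i.i.d.\ sum gives $\widehat{S_l^2}\to\mathbb{E}_q h_p(\mathbf{x},\mathbf{x}')=S_p^2(q)>0$ almost surely, so $\plim_{n\to\infty} T_n/\sqrt{n}=\mathbb{E}_q h_p(\mathbf{x},\mathbf{x}')=:b$. Combining with the Gaussian log-tail identity $-2\log(1-\Phi(t/\sigma_0))=t^2/\sigma_0^2+o(t^2)$ as $t\to\infty$, Theorem~\ref{thm:gleser1964_slope} with $\rho(n)=n$ returns
\[
c^{(\mathrm{LKS})}=\frac{b^2}{\sigma_0^2}=\frac{[\mathbb{E}_q h_p(\mathbf{x},\mathbf{x}')]^2}{2\mathbb{E}_p h_p^2(\mathbf{x},\mathbf{x}')}=\frac{1}{2}\frac{[\mathbb{E}_q h_p(\mathbf{x},\mathbf{x}')]^2}{\mathbb{E}_p h_p^2(\mathbf{x},\mathbf{x}')},
\]
which is the claimed slope.

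\textbf{Main obstacle.} The only care points are bookkeeping. The factor $2$ in $\sigma_0^2$ arises because the non-overlapping pairing effectively halves the sample size, and the identification $\rho(n)=n$ comes from combining the $T_n=O_p(\sqrt{n})$ growth under $H_1$ with the quadratic Gaussian log-tail, so that $-2\log(1-F(T_n))\asymp T_n^2/\sigma_0^2$ is of order $n$. The two moment conditions ($\mathbb{E}_p h_p^2<\infty$ for the CLT and $\mathbb{E}_q|h_p|<\infty$ for the LLN) are implied by the finiteness of $\mathbb{E}_q\mathbb{E}_q h_p(\mathbf{x},\mathbf{x}')$ and related KSD regularity already assumed in Theorem~\ref{thm:fssd}, so no new hypotheses are needed.
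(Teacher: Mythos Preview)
Your proposal is correct and follows essentially the same route as the paper: both apply Theorem~\ref{thm:gleser1964_slope}, use the CLT on the i.i.d.\ paired sum to obtain the asymptotic null $\mathcal{N}(0,2\mathbb{E}_p h_p^2)$, invoke the Gaussian tail to place $F$ in $\mathcal{D}(1/\sigma_0^2,2)$, and use the law of large numbers under $H_1$ with $R(n)=\sqrt{n}$ (hence $\rho(n)=R(n)^2=n$) to read off the slope. The only cosmetic difference is that the paper cites Bahadur's result that the standard normal CDF lies in $\mathcal{D}(1,2)$ and then rescales via Theorem~\ref{thm:cdf_in_class}, whereas you write out the Gaussian log-tail directly; these are equivalent. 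One small quibble: your final sentence overstates what Theorem~\ref{thm:fssd}'s hypotheses guarantee---finiteness of $\mathbb{E}_q\mathbb{E}_q h_p$ does not by itself imply $\mathbb{E}_p h_p^2<\infty$---but the paper likewise treats this moment condition as an implicit standing assumption, so it is not a gap in your argument relative to theirs.
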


To make these results concrete, we consider the setting where $p=\mathcal{N}(0,1)$
and $q=\mathcal{N}(\mu_{q},1)$. We assume that both tests use the
Gaussian kernel $k(x,y)=\exp\left(-(x-y)^{2}/2\sigma_{k}^{2}\right)$,
possibly with different bandwidths. We write $\sigma_{k}^{2}$ and
$\kappa^{2}$ for the FSSD and LKS bandwidths, respectively. Under
these assumptions, the slopes given in Theorem \ref{thm:fssd_slope}
and Theorem \ref{thm:lks_slope} can be derived explicitly. The full
expressions of the slopes are given in Proposition \ref{prop:fssd_slope_gauss}
and Proposition \ref{prop:lks_slope_gauss} (in the appendix). By
\cite{Gle1964,Gle1966} (recalled as Theorem \ref{thm:gleser1964_efficiency}
in the supplement), the approximate Bahadur efficiency can be computed
by taking the ratio of the two slopes. The efficiency is given in
Theorem \ref{thm:effgaussmean}.

\begin{restatable}[Efficiency in the Gaussian mean shift problem]{thm}{effgaussmean}

\label{thm:effgaussmean}Let $E_{1}(\mu_{q},v,\sigma_{k}^{2},\kappa^{2})$
be the approximate Bahadur efficiency of $n\widehat{\mathrm{FSSD^{2}}}$
relative to $\sqrt{n}\widehat{S_{l}^{2}}$ for the case where $p=\mathcal{N}(0,1),q=\mathcal{N}(\mu_{q},1),$
and $J=1$ (i.e., one test location $v$ for $n\widehat{\mathrm{FSSD^{2}}}$).
Fix $\sigma_{k}^{2}=1$ for $n\widehat{\mathrm{FSSD^{2}}}$. Then,
for any $\mu_{q}\neq0$, for some $v\in\mathbb{R}$, and for any $\kappa^{2}>0$,
we have $E_{1}(\mu_{q},v,\sigma_{k}^{2},\kappa^{2})>2$.

\end{restatable}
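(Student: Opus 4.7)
My plan is to use the explicit closed-form expressions for the two slopes in the Gaussian mean-shift setting (Propositions \ref{prop:fssd_slope_gauss} and \ref{prop:lks_slope_gauss}) and then show that, with a carefully chosen FSSD test location $v$ (the natural choice $v = \mu_q$), the ratio $c^{(\mathrm{FSSD})}/c^{(\mathrm{LKS})}$ exceeds $2$ for every LKS bandwidth $\kappa^2 > 0$. Because the theorem quantifies as $\forall \mu_q\!\neq\! 0,\ \exists v,\ \forall \kappa^2\!>\!0$, it suffices to exhibit one such $v$; maximizing $\mathrm{FSSD}^2$ over $v$ (which, by a direct Gaussian-integral computation, occurs at $v=\mu_q$) is the natural candidate.

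\textbf{Step 1: Closed form of $c^{(\mathrm{FSSD})}$ at $v=\mu_q$.} With $s_p(x)=-x$ and $\sigma_k^2=1$, $\xi_p(x,v)=-(2x-v)\exp(-(x-v)^2/2)$. A short Gaussian-moment calculation gives $g(v)=\mathbb{E}_{q}[\xi_p(x,v)]=-(\mu_q/\sqrt{2})\exp(-(\mu_q-v)^2/4)$, so $\mathrm{FSSD}^2 = g(v)^2$, maximized at $v=\mu_q$ with value $\mu_q^2/2$. A similar completion-of-the-square integral yields $\omega_1=\mathbb{E}_p[\xi_p(x,v)^2]$ in closed form; evaluating at $v=\mu_q$ yields $c^{(\mathrm{FSSD})}$ as an explicit function of $\mu_q$ that grows like $\exp(\mu_q^2/3)$ for large $|\mu_q|$.

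\textbf{Step 2: Closed form of $c^{(\mathrm{LKS})}(\kappa^2)$.} The numerator $[\mathbb{E}_q h_p]^2$ equals $(S_p^2(q))^2$ and, when expanded via $h_p$ for Gaussian $k$ with bandwidth $\kappa^2$, is a product of a rational function of $\kappa^2$ and $\exp(-\mu_q^2/(2(\kappa^2+2)))$. The denominator $\mathbb{E}_p[h_p^2]$ is a rational function of $\kappa^2$ obtained from polynomial-times-Gaussian integrals. Collecting these yields $c^{(\mathrm{LKS})}(\mu_q,\kappa^2)$ explicitly.

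\textbf{Step 3: Bound $\sup_{\kappa^2>0} c^{(\mathrm{LKS})}$.} This is the main obstacle. I inspect the endpoints: as $\kappa^2\!\to\!0^+$ the degeneracy of the kernel makes the numerator vanish faster than the denominator, giving $c^{(\mathrm{LKS})}\!\to\!0$; as $\kappa^2\!\to\!\infty$, $k\to 1$ and $h_p(x,y)\to s_p(x)s_p(y)=xy$, giving $\mathbb{E}_q h_p\to \mu_q^2$, $\mathbb{E}_p h_p^2\to 1$, and hence $c^{(\mathrm{LKS})}\to \mu_q^4/2$. Between these endpoints I differentiate the closed form in $\kappa^2$ and show (either by direct monotonicity in $\kappa^2$, or by reducing the critical-point equation to a polynomial in $1/\kappa^2$ whose roots can be analyzed) that the supremum is attained in the $\kappa^2\to\infty$ limit, so $\sup_{\kappa^2} c^{(\mathrm{LKS})}\le \mu_q^4/2$. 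If a clean monotonicity argument is unavailable, I fall back on a coarser uniform bound obtained by applying Cauchy–Schwarz to $\mathbb{E}_p h_p^2$ (using $h_p(x,y)=\langle \xi_p(x,\cdot),\xi_p(y,\cdot)\rangle_{\mathcal F^d}$), which still preserves the factor of $2$.

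\textbf{Step 4: Verify $c^{(\mathrm{FSSD})}(v=\mu_q) > 2\sup_{\kappa^2} c^{(\mathrm{LKS})}$.} Dividing the closed form from Step~1 by the bound from Step~3 produces a function of $\mu_q$ alone. For small $\mu_q$, the ratio blows up, because $c^{(\mathrm{FSSD})}=\Theta(\mu_q^2)$ while the bound is $O(\mu_q^4)$; for large $|\mu_q|$, $c^{(\mathrm{FSSD})}$ grows super-polynomially like $\exp(\mu_q^2/3)$ while the bound remains $O(\mu_q^4)$, so the ratio is easily above $2$; and in the intermediate regime the explicit form can be checked to remain above $2$ by elementary monotonicity analysis. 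The hardest part of the argument is obtaining the sharp-enough uniform bound in Step~3, since the LKS slope depends on $\kappa^2$ in a nontrivial way and the $\kappa^2\to\infty$ limit is not obviously the maximizer without an explicit monotonicity or critical-point analysis.
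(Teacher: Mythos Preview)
Your overall architecture---compute both slopes explicitly, bound $\sup_{\kappa^2>0}c^{(\mathrm{LKS})}$ by $\mu_q^4/2$ via monotonicity in $\kappa^2$, then verify the ratio exceeds $2$ as a one-variable function of $\mu_q$---is exactly the route the paper takes. Your Step~3 is correct and coincides with the paper's Proposition~\ref{prop:lks_slope_gauss} (claim~3): $\kappa^2\mapsto c_1^{(\mathrm{LKS})}$ is strictly increasing with limit $\mu_q^4/2$.

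The genuine gap is your choice $v=\mu_q$. You picked $v$ to maximize $\mathrm{FSSD}^2=g(v)^2$, but the slope is $c^{(\mathrm{FSSD})}=\mathrm{FSSD}^2/\omega_1$, and $\omega_1=\mathbb{E}_{x\sim p}[\xi_p(x,v)^2]$ also depends on $v$. With $\sigma_k^2=1$ one has
\[
\omega_1=\frac{e^{-v^2/3}(v^2+12)}{9\sqrt{3}},\qquad
c^{(\mathrm{FSSD})}(\mu_q,v,1)=\frac{9\sqrt{3}\,\mu_q^2\,e^{v^2/3-(v-\mu_q)^2/2}}{2(v^2+12)}.
\]
Setting $v=\mu_q$ gives exponent $\mu_q^2/3$, and dividing by the bound $\mu_q^4/2$ yields
\[
g_1(\mu_q)=\frac{9\sqrt{3}\,e^{\mu_q^2/3}}{\mu_q^2(\mu_q^2+12)}.
\]
This function has its minimum at $\mu_q^2=3(\sqrt{5}-1)\approx 3.71$, where $g_1\approx 0.92<1$. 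So your Step~4 claim that ``in the intermediate regime the explicit form can be checked to remain above $2$'' is false for $v=\mu_q$; in fact the efficiency lower bound drops below $1$ for $\mu_q$ near $1.9$.

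The fix is to push $v$ farther from the origin so that $\omega_1$ (which carries the factor $e^{-v^2/3}$) shrinks faster than $\mathrm{FSSD}^2$. The paper takes $v=2\mu_q$, giving exponent $4\mu_q^2/3-\mu_q^2/2=5\mu_q^2/6$ and
\[
g(\mu_q)=\frac{9\sqrt{3}\,e^{5\mu_q^2/6}}{\mu_q^2(4\mu_q^2+12)},
\]
whose minimum over $\mu_q\neq 0$ occurs at $\mu_q^2=\tfrac{3}{10}(\sqrt{41}-1)$ with value $\approx 2.009>2$. The margin is razor-thin, so the choice of $v$ is not a cosmetic detail: $v=\mu_q$ fails, $v=2\mu_q$ (just barely) works.
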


When $p=\mathcal{N}(0,1)$ and $q=\mathcal{N}(\mu_{q},1)$ for $\mu_{q}\neq0$,
Theorem \ref{thm:effgaussmean} guarantees that our FSSD test is asymptotically
at least twice as efficient as the LKS test in the Bahadur sense.
We note that the efficiency is conservative in the sense that $\sigma_{k}^{2}=1$
regardless of $\mu_{q}$. Choosing $\sigma_{k}^{2}$ dependent on
$\mu_{q}$ will likely improve the efficiency further.

\section{Experiments}

\label{sec:experiments}In this section, we demonstrate the performance
of the proposed test on a number of problems. The primary goal is
to understand the conditions under which the test can perform well.

\begin{wrapfigure}{r}{0.3\textwidth} 
\vspace{-8mm} 
\includegraphics[width=0.99\linewidth]{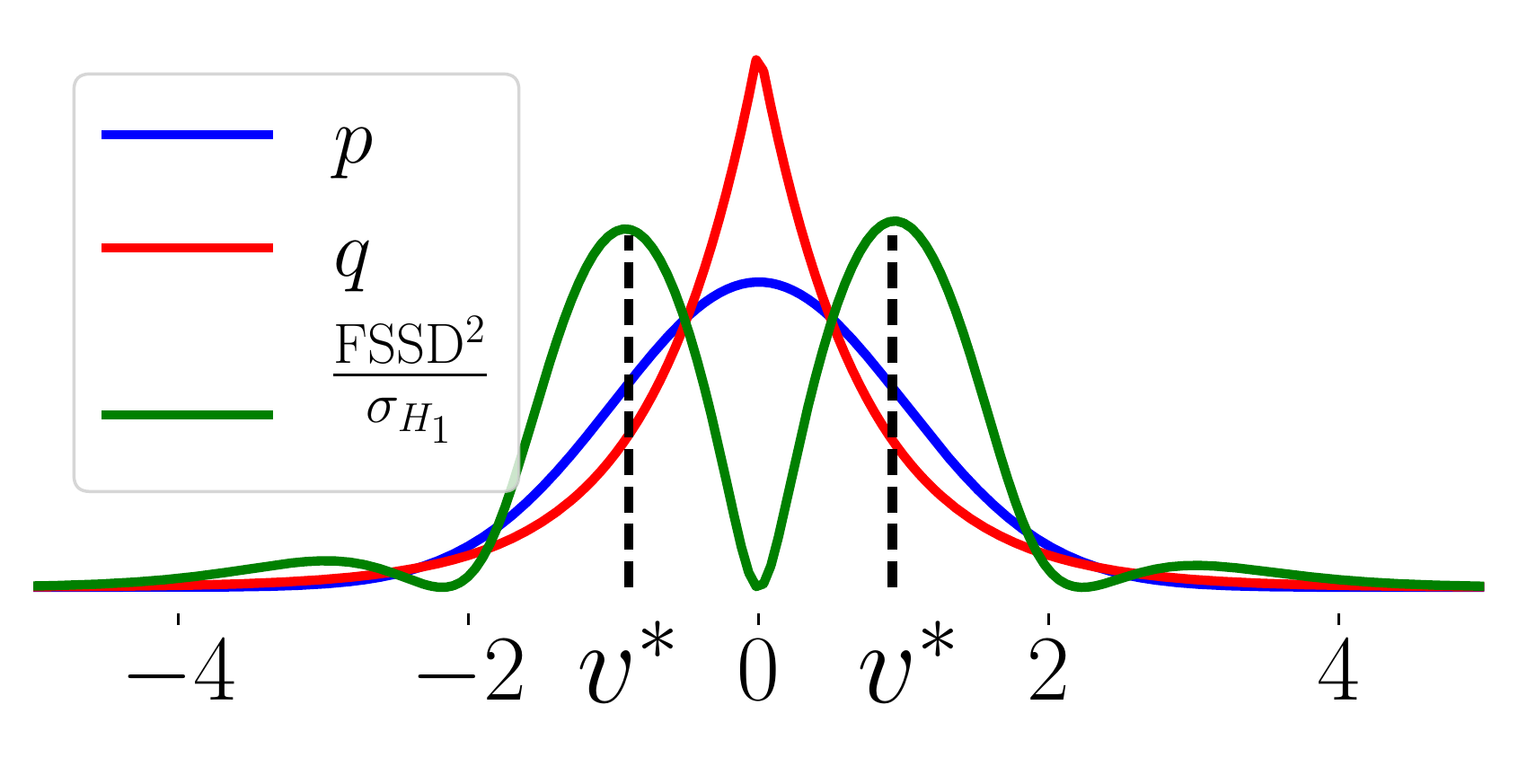}
\caption{The power criterion $\mathrm{FSSD^2}/\sigma_{H_1}$ as a function of test location $v$. 
}
\label{fig:obj_plot}
\vspace{-4mm}
\end{wrapfigure} 

\paragraph{Sensitivity to Local Differences }

We start by demonstrating that the test power objective $\mathrm{FSSD^{2}}/\sigma_{H_{1}}$
captures local differences of $p$ and $q$, and that interpretable
features $v$ are found. Consider a one-dimensional problem in which
$p=\mathcal{N}(0,1)$ and $q=\mathrm{Laplace}(0,1/\sqrt{2})$, a zero-mean
Laplace distribution with scale parameter $1/\sqrt{2}$. These parameters
are chosen so that $p$ and $q$ have the same mean and variance.
Figure \ref{fig:obj_plot} plots the (rescaled) objective as a function
of $v$. The objective illustrates that the best features (indicated
by $v^{*}$) are at the most discriminative locations.

\textbf{Test Power} We next investigate the power of different tests
on two problems:

1.\textbf{\,Gaussian vs. Laplace}: $p(\mathbf{x})=\mathcal{N}(\mathbf{x}|\mathbf{0},\mathbf{I}_{d})$
and $q(\mathbf{x})=\prod_{i=1}^{d}\mathrm{Laplace}(x_{i}|0,1/\sqrt{2})$
where the dimension $d$ will be varied. The two distributions have
the same mean and variance. The main characteristic of this problem
is local differences of $p$ and $q$ (see Figure \ref{fig:obj_plot}).
Set $n=1000$.

2.\textbf{\,Restricted Boltzmann Machine} (RBM): $p(\mathbf{x})$
is the marginal distribution of $p(\mathbf{x},\mathbf{h})=\frac{1}{Z}\exp\left(\mathbf{x}^{\top}\mathbf{B}\mathbf{h}+\mathbf{b}^{\top}\mathbf{x}+\mathbf{c}^{\top}\mathbf{x}-\frac{1}{2}\|\mathbf{x}\|^{2}\right),$
where $\mathbf{x}\in\mathbb{R}^{d}$, $\mathbf{h}\in\{\pm1\}^{d_{h}}$
is a random vector of hidden variables, and $Z$ is the normalization
constant. The exact marginal density $p(\mathbf{x})=\sum_{\mathbf{h}\in\{-1,1\}^{d_{h}}}p(\mathbf{x},\mathbf{h})$
is intractable when $d_{h}$ is large, since it involves summing over
$2^{d_{h}}$ terms. Recall that the proposed test only requires the
score function $\nabla_{\mathbf{x}}\log p(\mathbf{x})$ (not the normalization
constant), which can be computed in closed form in this case. In this
problem, $q$ is another RBM where entries of the matrix $\mathbf{B}$
are corrupted by Gaussian noise. This was the problem considered in
\cite{LiuLeeJor2016}. We set $d=50$ and $d_{h}=40$, and generate
samples by $n$ independent chains (i.e., $n$ independent samples)
of blocked Gibbs sampling with 2000 burn-in iterations.

We evaluate the following six kernel-based nonparametric tests with
$\alpha=0.05$, all using the Gaussian kernel.\textbf{ 1. FSSD-rand}:
the proposed FSSD test where the test locations set to random draws
from a multivariate normal distribution fitted to the data. The kernel
bandwidth is set by the commonly used median heuristic i.e., $\sigma_{k}=\mathrm{median}(\{\|\mathbf{x}_{i}-\mathbf{x}_{j}\|,i<j\})$.\textbf{
2. FSSD-opt}: the proposed FSSD test where both the test locations
and the Gaussian bandwidth are optimized (Section \ref{subsec:param_optimization}).
\textbf{3. KSD}: the quadratic-time Kernel Stein Discrepancy test
with the median heuristic.\textbf{ 4. LKS}: the linear-time version
of KSD with the median heuristic. \textbf{5. MMD-opt}: the quadratic-time
MMD two-sample test of \cite{Gretton2012} where the kernel bandwidth
is optimized by grid search to maximize a power criterion as described
in \cite{Sutherland2016}. \textbf{6. ME-opt}: the linear-time mean
embeddings (ME) two-sample test of \cite{Jitkrittum2016a} where parameters
are optimized. We draw $n$ samples from $p$ to run the two-sample
tests (MMD-opt, ME-opt). For FSSD tests, we use $J=5$ (see Section
\ref{sec:pow_vs_J} for an investigation of test power as $J$ varies).
All tests with optimization use 20\% of the sample size $n$ for parameter
tuning. Code is available at \url{https://github.com/wittawatj/kernel-gof}.

Figure \ref{fig:test_powers} shows the rejection rates of the six
tests for the two problems, where each problem is repeated for 200
trials, resampling $n$ points from $q$ every time. In Figure \ref{fig:ex2_laplace}
(Gaussian vs. Laplace), high performance of FSSD-opt indicates that
the test performs well when there are local differences between $p$
and $q$. Low performance of FSSD-rand emphasizes the importance of
the optimization of FSSD-opt to pinpoint regions where $p$ and $q$
differ. The power of KSD quickly drops as the dimension increases,
which can be understood since KSD is the RKHS norm of a function witnessing
differences in $p$ and $q$ across the entire domain, including where
these differences are small. 

\begin{figure}
\centering
\includegraphics[width=0.98\linewidth]{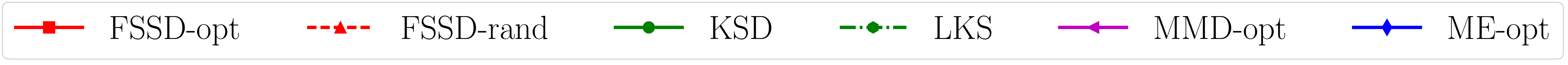} \\
\vspace{-3mm}
\subfloat[Gaussian vs. Laplace. $n=1000$. \label{fig:ex2_laplace}]{
\includegraphics[width=0.223\linewidth]{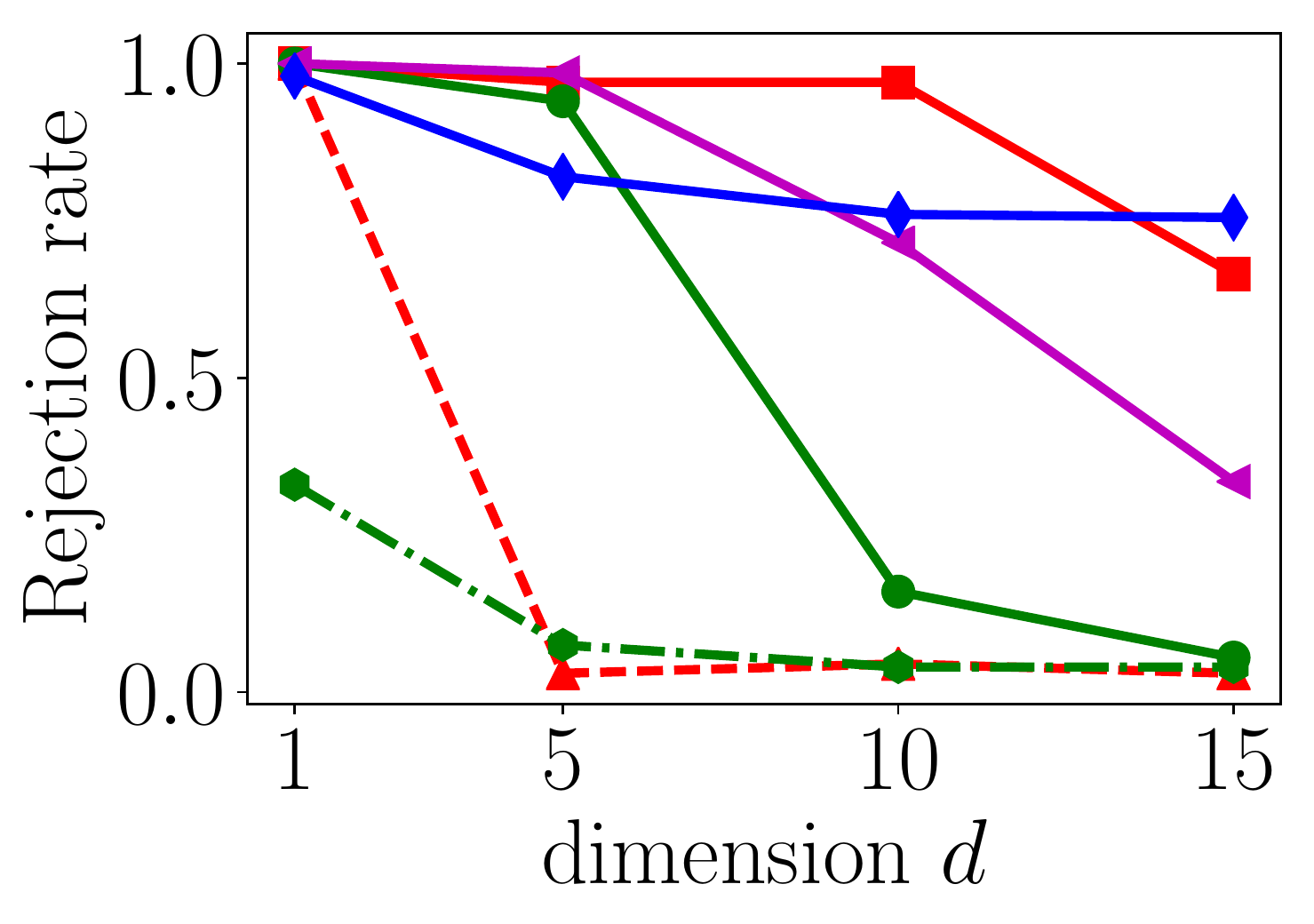}
}%
\,\,
\subfloat[RBM. $n=1000$. Perturb all entries of $\mathbf{B}$. \label{fig:ex2_rbm_dh40}]{ 
\includegraphics[width=0.223\linewidth]{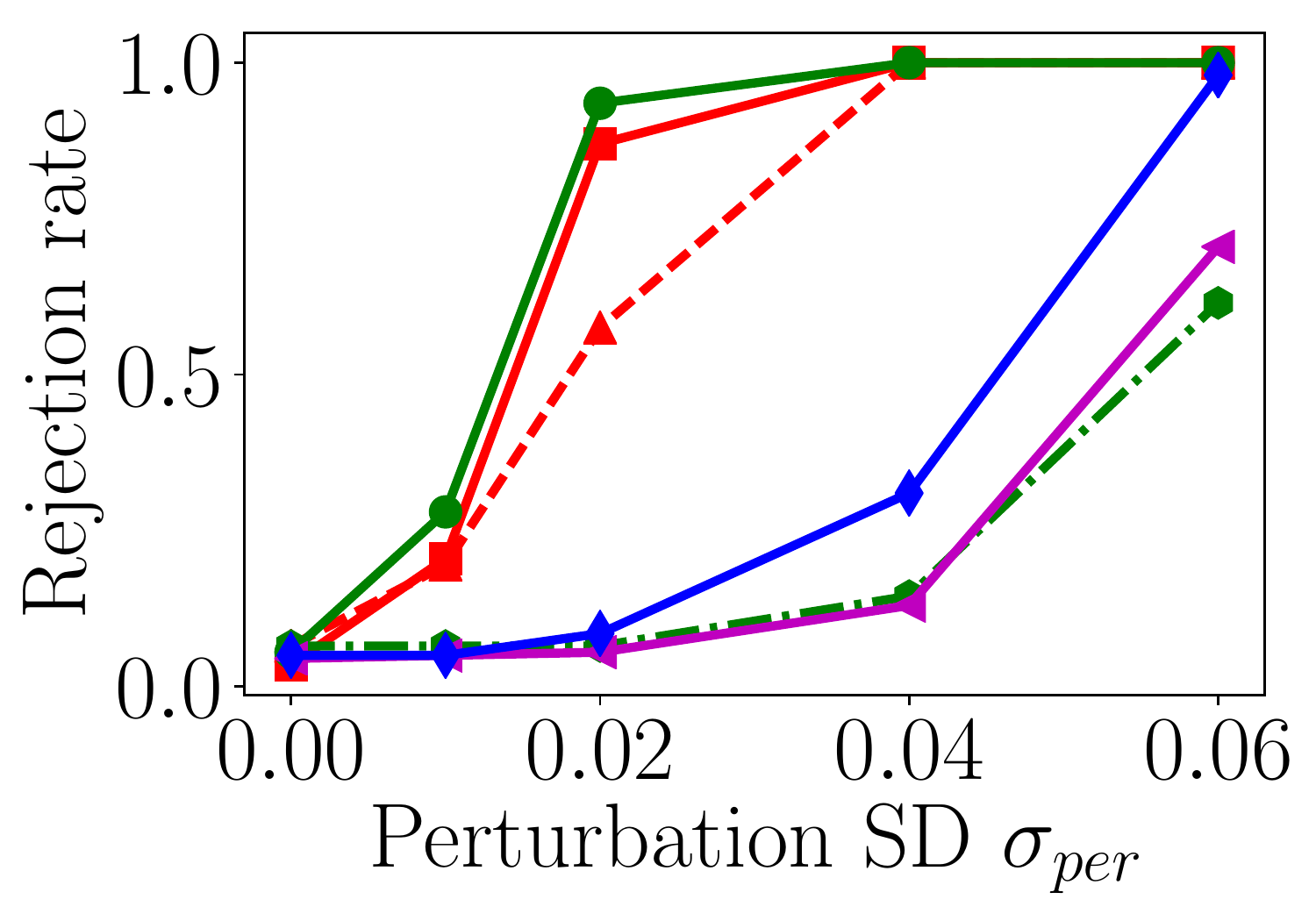} 
}%
\,\,
\subfloat[RBM. $\sigma_{per}=0.1$. Perturb $B_{1,1}$. \label{fig:ex1_rbm_dh40}]{ 
\includegraphics[width=0.232\linewidth]{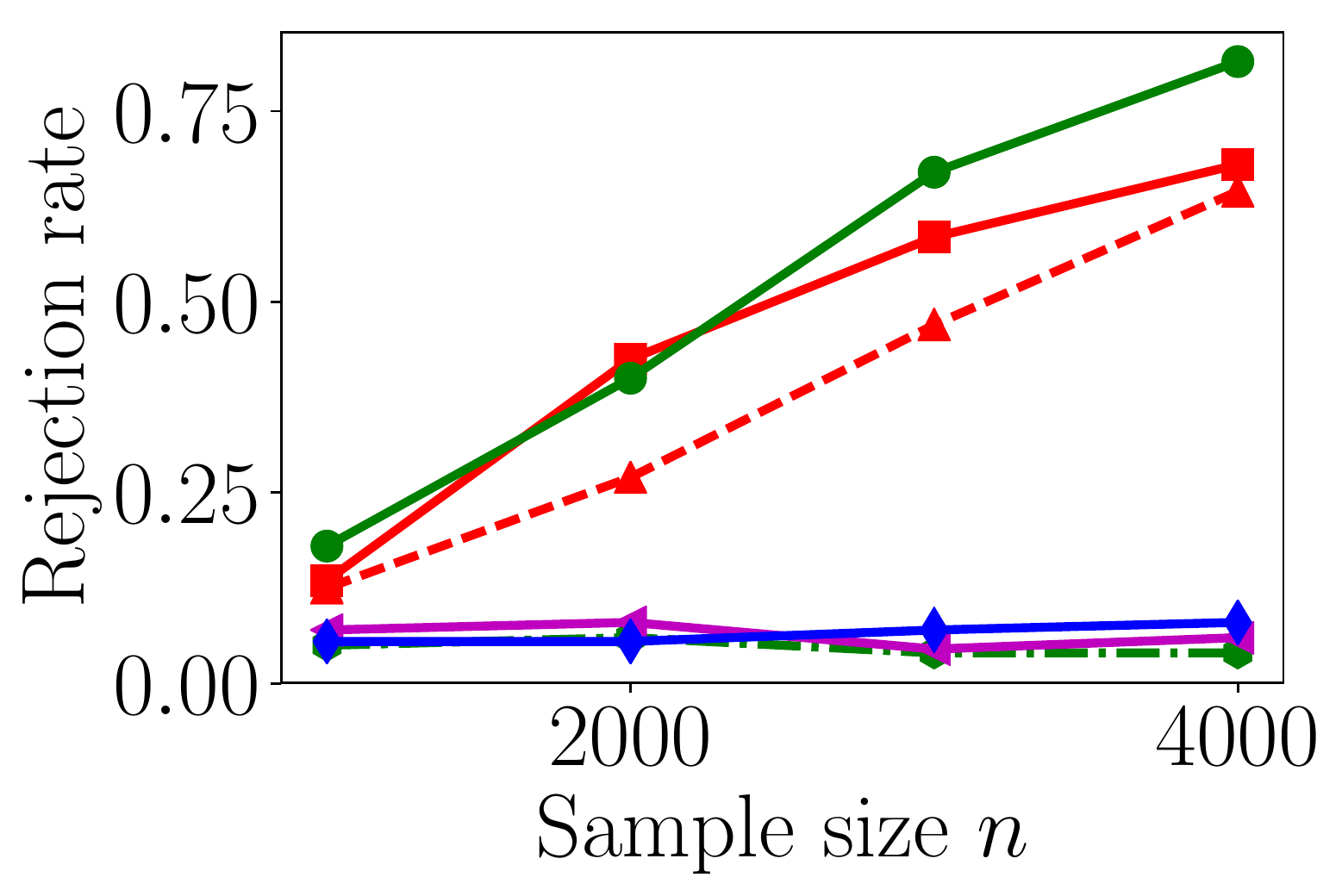} 
}%
\subfloat[Runtime (RBM) \label{fig:ex1_rbm_dh40_time}]{ 
\includegraphics[width=0.235\linewidth]{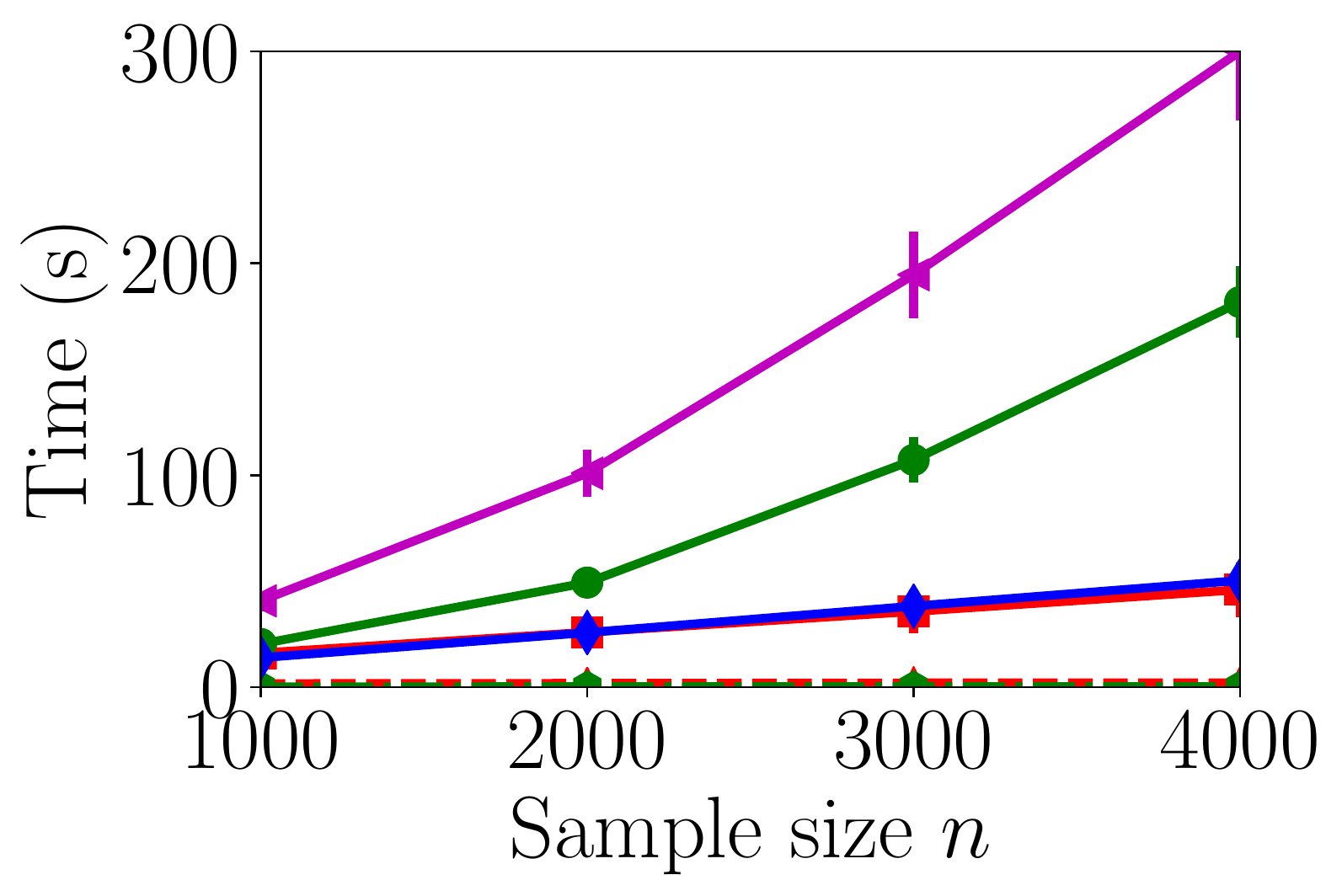} 
} 

\caption{Rejection rates of the six tests. The proposed linear-time FSSD-opt
has a comparable or higher test power in some cases than the quadratic-time
KSD test. \vspace{-4mm}\label{fig:test_powers}}
\end{figure}

We next consider the case of RBMs. Following \cite{LiuLeeJor2016},
$\mathbf{b},\mathbf{c}$ are independently drawn from the standard
multivariate normal distribution, and entries of $\mathbf{B}\in\mathbb{R}^{50\times40}$
are drawn with equal probability from $\{\pm1\}$, in each trial.
The density $q$ represents another RBM having the same $\mathbf{b},\mathbf{c}$
as in $p$, and with all entries of $\mathbf{B}$ corrupted by independent
zero-mean Gaussian noise with standard deviation $\sigma_{per}$.
Figure \ref{fig:ex2_rbm_dh40} shows the test powers as $\sigma_{per}$
increases, for a fixed sample size $n=1000$. We observe that all
the tests have correct false positive rates (type-I errors) at roughly
$\alpha=0.05$ when there is no perturbation noise. In particular,
the optimization in FSSD-opt does not increase false positive rate
when $H_{0}$ holds. We see that the performance of the proposed FSSD-opt
matches that of the quadratic-time KSD at all noise levels. MMD-opt
and ME-opt perform far worse than the goodness-of-fit tests when the
difference in $p$ and $q$ is small ($\sigma_{per}$ is low), since
these tests simply represent $p$ using samples, and do not take advantage
of its structure.

The advantage of having $\mathcal{O}(n)$ runtime can be clearly seen
when the problem is much harder, requiring larger sample sizes to
tackle. Consider a similar problem on RBMs in which the parameter
$\mathbf{B}\in\mathbb{R}^{50\times40}$ in $q$ is given by that of
$p$, where only the first entry $B_{1,1}$ is perturbed by random
$\mathcal{N}(0,0.1^{2})$ noise. The results are shown in Figure \ref{fig:ex1_rbm_dh40}
where the sample size $n$ is varied. We observe that the two two-sample
tests fail to detect this subtle difference even with large sample
size. The test powers of KSD and FSSD-opt are comparable when $n$
is relatively small. It appears that KSD has higher test power than
FSSD-opt in this case for large $n$. However, this moderate gain
in the test power comes with an order of magnitude more computation.
As shown in Figure \ref{fig:ex1_rbm_dh40_time}, the runtime of the
KSD is much larger than that of FSSD-opt, especially at large $n$.
In these problems, the performance of the new test (even without optimization)
far exceeds that of the LKS test. Further simulation results can be
found in Section \ref{sec:more_expr}. 

\begin{wrapfigure}{r}{0.64\textwidth} 
\vspace{-8mm} 
\subfloat[$p=$ 2-component GMM. \label{fig:low_gmm_obj}]{ 
\includegraphics[width=0.43\linewidth]{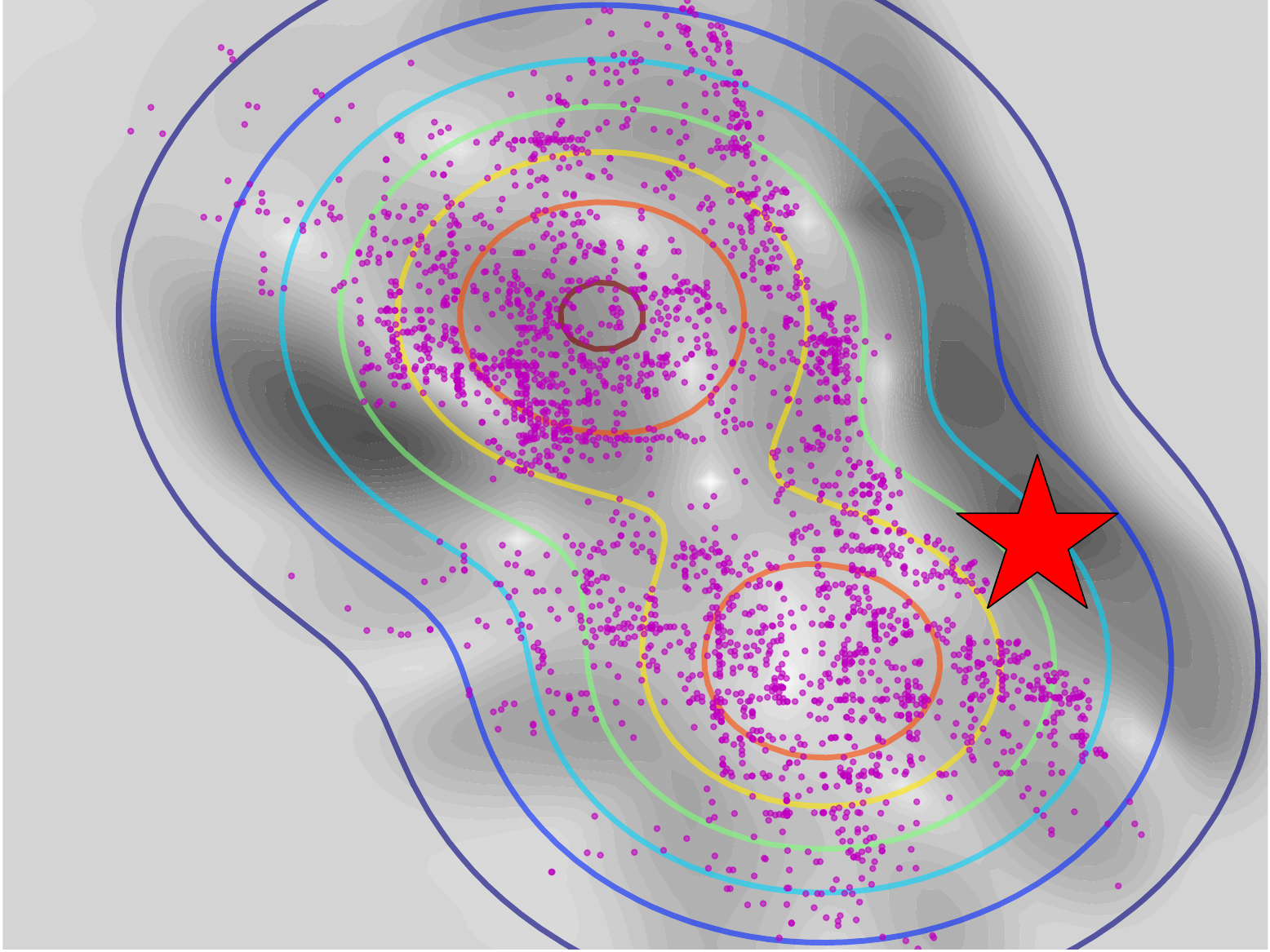} 
}
\subfloat[$p=$ 10-component GMM \label{fig:high_gmm_obj}]{ 
\includegraphics[width=0.55\linewidth]{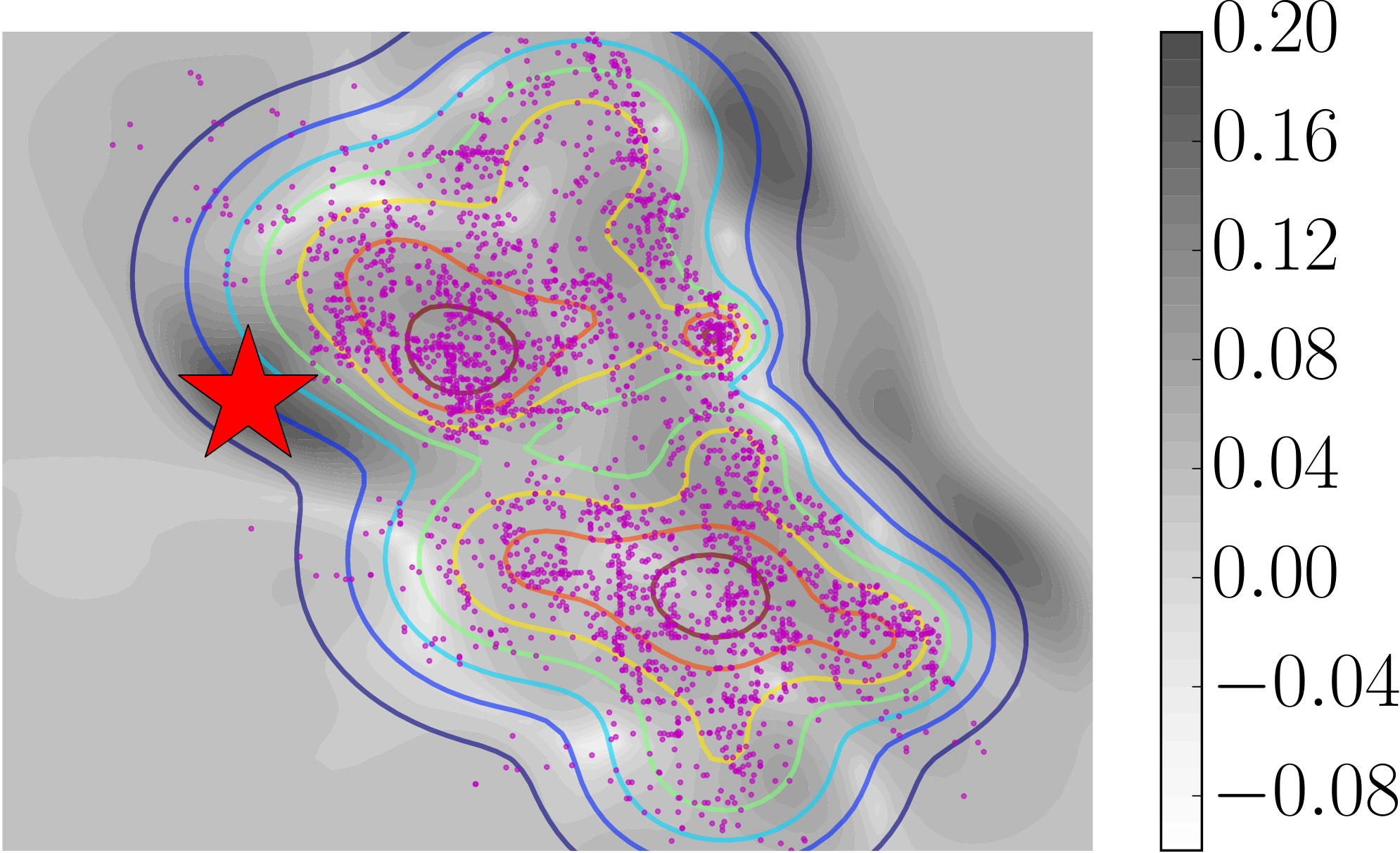} 
}
\caption{Plots of the optimization objective as a function of test location $\mathbf{v} \in \mathbb{R}^2$ in  the Gaussian mixture model (GMM) evaluation task.
}
\label{fig:mixture_obj}
\vspace{-2mm}
\end{wrapfigure} 

\paragraph{Interpretable Features }

In the final simulation, we demonstrate that the learned test locations
are informative in visualising where the model does not fit the data
well. We consider crime data from the Chicago Police Department, recording
$n=11957$ locations (latitude-longitude coordinates) of robbery events
in Chicago in 2016.\footnote{Data can be found at \url{https://data.cityofchicago.org}.}
We address the situation in which a model $p$ for the robbery location
density is given, and we wish to visualise where it fails to match
the data. We fit a Gaussian mixture model (GMM) with the expectation-maximization
algorithm to a subsample of 5500 points. We then test the model on
a held-out test set of the same size to obtain proposed locations
of relevant features $\mathbf{v}$. Figure \ref{fig:low_gmm_obj}
shows the test robbery locations in purple, the model with two Gaussian
components in wireframe, and the optimization objective for $\mathbf{v}$
as a grayscale contour plot (a red star indicates the maximum). We
observe that the 2-component model is a poor fit to the data, particularly
in the right tail areas of the data, as indicated in dark gray (i.e.,
the objective is high). Figure \ref{fig:high_gmm_obj} shows a similar
plot with a 10-component GMM. The additional components appear to
have eliminated some mismatch in the right tail, however a discrepancy
still exists in the left region. Here, the data have a sharp boundary
on the right side following the geography of Chicago, and do not exhibit
exponentially decaying Gaussian-like tails.  We note that tests based
on a learned feature located at the maximum both correctly reject
$H_{0}$. 

\newpage{}

\subsection*{Acknowledgement}

WJ, WX, and AG thank the Gatsby Charitable Foundation for the financial
support. ZSz was financially supported by the Data Science Initiative.
KF has been supported by KAKENHI Innovative Areas 25120012.


\bibliographystyle{abbrvnat}
\bibliography{kgof}

\newpage
\appendix

\begin{center}
{\Large{}\ourtitle{}}
\par\end{center}{\Large \par}

\begin{center}
\textcolor{black}{\Large{}Supplementary}
\par\end{center}{\Large \par}

\section{Rejection Rate vs. Number of Test Locations $J$}

\label{sec:pow_vs_J}
\begin{figure}[h]
\hspace{-2mm}
\subfloat[SG. $d=5$. $\alpha=0.05$ \label{fig:ex3_sg}]{
\includegraphics[width=0.28\linewidth]{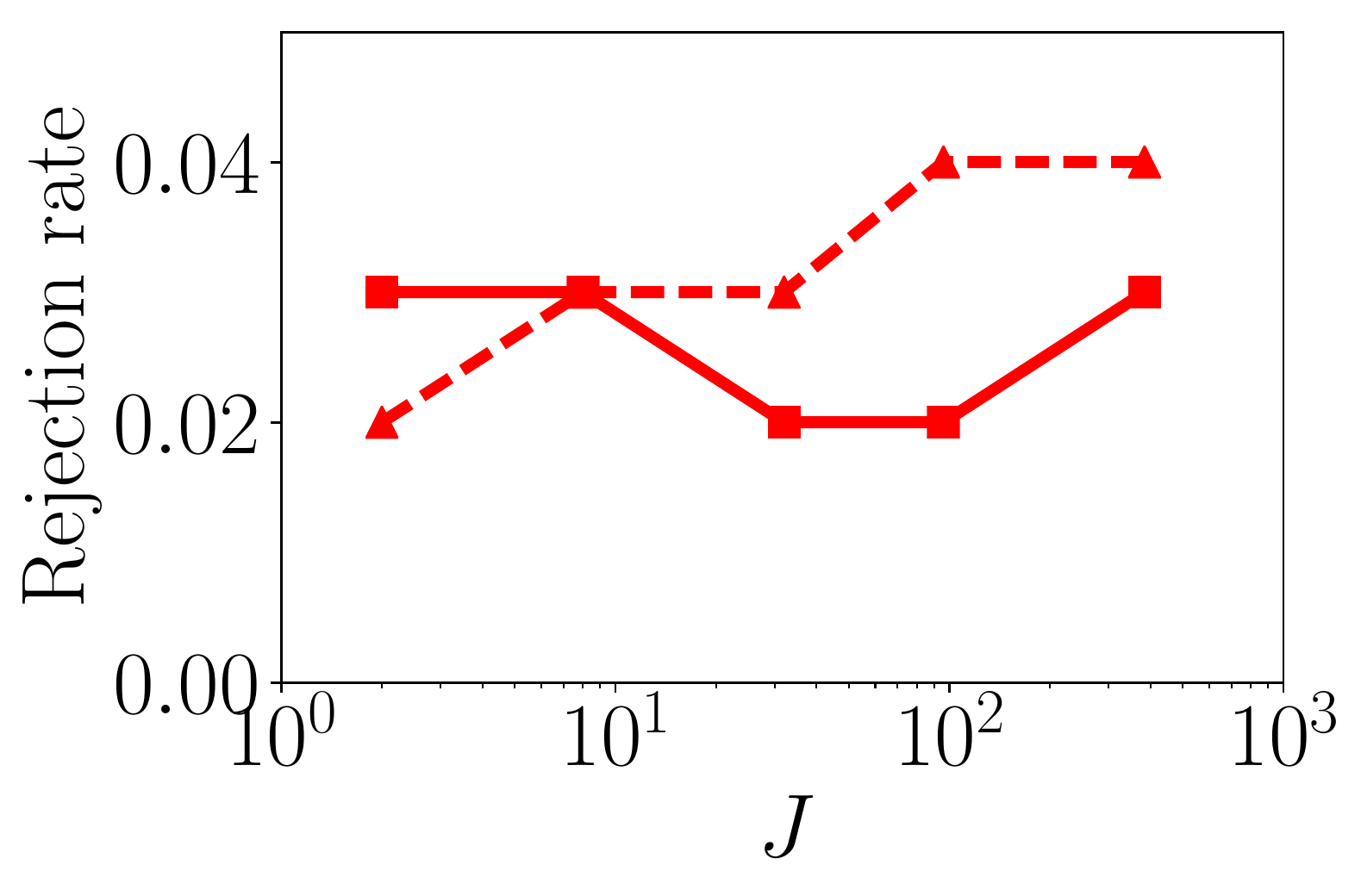}
}
\hspace{-2mm}
\subfloat[Gaussian vs. GMM. $d=1$. \label{fig:ex3_g_vs_gmm}]{ 
\includegraphics[width=0.27\linewidth]{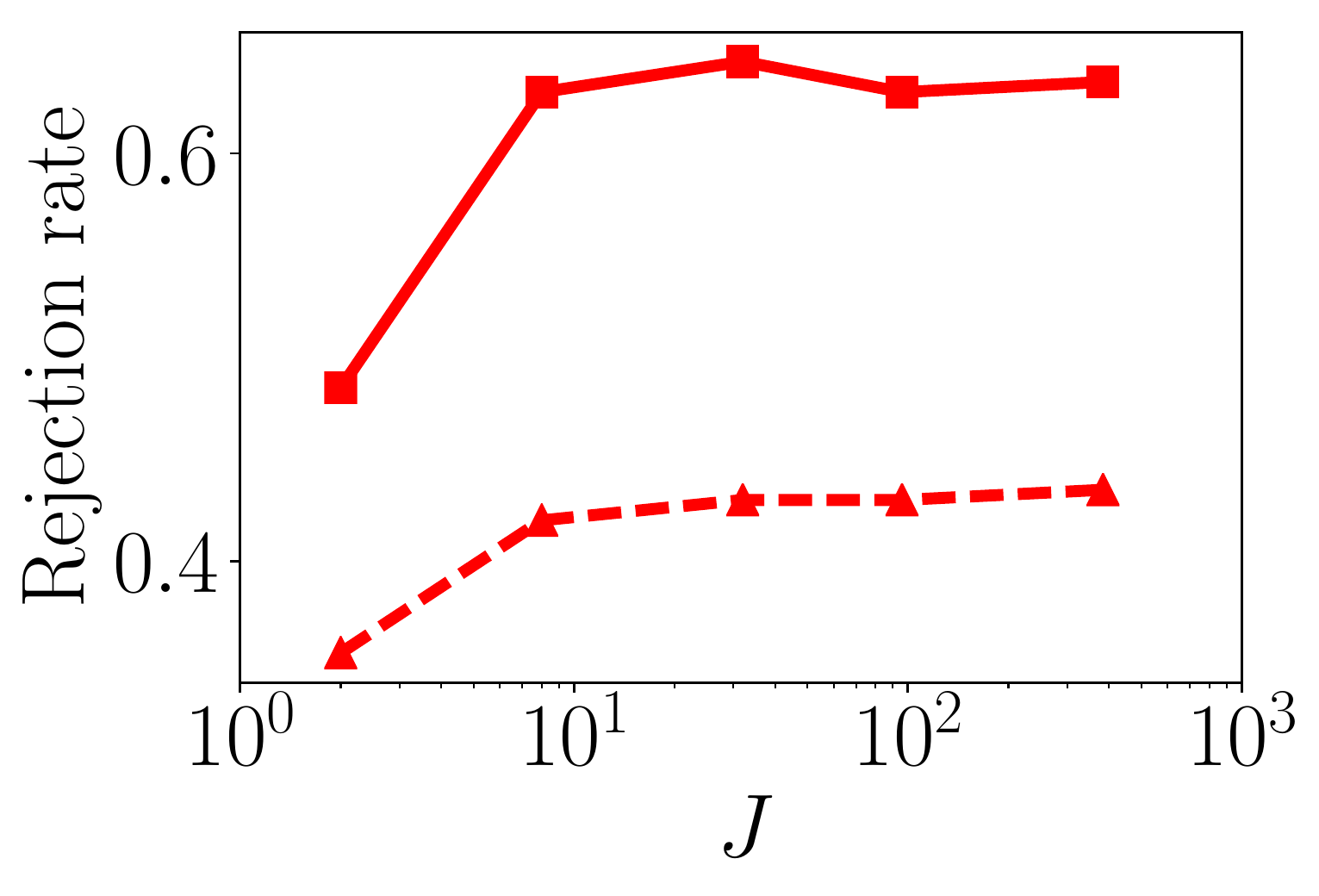} 
}\hspace{-2mm}
\subfloat[GVD. $d=5$. \label{fig:ex3_gvd5}]{ 
\includegraphics[width=0.42\linewidth]{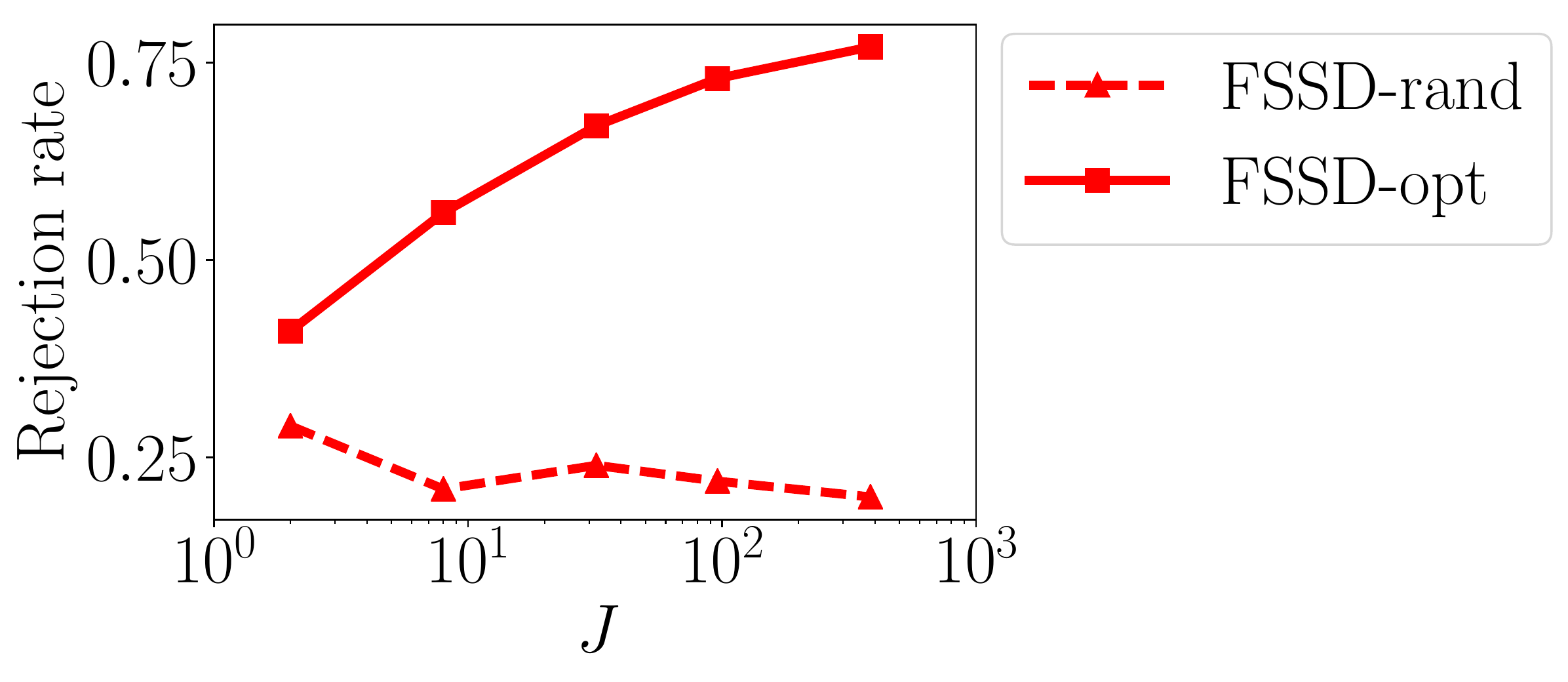} 
} 

\caption{Plots of rejection rate against the number of test locations $J$
in the three toy problems in Section \ref{sec:pow_vs_J}. \label{fig:pow_vs_J}}
\end{figure}

The aim of this section is to explore the test power of the proposed
FSSD test as a function of the number of test locations $J$. We consider
three synthetic problems to illustrate three phenomena depending on
the characteristic of the problem. We note that the test power may
not necessarily increase with $J$. Figure \ref{fig:pow_vs_J} shows
the rejection rate as a function of the test locations $J$ in the
three problems described below. In all cases, the sample size is set
to $n=500$, the train/test ratio is 50\%, and the significance level
is $\alpha=0.05$. All rejection rates are computed with 200 trials
with data sampled from the specified $q$ in every trial. 

We emphasize that the FSSD test is not designed to be used with large
$J$, since doing so defeats the purpose of a linear-time test. We
show in the main text in Section \ref{fig:test_powers} that using
$J=5$ is typically sufficient in practice.

\paragraph{Same Gaussian (SG): }

In this problem, \textbf{$p=q=\mathcal{N}(\mathbf{0},\mathbf{I})$
}in $\mathbb{R}^{5}$ i.e., $H_{0}$ is true. It can be seen in Figure
\ref{fig:ex3_sg} that both the FSSD tests with and without optimization
achieve correct false positive rate at roughly $\alpha$ for all $J$
considered. That is, under $H_{0}$, the false rejection rate stays
at the right level for all $J$.

\paragraph{Gaussian vs. Gaussian mixture model (GMM): }

This is a one-dimensional problem where $p=\mathcal{N}(0,1)$ and
$q=0.9\mathcal{N}(0,1)+0.1\mathcal{N}(0,0.1^{2})$ i.e., a mixture
of two normal distributions. In this problem, $p$ significantly differs
from $q$ in a small region around 0. This difference is created by
the second mixture component. The characteristic of this problem is
the local difference of $p$ and $q$.

Figure \ref{fig:ex3_g_vs_gmm} indicates that using random test locations
(FSSD-rand) does not give high test power. With optimization (FSSD-opt),
the power increases as $J$ increases up to a point, after which it
slightly drops down and reaches a plateau. This behavior can be explained
by noting that there is only a very small region around 0 to detect
the difference. More signal can be gained with diminishing return
by increasing the number of test locations around 0. When $J$ is
sufficiently high, the increase in the variance of the statistic outweighs
the gain of the signal (recall that the variance of the null distribution
increases with $J$). This increase in the variance reduces the test
power.

\paragraph{Gaussian Variance Difference (GVD): }

This is a synthetic problem studied in \cite{Jitkrittum2016a} where
$p=\mathcal{N}(\mathbf{0},\mathbf{I})$ and $q=\mathcal{N}(\mathbf{0},\diag(2,1\ldots,1))$
in $\mathbb{R}^{5}$. In this case, the region of difference between
$q$ and $p$ exists only along the first dimension, and is broad.

In this case, Figure \ref{fig:ex3_gvd5} shows that, with optimization,
the power increases as the number of test locations increases. Unlike
the case of Gaussian vs. GMM, the region of difference in this case
is broad, and can accommodate more test locations to increase the
 signal. Despite this, we expect the test power to reach a plateau
when $J$ is sufficiently large for the same reason as described previously.
In FSSD-rand, random test locations decrease the power due to the
increase in the variance. Since only one dimension is relevant in
determining the difference of $p$ and $q$, it is unlikely that random
locations are in the right region. 

\section{More Experiments }

\begin{figure}[th]
\includegraphics[width=0.98\linewidth]{img/legend6_hori.pdf} \\[-3mm]
\centering
\subfloat[RBM. $n=1000$. Perturb all entries of $\mathbf{B}$. \label{fig:ex2_rbm_dh10}]{
\includegraphics[width=0.225\linewidth]{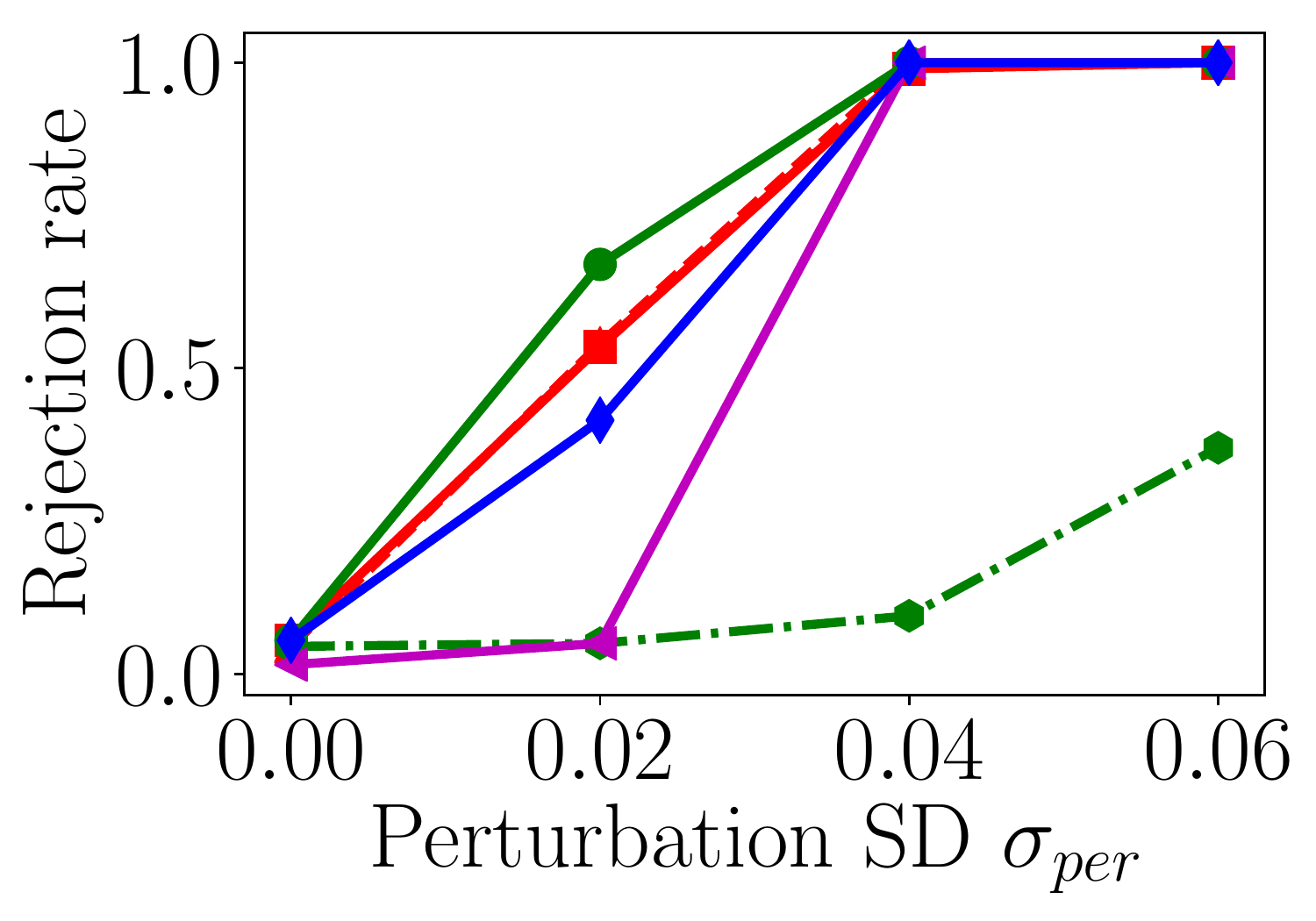}
}
\hspace{1mm}
\subfloat[RBM. $\sigma_{per}=0.1$. Perturb $B_{1,1}$. \label{fig:ex1_rbm_dh10}]{ 
\includegraphics[width=0.23\linewidth]{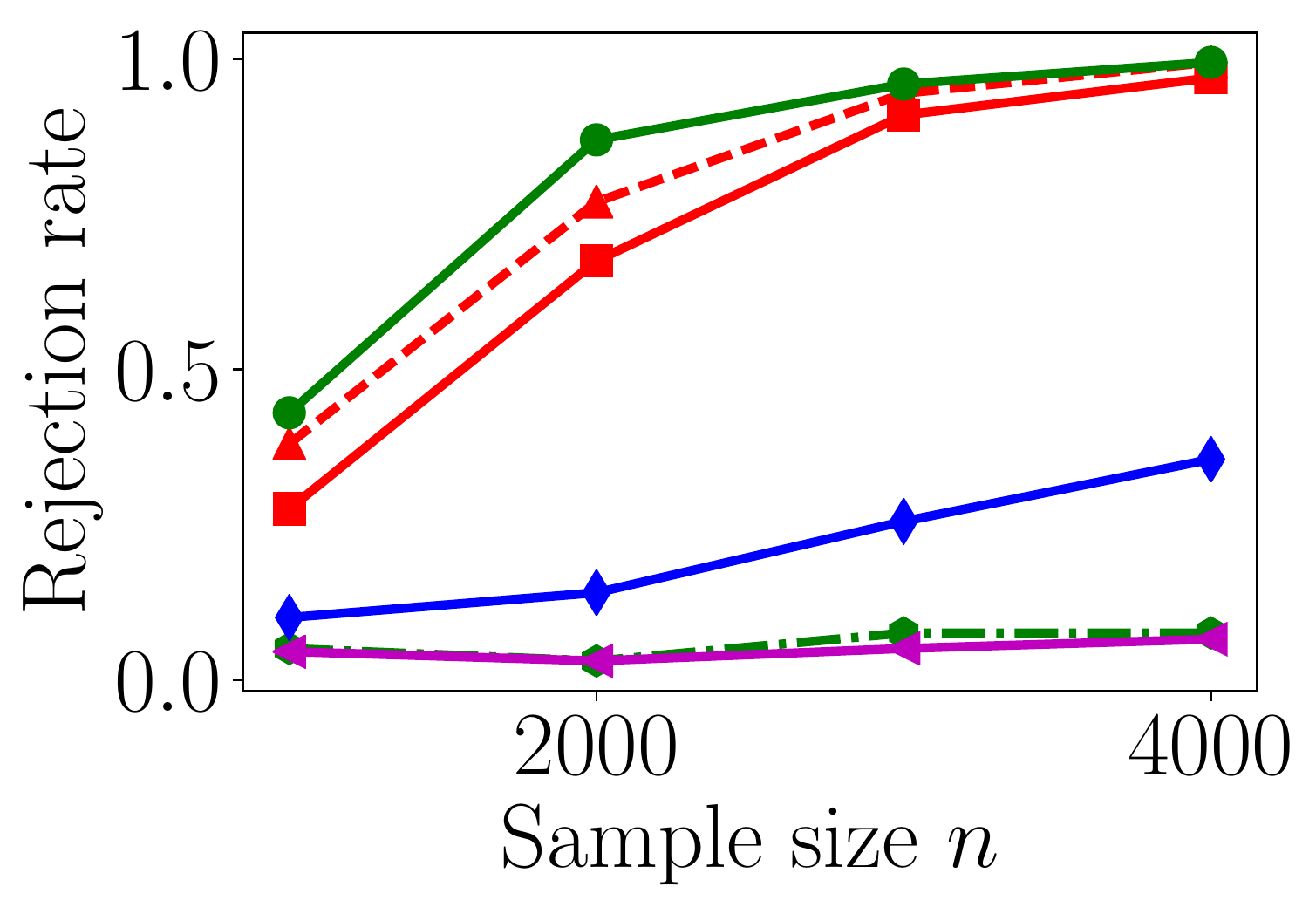} 
}
\hspace{0mm}
\subfloat[Runtime (RBM) \label{fig:ex1_rbm_dh10_time}]{ 
\includegraphics[width=0.24\linewidth]{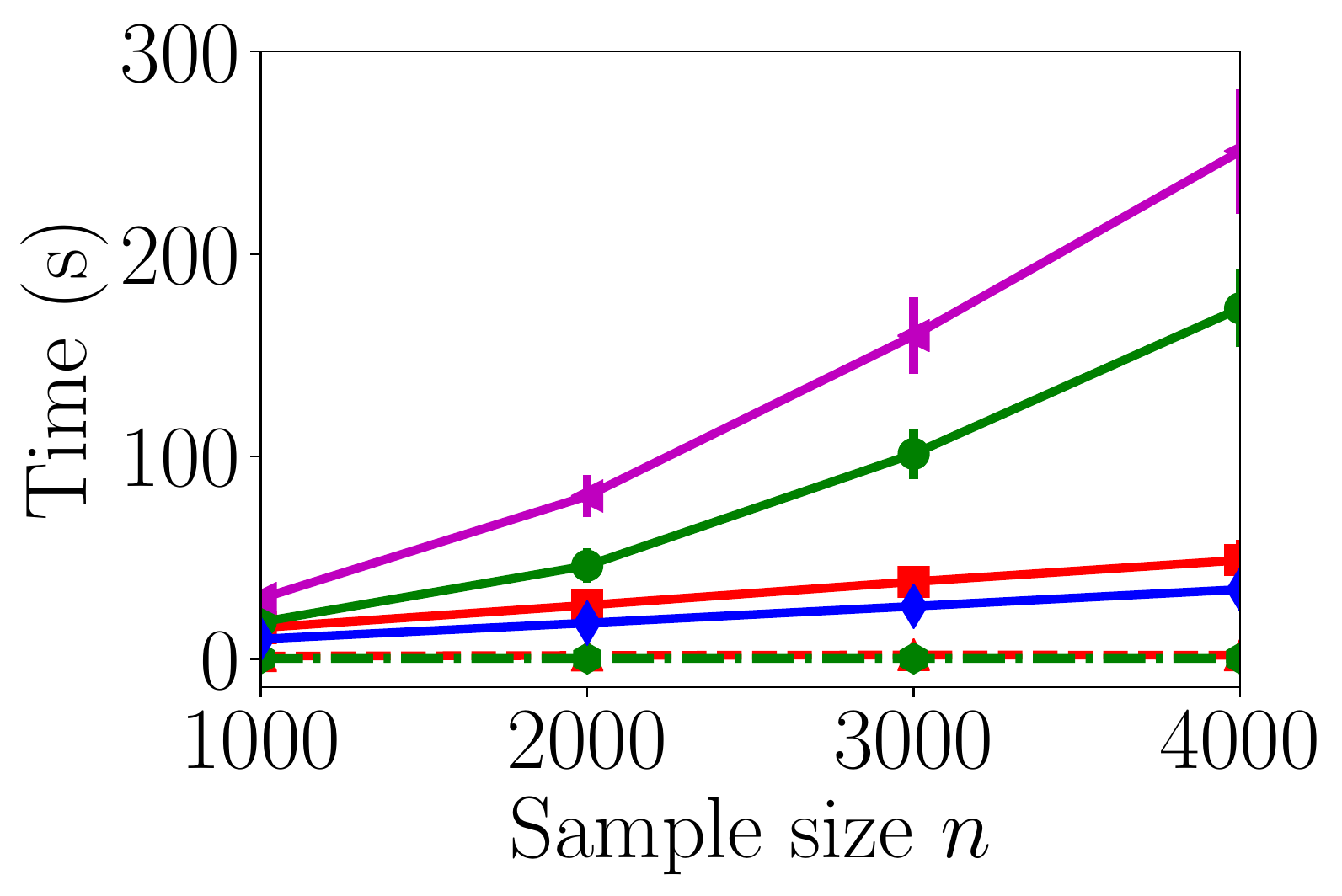} 
}
\subfloat[RBM. No perturbation. $H_0$ holds. \label{fig:ex1_rbm_dh10_h0}]{ 
\includegraphics[width=0.235\linewidth]{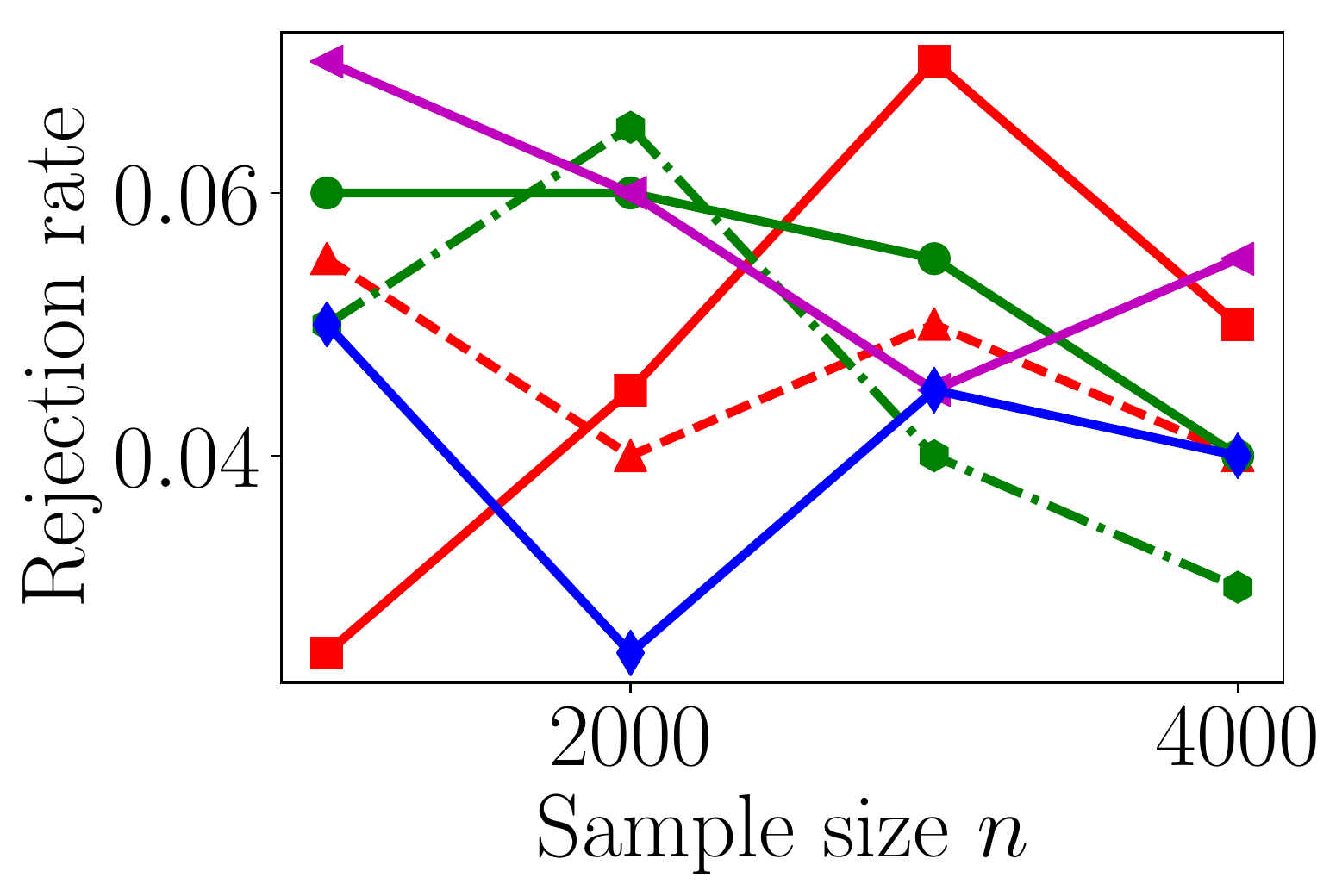} 
} 

\caption{Rejection rates of the six tests in the RBM problem with $d=50$ and
$d_{h}=10$. \label{fig:test_powers_more}}
\end{figure}
\begin{figure}[th]
\begin{center}
\subfloat[$d=50, d_h=10$ \label{fig:rbm_marginals_d50_dh10}]{ 
\includegraphics[width=0.35\linewidth]{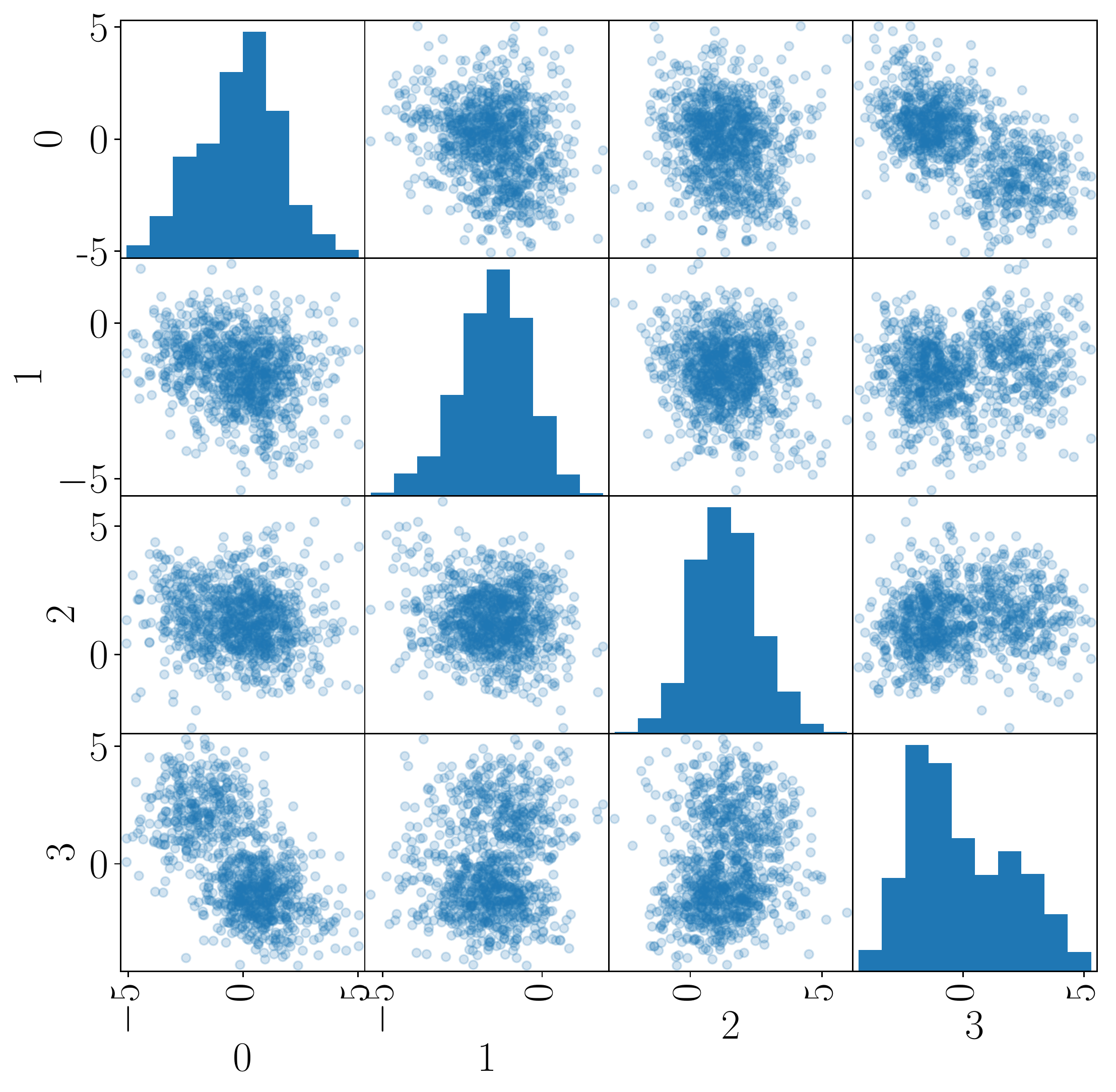} 
}
\hspace{3mm}
\subfloat[$d=50, d_h=40$ \label{fig:rbm_marginals_d50_dh40}]{ 
\includegraphics[width=0.35\linewidth]{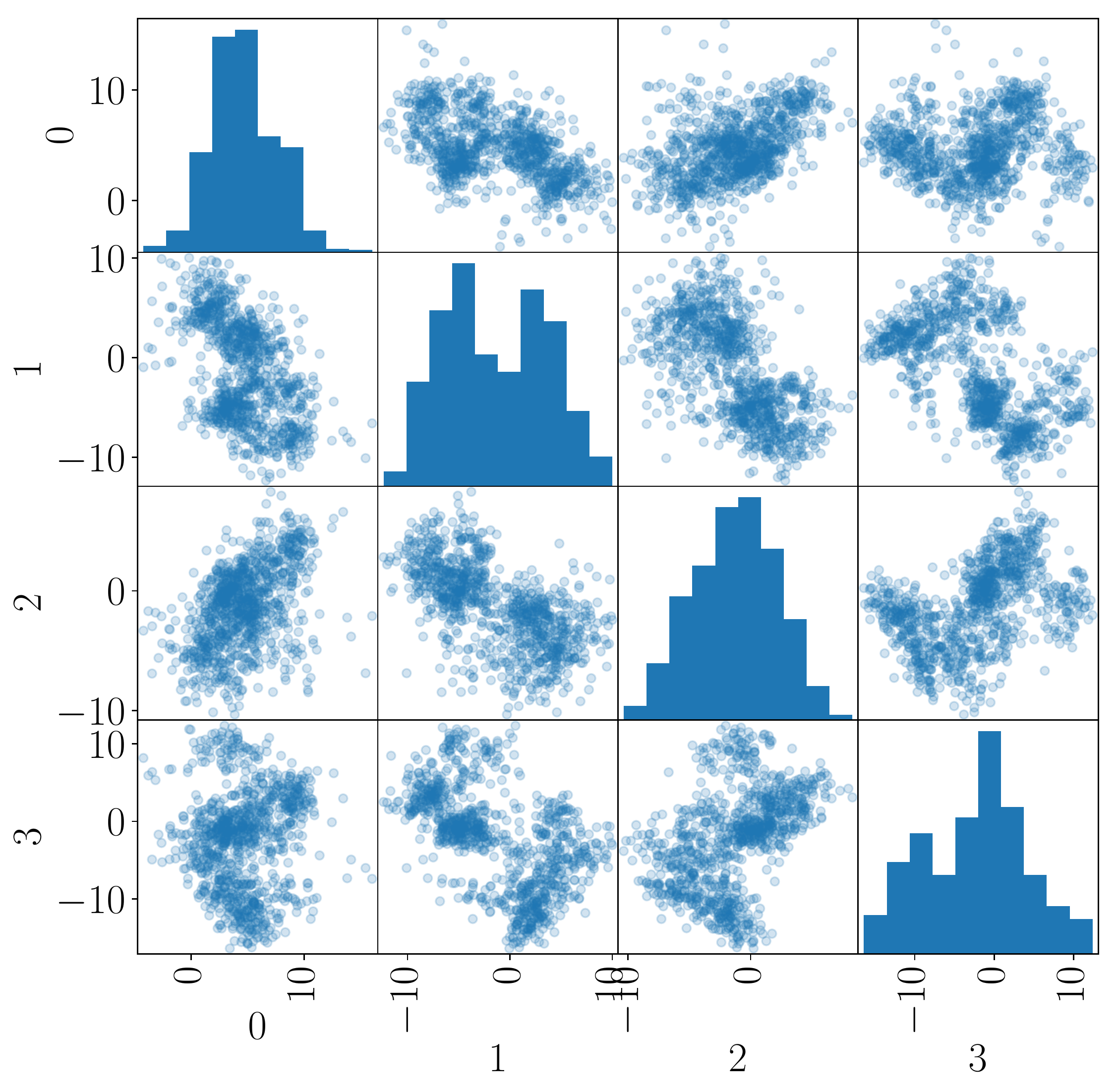} 
}
\caption{Pairwise scatter plots of 1000 points drawn from RBMs. Only the first 4 variates out of 50 are shown. \textbf{(a)}: RBM with $d=50$ dimensions with $d_h=10$ latent variables. \textbf{(b)}: RBM with $d=50$ dimensions with $d_h=40$ latent variables.
}
\end{center}
\label{fig:rbm_marginals}
\end{figure}

\label{sec:more_expr} Recall that in Section \ref{sec:experiments},
we evaluate the test powers of all the six tests on the RBM problem
with $d=50$ and $d_{h}=40$ (i.e., the number of latent variables).
We aim to provide more evaluations in this section. In \cite{LiuLeeJor2016},
the setting of $d=50$ and $d_{h}=10$ was studied. Here we consider
the same setting and show the results in Figure \ref{fig:test_powers_more}
where all other problem configurations are the same as in Section
\ref{sec:experiments}. 

In Figure \ref{fig:ex2_rbm_dh10}, $p$ is set to an RBM with parameters
randomly drawn (described in Section \ref{sec:experiments}), and
$q$ is the same RBM with all entries of the parameter $\mathbf{B}\in\mathbb{R}^{50\times10}$
perturbed by independent Gaussian noise with standard deviation $\sigma_{per}$,
which varies from 0 to 0.06. We observe that the proposed FSSD-opt
and KSD perform comparably. Figure \ref{fig:ex1_rbm_dh10} considers
a hard problem where only the first entry $B_{1,1}$ is perturbed
by noise following $\mathcal{N}(0,0.1^{2})$, and the sample size
$n$ is varied. In both of these two cases, the overall trend is similar
to the case of $d=50$ and $d_{h}=40$ presented in Figure \ref{fig:test_powers}.
It is interesting to note that FSSD-rand, relying on random test locations,
performs comparably or even outperforms FSSD-opt in the case of $d=50,d_{h}=10$,
but not in the case of $d=50,d_{h}=40$. This phenomenon can be explained
as follows. In the case of $d=50,d_{h}=10$, the data generated from
the RBM tend to have simple structure (see Figure \ref{fig:rbm_marginals_d50_dh10}).
By contrast, data generated from the RBM with $d=50,d_{h}=40$ (more
latent variables) have larger variance, and can form a complicated
structure (Figure \ref{fig:rbm_marginals_d50_dh40}), requiring a
careful choice of test locations to detect differences of $p$ and
$q$. When $d=50,d_{h}=10$, however, random test locations given
by random draws from a Gaussian distribution fitted to the data are
sufficient to capture the simple structural difference. This explains
why FSSD-rand can perform well in this case. Additionally, FSSD-rand
also has 20\% more testing data, since FSSD-opt uses 20\% of the sample
for parameter tuning.

Figure \ref{fig:ex1_rbm_dh10_h0} shows the rejection rates of all
the tests as the sample size increases when $p$ and $q$ are the
same RBM. All the tests have roughly the right false rejection rates
at the set significance level $\alpha=0.05$. 

\section{Proof of Theorem \ref{thm:fssd} }

Recall Theorem \ref{thm:fssd}: \fssd*
\begin{proof}
Since $k$ is real analytic, the components $g_{1},\ldots,g_{d}$
of $\mathbf{g}$ are real analytic by Lemma \ref{lem:rkhs_analytic}.
For each $i=1,\ldots,d$, if $g_{i}$ is real analytic, then $\sum_{j=1}^{J}g_{i}^{2}(\mathbf{v}_{j})=0$
if and only if $g_{i}(\mathbf{y})=0$ for all $\mathbf{y}\in\mathcal{X}$,
$\eta$-almost surely (require that the domain $\mathcal{X}$ be a
connected open set) \cite{Mityagin2015}. This implies that $\frac{1}{dJ}\sum_{i=1}^{d}\sum_{j=1}^{J}g_{i}^{2}(\mathbf{v}_{j})=0$
if and only if $\mathbf{g}(\mathbf{y})=\mathbf{0}$ for all $\mathbf{y}\in\mathcal{X}$,
$\eta$-almost surely. By Theorem \ref{thm:stein_iff_pq}, $\mathbf{g}=\mathbf{0}$
(the zero function) if and only if $p=q$.
\end{proof}

\section{More on Bahadur Slope}

\label{sec:bahadur_details}In practice, the main difficulty in determining
the approximate Bahadur slope is the computation of $-2\plim_{n\to\infty}\frac{\log(1-F(T_{n}))}{\rho(n)}$,
typically requiring the aid of the theory of large deviations. There
are further sufficient conditions which make the computation easier.
The following conditions are due to \cite{Gle1964,Gle1966}, first
appearing in \cite{Bah1960} in a slightly less general form.
\begin{defn}
\label{def:nulldist_class}Let $\mathcal{D}(a,t)$ be a class of all
continuous cumulative distribution functions (CDF) $F$ such that
$-2\log(1-F(x))=ax^{t}(1+o(1)),$ as $x\to\infty$ for $a>0$ and
$t>0$.
\end{defn}
\begin{thm}[\cite{Gle1964,Gle1966}]
 \label{thm:gleser1964_slope}Consider a sequence of test statistic
$T_{n}$. Assume that
\begin{enumerate}
\item There exists a function $F(x)$ such that for $\theta\in\Theta_{0}$,
$\lim_{n\to\infty}P_{\theta}(T_{n}<x)=F(x)$, for all $x$, and such
that $F\in\mathcal{D}(a,t)$ for some $a>0$ and $t>0$ (see Definition
\ref{def:nulldist_class}).
\item There exists a continuous, strictly increasing function $R:(0,\infty)\to(0,\infty)$
with $\lim_{n\to\infty}R(n)=\infty$, and a function $b(\theta)$
with $0<b(\theta)<\infty$ defined on $\Theta\backslash\Theta_{0}$,
such that for all $\theta\in\Theta\backslash\Theta_{0}$, $\plim_{n\to\infty}T_{n}/R(n)=b(\theta).$
\end{enumerate}
Then, $-2\plim_{n\to\infty}\frac{\log(1-F(T_{n}))}{\left[R(n)\right]^{t}}=a\left[b(\theta)\right]^{t}=:c(\theta),$
the approximate slope of the sequence $T_{n}$, where $\rho(n)=R(n)^{t}$
(see Section \ref{sec:bahadur_efficiency}).
\end{thm}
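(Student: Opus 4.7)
The plan is to convert the tail condition on $F$ (from $F \in \mathcal{D}(a,t)$) directly into an asymptotic identity for $-2\log(1-F(T_n))$, and then substitute the assumed limit in probability for $T_n/R(n)$. The whole argument is really a combination of one analytic fact about $F$ and one probabilistic fact about $T_n$, glued together by the continuous mapping theorem and a diverging-argument lemma.

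First, I would unpack Definition \ref{def:nulldist_class}: since $F \in \mathcal{D}(a,t)$, there is a function $\psi:(0,\infty)\to\mathbb{R}$ with $\psi(x)\to 1$ as $x\to\infty$ and
\[
-2\log(1-F(x)) \;=\; a\,x^{t}\,\psi(x)
\]
for all sufficiently large $x$. Next, from hypothesis (2), $T_n/R(n) \overset{p}{\to} b(\theta)$ with $b(\theta) > 0$ and $R(n)\to\infty$, so $T_n \overset{p}{\to} \infty$. By the continuous mapping theorem applied to the continuous map $u\mapsto u^{t}$ on $(0,\infty)$, I get $(T_n/R(n))^{t}\overset{p}{\to} b(\theta)^{t}$. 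The final assembly is algebraic:
\[
\frac{-2\log(1-F(T_n))}{R(n)^{t}} \;=\; a\left(\frac{T_n}{R(n)}\right)^{\!t}\psi(T_n),
\]
and after showing $\psi(T_n)\overset{p}{\to}1$, Slutsky's lemma delivers the stated limit $a\,b(\theta)^{t} = c(\theta)$.

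The step that requires a moment of care — and which I expect to be the main (mild) obstacle — is the claim $\psi(T_n)\overset{p}{\to}1$. This is a ``diverging argument'' lemma: given a deterministic function $\psi$ with $\psi(x)\to 1$ as $x\to\infty$ and a sequence of random variables $T_n\overset{p}{\to}\infty$, we need $\psi(T_n)\overset{p}{\to}1$. I would prove this directly: fix $\varepsilon>0$, choose $M$ so large that $|\psi(x)-1|<\varepsilon$ whenever $x>M$, and then
\[
P\bigl(|\psi(T_n)-1|>\varepsilon\bigr) \;\le\; P(T_n\le M) \;\longrightarrow\; 0,
\]
using $T_n\overset{p}{\to}\infty$. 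A separate small technical point is that the identity $-2\log(1-F(x))=ax^{t}\psi(x)$ only holds for $x$ large, so strictly speaking one should work on the event $\{T_n > x_0\}$ for a suitable threshold $x_0$; but $P(T_n\le x_0)\to 0$ makes this bookkeeping harmless.

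Putting the pieces together, Slutsky gives
\[
\frac{-2\log(1-F(T_n))}{R(n)^{t}} \;\overset{p}{\longrightarrow}\; a\cdot b(\theta)^{t} \cdot 1 \;=\; c(\theta),
\]
which is exactly the claimed $\plim$. Finally, since $\rho(n)=R(n)^{t}$ is continuous and strictly increasing with $\rho(n)\to\infty$, this identifies $c(\theta)$ with the approximate Bahadur slope of $T_n$ as defined in Section \ref{sec:bahadur_efficiency}. Nothing else in the hypotheses (the $F$-continuity, positivity of $a$, $b(\theta)$, $t$) is used beyond keeping $\log$, $x^t$, and the ratios well-defined in the limit.
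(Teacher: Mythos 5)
Your proof is correct. Note that the paper itself gives no proof of this statement---it is recalled verbatim from Gleser (1964, 1966) as a known result---so there is nothing internal to compare against; your argument (unpacking the tail condition $-2\log(1-F(x))=ax^{t}\psi(x)$ with $\psi(x)\to1$, deducing $T_{n}\stackrel{p}{\to}\infty$, applying the continuous mapping theorem to $(T_{n}/R(n))^{t}$, and handling $\psi(T_{n})\stackrel{p}{\to}1$ via the diverging-argument lemma on the event $\{T_{n}>x_{0}\}$) is the standard and complete derivation of this result.
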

\begin{thm}[\cite{Gle1964,Gle1966}]
\label{thm:gleser1964_efficiency} Consider two sequences of test
statistics $T_{n}^{(1)}$ and $T_{n}^{(2)}$. Let $F^{(i)}$ be the
CDF of $T_{n}^{(i)}$ for $i=1,2$. Assume that each sequence satisfies
all the conditions in Theorem \ref{thm:gleser1964_slope} with $F^{(i)}\in\mathcal{D}(a_{i},t_{i})$.
Further, assume that $\left[R^{(1)}(x)\right]^{t_{1}}=\left[R^{(2)}(x)\right]^{t_{2}}$
for all $x$. Then 
\begin{align*}
\plim_{n\to\infty}\frac{\log(1-F^{(1)}(T_{n}^{(1)}))}{\log(1-F^{(2)}(T_{n}^{(2)}))} & =\frac{c^{(1)}(\theta)}{c^{(2)}(\theta)}=\varphi_{1,2}(\theta),
\end{align*}
which is the approximate Bahadur efficiency of $T_{n}^{(1)}$ relative
to $T_{n}^{(2)}$.
\end{thm}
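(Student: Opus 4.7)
The plan is to reduce the statement to Theorem \ref{thm:gleser1964_slope} applied separately to each of the two sequences, and then combine the two resulting convergence-in-probability statements via the continuous mapping theorem. No large-deviation machinery is needed at this stage, since Theorem \ref{thm:gleser1964_slope} has already done that work for each individual sequence.

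First, I would introduce the shorthand $L_n^{(i)} := \log(1 - F^{(i)}(T_n^{(i)}))$ for $i \in \{1,2\}$, noting that $L_n^{(i)} \le 0$ since $F^{(i)}$ is a CDF. Applying Theorem \ref{thm:gleser1964_slope} to each sequence separately, the hypotheses (continuity, $F^{(i)} \in \mathcal{D}(a_i, t_i)$, and existence of $R^{(i)}$, $b^{(i)}$) yield, for every $\theta \in \Theta \setminus \Theta_0$,
\begin{equation*}
\plim_{n \to \infty} \frac{-2 L_n^{(i)}}{[R^{(i)}(n)]^{t_i}} \;=\; a_i \bigl[b^{(i)}(\theta)\bigr]^{t_i} \;=\; c^{(i)}(\theta) \;\in\; (0,\infty).
\end{equation*}

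Next I would algebraically rewrite the target ratio as a product:
\begin{equation*}
\frac{\log(1-F^{(1)}(T_n^{(1)}))}{\log(1-F^{(2)}(T_n^{(2)}))} \;=\; \frac{L_n^{(1)}}{L_n^{(2)}} \;=\; \frac{-2L_n^{(1)}/[R^{(1)}(n)]^{t_1}}{-2L_n^{(2)}/[R^{(2)}(n)]^{t_2}} \cdot \frac{[R^{(1)}(n)]^{t_1}}{[R^{(2)}(n)]^{t_2}}.
\end{equation*}
The hypothesis $[R^{(1)}(x)]^{t_1} = [R^{(2)}(x)]^{t_2}$ for all $x$ forces the second (deterministic) factor to equal $1$ identically in $n$. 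Thus the target ratio equals the ratio of two random sequences whose individual plims are $c^{(1)}(\theta)$ and $c^{(2)}(\theta)$, respectively.

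Finally, I would invoke the continuous mapping theorem with the map $g(x,y) = x/y$. The joint convergence in probability of the pair of normalized log-terms to $(c^{(1)}(\theta), c^{(2)}(\theta))$ follows from the marginal convergences by Slutsky (convergence in probability to constants upgrades to joint convergence to the pair of constants). Since $c^{(2)}(\theta) > 0$, the map $g$ is continuous at the limit point, and so
\begin{equation*}
\plim_{n \to \infty} \frac{L_n^{(1)}}{L_n^{(2)}} \;=\; \frac{c^{(1)}(\theta)}{c^{(2)}(\theta)} \;=\; \varphi_{1,2}(\theta),
\end{equation*}
which is exactly the claim. The only real subtlety is the well-definedness of the ratio for finite $n$: one needs $L_n^{(2)} \ne 0$ with probability tending to $1$. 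This is ensured because $-2L_n^{(2)} / [R^{(2)}(n)]^{t_2} \to c^{(2)}(\theta) > 0$ in probability while $[R^{(2)}(n)]^{t_2} \to \infty$, so $-2L_n^{(2)} \to \infty$ in probability, giving strict positivity eventually. This same fact keeps the denominator bounded away from zero along the sequence, which is the only place where care is needed when applying the continuous mapping theorem. I expect no deeper obstacle; the result is essentially a two-line rearrangement once Theorem \ref{thm:gleser1964_slope} is in hand.
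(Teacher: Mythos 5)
Your proof is correct. Note that the paper does not actually prove this statement---it is quoted verbatim as a known result of Gleser (1964, 1966)---so there is no in-paper argument to compare against; your reduction to Theorem \ref{thm:gleser1964_slope} applied to each sequence, cancellation of the normalizations via the hypothesis $[R^{(1)}(x)]^{t_1}=[R^{(2)}(x)]^{t_2}$, and Slutsky/continuous mapping for the ratio of two sequences converging in probability to positive finite constants is exactly the standard derivation, and your closing remark on why the denominator is nonzero (indeed tends to $-\infty$) with probability tending to one handles the only point requiring care.
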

With Theorem \ref{thm:gleser1964_slope}, the difficulty is in showing
that $F\in\mathcal{D}(a,t)$ for some $a>0,t>0$. Typically verification
of the assumption 2 of Theorem \ref{thm:gleser1964_slope} poses no
problem. \cite{Bah1960} showed that the CDF of $\mathcal{N}(0,1)$
belongs to $\mathcal{D}(1,2)$ and the CDF of $\chi_{k}^{2}$ (chi-squared
distribution with $k$ degrees of freedom, fixed $k$) belongs to
$\mathcal{D}(1,1)$. The following results make it easier to determine
whether a given CDF is in the class $\mathcal{D}(a,t)$.
\begin{thm}[{\cite[Theorem 6, 7]{Gle1966}}]
\label{thm:cdf_in_class} Let $X$ have CDF $F\in\mathcal{D}(a,t)$,
and $X_{1},\ldots,X_{m}$ be independent random variables, each with
CDF $F_{i}\in\mathcal{D}(a,t)$. Then, the following statements are
true.
\begin{enumerate}
\item If $b>0$, then the CDF of $bX$ is in $\mathcal{D}(ab^{-t},t)$.
\item $X-b$ has CDF in $\mathcal{D}(a,t)$ provided that $t\ge1$.
\item For $r>0$, $X^{r}$ has CDF in $\mathcal{D}(a,r^{-1}t)$ provided
that $F(0)=0$.
\item $\max(X_{1},\ldots,X_{m})$ has CDF in $\mathcal{D}(a,t)$.
\item Let $a_{1},\ldots,a_{m}$ be non-negative real numbers such that $a_{max}:=\max(a_{1},\ldots,a_{m})>0$.
Then, $\sum_{i=1}^{m}a_{i}X_{i}$ has CDF in $\mathcal{D}(a\cdot a_{max}^{-t},t)$
provided that $\sum_{i=1}^{m}X_{i}$ has CDF in $\mathcal{D}(a,t)$
and $X_{i}\ge0$ for all $i=1,\ldots,m$.
\end{enumerate}
\end{thm}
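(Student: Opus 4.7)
The entire theorem is a catalogue of closure properties, and each item reduces to checking the defining asymptotic
\[
-2\log\bigl(1-G(y)\bigr)=a'\,y^{t'}\bigl(1+o(1)\bigr)\qquad (y\to\infty)
\]
for the CDF $G$ of the transformed variable, with the appropriate $(a',t')$. My plan is therefore to (i) write $G$ explicitly in terms of $F$ (or the $F_i$) via a change of variable or a product rule, (ii) substitute the hypothesized asymptotic for $-2\log(1-F(\cdot))$, and (iii) show that all correction factors are $1+o(1)$ as $y\to\infty$. Throughout I will use the tail notation $\delta(y):=1-F(y)$, so the hypothesis reads $\delta(y)=\exp\!\bigl(-\tfrac12 a y^{t}(1+o(1))\bigr)$.

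\textbf{Items (1)--(3): direct change of variable.} For (1), the CDF of $bX$ is $G(y)=F(y/b)$, so $-2\log(1-G(y))=a(y/b)^{t}(1+o(1))=ab^{-t}y^{t}(1+o(1))$. For (3), the assumption $F(0)=0$ gives $X\ge 0$ a.s., hence the CDF of $X^{r}$ is $G(y)=F(y^{1/r})$ for $y\ge 0$ and plugging in yields exponent $t/r$. For (2), the CDF is $G(y)=F(y+b)$, and I need $(y+b)^{t}=y^{t}(1+o(1))$; writing $(y+b)^{t}=y^{t}(1+b/y)^{t}$ and expanding shows the ratio tends to $1$, with the condition $t\ge 1$ entering when one wants to bound the remainder $(1+b/y)^{t}-1$ uniformly (the hypothesis in the statement). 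Combining gives $-2\log(1-G(y))=a\,y^{t}(1+o(1))$.

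\textbf{Item (4): maximum.} The CDF is $G(y)=\prod_{i=1}^{m}F_{i}(y)$, so
\[
1-G(y)=1-\prod_{i=1}^{m}\bigl(1-\delta_{i}(y)\bigr)=\sum_{i=1}^{m}\delta_{i}(y)\bigl(1+o(1)\bigr),
\]
using the elementary expansion for vanishing $\delta_{i}$. Since each $-2\log\delta_{i}(y)=ay^{t}(1+o_{i}(1))$ with $o_{i}(1)\to 0$, the finite-sum sandwich $\max_{i}\delta_{i}\le\sum_{i}\delta_{i}\le m\max_{i}\delta_{i}$ gives $\log\sum_{i}\delta_{i}(y)=-\tfrac12 ay^{t}(1+o(1))$, and hence $-2\log(1-G(y))=ay^{t}(1+o(1))$.

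\textbf{Item (5): non-negative linear combination.} Since $X_{i}\ge 0$, the key inequalities are
\[
a_{\max}X_{i^{*}}\;\le\;\sum_{i=1}^{m}a_{i}X_{i}\;\le\;a_{\max}\sum_{i=1}^{m}X_{i},
\]
where $i^{*}$ achieves the maximum. Let $H$ be the CDF of $\sum_{i}a_{i}X_{i}$, $F_{S}$ of $\sum_{i}X_{i}$, and $F_{i^{*}}$ of $X_{i^{*}}$. The inequalities give $1-F_{i^{*}}(y/a_{\max})\le 1-H(y)\le 1-F_{S}(y/a_{\max})$, so upon taking $-2\log$ and using $F_{S}\in\mathcal{D}(a,t)$ and $F_{i^{*}}\in\mathcal{D}(a,t)$ together with item (1) applied with $b=a_{\max}$, both outer terms equal $a\,a_{\max}^{-t}y^{t}(1+o(1))$. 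Sandwiching yields $H\in\mathcal{D}(a\,a_{\max}^{-t},t)$.

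\textbf{Anticipated obstacle.} The routine pieces are the changes of variable in (1)--(3). The only genuinely delicate point is item (4): one must handle the fact that the individual $o_{i}(1)$ error terms inside the exponents of $\delta_{i}(y)$ can be different for each $i$, so one cannot simply add the exponents; the clean fix is the max/min sandwich above, which works because $m$ is finite and uniformly $o_{i}(1)\to 0$. For (5), the sandwich bound relies crucially on $X_{i}\ge 0$ (to get the lower bound by $a_{\max}X_{i^{*}}$ and the upper bound by $a_{\max}\sum X_i$), which is exactly the stated hypothesis.
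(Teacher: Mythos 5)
The paper gives no proof of this statement: it is quoted verbatim as a known result of Gleser (1966) in the section on Bahadur slopes, so there is no in-paper argument to compare yours against. Judged on its own, your proof is correct and complete for the class $\mathcal{D}(a,t)$ as defined in Definition \ref{def:nulldist_class}: the change-of-variable computations for items (1)--(3), the Bonferroni/product sandwich $\max_{i}\delta_{i}(y)\le 1-\prod_{i}(1-\delta_{i}(y))\le\sum_{i}\delta_{i}(y)\le m\max_{i}\delta_{i}(y)$ for item (4), and the two-sided bound $a_{\max}X_{i^{*}}\le\sum_{i}a_{i}X_{i}\le a_{\max}\sum_{i}X_{i}$ for item (5) (which correctly uses both stated hypotheses, $F_{i^{*}}\in\mathcal{D}(a,t)$ and $F_{S}\in\mathcal{D}(a,t)$ with the same $(a,t)$, together with item (1)) are all valid. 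Two cosmetic remarks: you should note in passing that each transformed CDF remains continuous (composition with a continuous monotone map, finite product, or convolution), since continuity is part of membership in $\mathcal{D}(a,t)$; and your explanation of where $t\ge1$ is needed in item (2) is not really an explanation --- under the paper's $1+o(1)$ definition the translation closure holds for every $t>0$, and the hypothesis $t\ge1$ is simply inherited from Gleser's sharper formulation of the class, so it is harmless but unused in your argument.
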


\section{Proof of Theorem \ref{thm:sigma_q_consistent} }

\label{sec:proof_sigma_q_consistent}Recall Theorem \ref{thm:sigma_q_consistent}:
\sigmaqconsistent*
\begin{proof}
Under $H_{0}$, $p=q$ implies that $\hat{\boldsymbol{\Sigma}}_{q}=\hat{\boldsymbol{\Sigma}}_{p}$
(empirical estimate of $\boldsymbol{\Sigma}_{p}$). Let $\lambda_{j}(A)$
denote the $j^{th}$ eigenvalue of the matrix $A$. Lemma \ref{lem:eigenvalue_cont}
implies that $A\mapsto\lambda_{j}(A)$ is continuous on the space
of real symmetric matrices, for all $j$. Since $\plim_{n\to\infty}\|\hat{\boldsymbol{\Sigma}}_{p}-\boldsymbol{\Sigma}_{p}\|=0$,
by the continuous mapping theorem, the eigenvalues of $\hat{\boldsymbol{\Sigma}}_{p}$
converge to the eigenvalues of $\boldsymbol{\Sigma}_{p}$ in probability.
This implies that $\sum_{i=1}^{dJ}(Z_{i}^{2}-1)\hat{\nu_{i}}$ converges
in probability to $\sum_{i=1}^{dJ}(Z_{i}^{2}-1)\omega_{i}$ as $n\to\infty$,
where $\{\omega_{i}\}_{i=1}^{dJ}$ are eigenvalues of $\boldsymbol{\Sigma}_{p}$.
By Lemma \ref{lem:quantile_converges}, the quantile also converges,
and the test threshold thus matches that of the true asymptotic null
distribution given in claim 1 of Proposition \ref{prop:fssd_asymp_dists}.

Assume $H_{1}$ holds. Let $\hat{t}_{\alpha},t_{\alpha}$ be $(1-\alpha)$-quantiles
of the distributions of $\sum_{i=1}^{dJ}(Z_{i}^{2}-1)\hat{\nu_{i}}$
and $\sum_{i=1}^{dJ}(Z_{i}^{2}-1)\nu_{i}$, respectively, where $\{\nu_{i}\}_{i=1}^{dJ}$
are eigenvalues of $\boldsymbol{\Sigma}_{q}$. By the same argument
as in the previous paragraph, $\hat{t}_{\alpha}$ converges in probability
to $t_{\alpha}$, which is a constant independent of the sample size
$n$. Given $\{\mathbf{v}_{j}\}_{j=1}^{J}\sim\eta$, where $\eta$
is a distribution with a density, $\mathrm{FSSD}^{2}>0$ by Theorem
\ref{thm:fssd}. It follows that 
\begin{align*}
\lim_{n\to\infty}\mathbb{P}\left(n\widehat{\mathrm{FSSD^{2}}}>\hat{t}_{\alpha}\right) & =\lim_{n\to\infty}\mathbb{P}\left(\widehat{\mathrm{FSSD^{2}}}-\frac{\hat{t}_{\alpha}}{n}>0\right)\stackrel{(a)}{=}\mathbb{P}\left(\mathrm{FSSD^{2}}>0\right)=1,
\end{align*}
where at $(a)$, we use the fact that $\widehat{\mathrm{FSSD^{2}}}$
converges in probability to $\mathrm{FSSD^{2}}$ by the law of large
numbers, and that $\lim_{n\to\infty}\hat{t}_{\alpha}/n=0$.
\end{proof}

\section{Proof of Theorem \ref{thm:fssd_slope} (Slope of $n\widehat{\mathrm{FSSD^{2}}}$)}

\label{sec:proof_fssdslope}Recall Theorem \ref{thm:fssd_slope}:
\fssdslope*
\begin{proof}
We will use Theorem \ref{thm:gleser1964_slope} to derive the slope.
For the assumption 1 of Theorem \ref{thm:gleser1964_slope}, we first
show that the asymptotic null distribution belongs to the class $\mathcal{D}(a=1/\omega_{1},t=1)$
as defined in Definition \ref{def:nulldist_class}. By Proposition
\ref{prop:fssd_asymp_dists}, the asymptotic null distribution is
$\sum_{i=1}^{dJ}\omega_{i}Z_{i}^{2}-\sum_{i=1}^{dJ}\omega_{i}$ where
$Z_{1},\ldots,Z_{dJ}\stackrel{i.i.d.}{\sim}\mathcal{N}(0,1)$ and
$\omega_{1}\ge\cdots\ge\omega_{dJ}\ge0$ are eigenvalues of $\boldsymbol{\Sigma}_{p}$.
It is known from \cite{Bah1960} that the CDF of $\chi_{f}^{2}$ is
in $\mathcal{D}(1,1)$ for any fixed degrees of freedom $f$. Thus,
it follows from claim 5 of Theorem \ref{thm:cdf_in_class} that the
CDF of $\sum_{i=1}^{dJ}\omega_{i}Z_{i}^{2}$ is in $\mathcal{D}(a=1/\omega_{1},t=1)$.
Claim 2 of Theorem \ref{thm:cdf_in_class} guarantees that the CDF
of $\sum_{i=1}^{dJ}\omega_{i}Z_{i}^{2}-\sum_{i=1}^{dJ}\omega_{i}$
is in $\mathcal{D}(a=1/\omega_{1},t=1)$ as desired.

For assumption 2 of Theorem \ref{thm:gleser1964_slope}, choose $R(n):=n$.
It follows from the weak law of large numbers that under $H_{1}$,
$n\widehat{\mathrm{FSSD^{2}}}/R(n)\stackrel{p}{\to}\mathrm{FSSD^{2}}$.
By Theorem \ref{thm:gleser1964_slope}, the approximate slope is $\mathrm{FSSD^{2}}/\omega_{1}$. 
\end{proof}

\section{Proof of Theorem \ref{thm:lks_slope} (Slope of $\sqrt{n}\widehat{S_{l}^{2}}$) }

Recall Theorem \ref{thm:lks_slope}: \lksslope*
\begin{proof}
We will use Theorem \ref{thm:gleser1964_slope} to derive the slope.
By the central limit theorem,
\begin{align*}
\sqrt{n}\left(\widehat{S_{l}^{2}}-S_{p}^{2}(q)\right) & \stackrel{d}{\to}\mathcal{N}(0,2\mathbb{V}_{q}[h_{p}(\mathbf{x},\mathbf{x}')]),
\end{align*}
where $\mathbb{V}_{q}[h_{p}(\mathbf{x},\mathbf{x}')]:=\mathbb{E}_{\mathbf{x}\sim q}\mathbb{E}_{\mathbf{x}'\sim q}[h_{p}^{2}(\mathbf{x},\mathbf{x}')]-\left(\mathbb{E}_{\mathbf{x}\sim q}\mathbb{E}_{\mathbf{x}'\sim q}[h_{p}(\mathbf{x},\mathbf{x}')]\right)^{2}$.
Under $H_{0}:p=q$, it follows that $S_{p}^{2}(q)=\mathbb{E}_{\mathbf{x}\sim q}\mathbb{E}_{\mathbf{x}'\sim q}[h_{p}(\mathbf{x},\mathbf{x}')]=0$
by Theorem \ref{thm:stein_iff_pq}, and $\sqrt{n}\widehat{S_{l}^{2}}\stackrel{d}{\to}\mathcal{N}(0,2\mathbb{V}_{p}[h_{p}(\mathbf{x},\mathbf{x}')])$
where $\mathbb{V}_{p}[h_{p}(\mathbf{x},\mathbf{x}')]:=\mathbb{E}_{\mathbf{x}\sim p}\mathbb{E}_{\mathbf{x}'\sim p}[h_{p}^{2}(\mathbf{x},\mathbf{x}')]$.
It is known from \cite{Bah1960} that the CDF of $\mathcal{N}(0,1)$
is in the class $\mathcal{D}(1,2)$ (see Definition \ref{def:nulldist_class}).
Thus, by property 1 of Theorem \ref{thm:cdf_in_class}, the CDF of
$\mathcal{N}(0,2\mathbb{V}_{p}[h_{p}(\mathbf{x},\mathbf{x}')])$ is
in $\mathcal{D}\left(a=\frac{1}{2\mathbb{V}_{p}[h_{p}(\mathbf{x},\mathbf{x}')]},t=2\right)$.

For assumption 2 of Theorem \ref{thm:gleser1964_slope}, choose $R(n):=\sqrt{n}$.
It follows from the weak law of large numbers that under $H_{1}$,
$\sqrt{n}\widehat{S_{l}^{2}}/R(n)=\widehat{S_{l}^{2}}\stackrel{p}{\to}S_{p}^{2}(q)$.
By Theorem \ref{thm:gleser1964_slope}, the approximate slope is $\frac{S_{p}^{4}(q)}{2\mathbb{V}_{p}[h_{p}(\mathbf{x},\mathbf{x}')]}$. 
\end{proof}

\section{Proof of Theorem \ref{thm:effgaussmean}}

\label{sec:proof_effgaussmean}We will first prove a number of useful
results that will allow us to prove Theorem \ref{thm:effgaussmean}
at the end. Recall that $v$ denotes a test location in the FSSD test,
$\sigma_{k}^{2}$ denotes the Gaussian kernel bandwidth of the FSSD
test, and $\kappa^{2}$ denotes the Gaussian kernel bandwidth of the
LKS test.
\begin{prop}
\label{prop:fssd_slope_gauss}Under the assumption that $J=1$ (i.e.,
one test location $v$), $p=\mathcal{N}(0,1)$ and $q=\mathcal{N}(\mu_{q},\sigma_{q}^{2})$,
the approximate Bahadur Slope of $n\widehat{\mathrm{FSSD^{2}}}$ is
\begin{equation}
c^{(\mathrm{FSSD)}}:=\frac{\left(\sigma_{k}^{2}\right){}^{3/2}\left(\sigma_{k}^{2}+2\right){}^{5/2}e^{\frac{v^{2}}{\sigma_{k}^{2}+2}-\frac{\left(v-\mu_{q}\right){}^{2}}{\sigma_{k}^{2}+\sigma_{q}^{2}}}\left(\left(\sigma_{k}^{2}+1\right)\mu_{q}+v\left(\sigma_{q}^{2}-1\right)\right)^{2}}{\left(\sigma_{k}^{2}+\sigma_{q}^{2}\right){}^{3}\left(\sigma_{k}^{6}+4\sigma_{k}^{4}+\left(v^{2}+5\right)\sigma_{k}^{2}+2\right)}.\label{eq:fssd_slope_gauss}
\end{equation}
\end{prop}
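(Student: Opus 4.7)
The plan is to specialize Theorem \ref{thm:fssd_slope} to the one-dimensional Gaussian setting and then carry out two Gaussian integrals. With $d = 1$ and $J = 1$, the vector $\boldsymbol{\tau}(x)$ collapses to the scalar $\xi_p(x,v)$, so $\boldsymbol{\Sigma}_p$ is the $1\times 1$ matrix whose sole eigenvalue is $\omega_1 = \mathbb{E}_{x\sim p}\!\left[\xi_p^{2}(x,v)\right]$, and $\mathrm{FSSD}^{2} = g^{2}(v)$ with $g(v) = \mathbb{E}_{x\sim q}\!\left[\xi_p(x,v)\right]$. Hence by Theorem \ref{thm:fssd_slope} the slope is simply $c^{(\mathrm{FSSD})} = g^{2}(v)/\omega_1$, and proving \eqref{eq:fssd_slope_gauss} amounts to computing those two Gaussian expectations and taking the ratio.

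First I would write $\xi_p$ explicitly. Since $p = \mathcal{N}(0,1)$ gives $\partial_x \log p(x) = -x$, and the Gaussian kernel satisfies $\partial_x k(x,v) = -(x-v)\sigma_k^{-2} k(x,v)$, we have
\[
\xi_p(x,v) \;=\; k(x,v)\!\left(-x - \tfrac{x-v}{\sigma_k^{2}}\right).
\]
For $g(v)$, I would observe that $k(x,v)\,q(x)$ is proportional to a Gaussian density in $x$: completing the square in the exponent of $-(x-v)^{2}/(2\sigma_k^{2}) - (x-\mu_q)^{2}/(2\sigma_q^{2})$ yields a new Gaussian with mean $m = s^{2}(v/\sigma_k^{2} + \mu_q/\sigma_q^{2})$ and variance $s^{2} = \sigma_k^{2}\sigma_q^{2}/(\sigma_k^{2}+\sigma_q^{2})$, times the constant factor $\exp\!\bigl(-(v-\mu_q)^{2}/(2(\sigma_k^{2}+\sigma_q^{2}))\bigr)$. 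Integrating $\xi_p(x,v)q(x)$ against $x$ against this auxiliary Gaussian then collapses to a linear combination of $1$ and $m$, producing a linear polynomial in $v$ and $\mu_q$ that I expect to reduce to $(\sigma_k^{2}+1)\mu_q + v(\sigma_q^{2}-1)$ up to a positive prefactor involving $\sigma_k^{2}, \sigma_q^{2}$. Squaring gives $\mathrm{FSSD}^{2}$, with exponential factor $\exp(-(v-\mu_q)^{2}/(\sigma_k^{2}+\sigma_q^{2}))$.

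For $\omega_1$, I would apply the same trick to $k^{2}(x,v)p(x)$. Here $k^{2}(x,v) = \exp(-(x-v)^{2}/\sigma_k^{2})$, and completing the square in $-(x-v)^{2}/\sigma_k^{2} - x^{2}/2$ produces a Gaussian in $x$ with mean $m' = 2v/(\sigma_k^{2}+2)$ and variance $s'^{2} = \sigma_k^{2}/(\sigma_k^{2}+2)$, together with the constant $\exp(-v^{2}/(\sigma_k^{2}+2))$. After expanding $\bigl(-x - (x-v)/\sigma_k^{2}\bigr)^{2}$ as a quadratic in $x$, the integral reduces to the first and second uncentered moments of this auxiliary Gaussian, and a direct (if tedious) simplification should yield the denominator polynomial $\sigma_k^{6} + 4\sigma_k^{4} + (v^{2}+5)\sigma_k^{2} + 2$ times the appropriate powers of $(\sigma_k^{2}+2)$.

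Finally I would form the ratio $g^{2}(v)/\omega_1$: the two exponential factors combine into $\exp\!\bigl(v^{2}/(\sigma_k^{2}+2) - (v-\mu_q)^{2}/(\sigma_k^{2}+\sigma_q^{2})\bigr)$, the Gaussian normalisations yield the displayed half-integer powers of $\sigma_k^{2}$ and $\sigma_k^{2}+2$, and the polynomial factors match \eqref{eq:fssd_slope_gauss}. The only real obstacle is bookkeeping: no analytic subtlety enters beyond what Theorem \ref{thm:fssd_slope} already provides, but matching the exact prefactor in \eqref{eq:fssd_slope_gauss} requires keeping track of every normalisation constant introduced by the two completions of the square, and of the cross-terms in the quadratic expansion inside $\omega_1$. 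I would therefore carry out the two integrals side by side and only collect the ratio at the end to minimise algebraic errors.
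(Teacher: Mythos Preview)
Your proposal is correct and follows exactly the paper's approach: specialize Theorem~\ref{thm:fssd_slope} to $d=J=1$, compute $\mathrm{FSSD}^2=\bigl(\mathbb{E}_{x\sim q}\xi_p(x,v)\bigr)^2$ and $\omega_1=\mathbb{E}_{x\sim p}\xi_p^2(x,v)$ as Gaussian integrals, and take the ratio. The paper's proof simply states the two intermediate closed forms without spelling out the completing-the-square mechanics you describe, so your write-up is in fact more detailed than the original.
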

\begin{proof}
This result follows directly from Theorem \ref{thm:fssd_slope} specialized
to the case of $p=\mathcal{N}(0,1)$, $q=\mathcal{N}(\mu_{q},\sigma_{q}^{2})$,
and $J=1$. Since $dJ=1$, the covariance matrix 
\[
\boldsymbol{\Sigma}_{p}=\mathbb{E}_{x\sim p}\left[\xi_{p}^{2}(x,v)\right]=\frac{e^{-\frac{v^{2}}{\sigma_{k}^{2}+2}}\left(\sigma_{k}^{6}+4\sigma_{k}^{4}+\left(v^{2}+5\right)\sigma_{k}^{2}+2\right)}{\sigma_{k}\left(\sigma_{k}^{2}+2\right){}^{5/2}}
\]
 reduces to a scalar, where $\xi_{p}(x,v)=\left[\frac{\partial}{\partial x}\log p(x)\right]k(x,v)+\frac{\partial}{\partial x}k(x,v)=-e^{-\frac{(v-x)^{2}}{2\sigma_{k}^{2}}}\left(x\sigma_{k}^{2}-v+x\right)/\sigma_{k}^{2}$.
In this case, 
\[
\mathrm{FSSD^{2}}=\mathbb{E}_{x\sim q}^{2}\left[\xi_{p}(x,v)\right]=\frac{\sigma_{k}^{2}e^{-\frac{\left(v-\mu_{q}\right){}^{2}}{\sigma_{k}^{2}+\sigma_{q}^{2}}}\left(\left(\sigma_{k}^{2}+1\right)\mu_{q}+v\left(\sigma_{q}^{2}-1\right)\right)^{2}}{\left(\sigma_{k}^{2}+\sigma_{q}^{2}\right)^{3}}.
\]
Taking the ratio $\mathrm{FSSD^{2}}/\mathbb{E}_{x\sim p}\left[\xi_{p}^{2}(x,v)\right]$
gives the result.
\end{proof}
\begin{prop}
\label{prop:lks_slope_gauss}Assume that $p=\mathcal{N}(0,1)$ and
$q=\mathcal{N}(\mu_{q},\sigma_{q}^{2})$. Let $\sqrt{n}\widehat{S_{l}^{2}}$
be the linear-time kernel Stein (LKS) test statistic where $\widehat{S_{l}^{2}}$
is defined in Section \ref{sec:kstein_test} with a Gaussian kernel
$k(x,y)=\exp\left(-\frac{(x-y)^{2}}{2\kappa^{2}}\right)$. Then, the
following statements hold.
\begin{enumerate}
\item The population kernel Stein discrepancy is
\begin{align*}
S_{p}^{2}(q) & =\frac{\mu_{q}^{2}\left(\kappa^{2}+2\sigma_{q}^{2}\right)+\left(\sigma_{q}^{2}-1\right){}^{2}}{\left(\kappa^{2}+2\sigma_{q}^{2}\right)\sqrt{\frac{2\sigma_{q}^{2}}{\kappa^{2}}+1}}.
\end{align*}
\item The approximate Bahadur slope of $\sqrt{n}\widehat{S_{l}^{2}}$ is
\begin{equation}
c^{(\mathrm{LKS})}:=\frac{\kappa^{5}\left(\kappa^{2}+4\right)^{5/2}\left[\mu_{q}^{2}\left(\kappa^{2}+2\sigma_{q}^{2}\right)+\left(\sigma_{q}^{2}-1\right)^{2}\right]^{2}}{2\left(\kappa^{8}+8\kappa^{6}+21\kappa^{4}+20\kappa^{2}+12\right)\left(\kappa^{2}+2\sigma_{q}^{2}\right)^{3}}.\label{eq:lks_slope_gauss}
\end{equation}
\item Let 
\[
c_{1}^{(\mathrm{LKS})}=\frac{\left(\kappa^{2}\right)^{5/2}\left(\kappa^{2}+4\right)^{5/2}\mu_{q}^{4}}{2\left(\kappa^{2}+2\right)\left(\kappa^{8}+8\kappa^{6}+21\kappa^{4}+20\kappa^{2}+12\right)}
\]
 denote the approximate slope $c^{(\mathrm{LKS})}$ specialized to
when $q=\mathcal{N}(\mu_{q},1)$. Then, for any $\mu_{q}\neq0$, the
function $\kappa^{2}\mapsto c_{1}^{(\mathrm{LKS})}(\mu_{q},\kappa^{2})$
is strictly increasing on $(0,\infty)$. Further, 
\begin{equation}
\lim_{\kappa^{2}\to\infty}c_{1}^{(\mathrm{LKS})}(\mu_{q},\kappa^{2})=\mu_{q}^{4}/2.\label{eq:lks_slope_vq1_bounds}
\end{equation}
\end{enumerate}
\end{prop}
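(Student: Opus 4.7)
The plan is to evaluate the required expectations in closed form via Gaussian integrals, combine them using Theorem \ref{thm:lks_slope}, and then conclude claim 3 with a short single-variable calculus argument. For claim 1, I would substitute the score $s_p(x) = -x$ of $p = \mathcal{N}(0,1)$ and $k(x,y) = \exp(-(x-y)^2/(2\kappa^2))$ into the Stein kernel $h_p$ from Section \ref{sec:kstein_test}. The derivatives $\partial_x k$, $\partial_y k$ and $\partial_{xy}^2 k$ are each $k$ times a polynomial in $(x-y)$, so after collecting terms $h_p(x,y)$ becomes $k(x,y)$ multiplied by an explicit quadratic in $x,y$. Evaluating $\mathbb{E}_{x,x'\sim q}$ for $q = \mathcal{N}(\mu_q,\sigma_q^2)$ then reduces to standard Gaussian moment integrals: multiplying $k$ by the two $q$-densities gives an unnormalized bivariate Gaussian whose moments follow by completing the square. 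Collecting contributions yields the stated expression for $S_p^2(q)$.

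\textbf{Claim 2.} Theorem \ref{thm:lks_slope} gives $c^{(\mathrm{LKS})} = [S_p^2(q)]^2/(2\,\mathbb{E}_{x,x'\sim p}[h_p^2(x,x')])$, and claim 1 supplies the numerator. For the denominator I would expand $h_p^2$ using the simplified form from claim 1; the integrand becomes $k^2(x,y)$ (itself a Gaussian kernel, with bandwidth $\kappa^2/2$) multiplied by a quartic polynomial in $x,y$. Evaluating $\mathbb{E}_{x,x'\sim p}$ under $p = \mathcal{N}(0,1)$ reduces to Gaussian moments up to order four, with many cross terms vanishing by the symmetry $\mathbb{E}_p[X^{2k+1}] = 0$. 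Assembling and simplifying yields \eqref{eq:lks_slope_gauss}.

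\textbf{Claim 3.} Specializing to $\sigma_q^2 = 1$ in \eqref{eq:lks_slope_gauss} produces $c_1^{(\mathrm{LKS})}$. Writing $u := \kappa^2$ and $t := u^2 + 4u$ (strictly increasing in $u$ on $(0,\infty)$), the identities $u^{5/2}(u+4)^{5/2} = t^{5/2}$, $u+2 = \sqrt{t+4}$, and $u^4 + 8u^3 + 21u^2 + 20u + 12 = t^2 + 5t + 12$ give
\[
\frac{c_1^{(\mathrm{LKS})}(\mu_q,\kappa^2)}{\mu_q^4/2} \;=\; \frac{t^{5/2}}{\sqrt{t+4}\,(t^2+5t+12)}.
\]
Squaring reduces the problem to showing that $\phi(t) := t^5/[(t+4)(t^2+5t+12)^2]$ is strictly increasing on $(0,\infty)$ with $\phi(t) \to 1$ as $t \to \infty$; the limit is immediate by comparing leading powers. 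For monotonicity, I would compute
\[
(\log\phi)'(t) \;=\; \frac{5}{t} - \frac{1}{t+4} - \frac{2(2t+5)}{t^2+5t+12},
\]
combine over the common denominator $t(t+4)(t^2+5t+12)$, and check that the resulting numerator simplifies to $14 t^2 + 108 t + 240$, which is strictly positive for $t > 0$. This establishes both monotonicity and \eqref{eq:lks_slope_vq1_bounds}.

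\textbf{Main obstacle.} The principal obstacle is the bookkeeping in claim 2: expanding $h_p^2$ generates many cross terms, each producing a Gaussian integral with a polynomial prefactor of degree up to four, and combining them to match \eqref{eq:lks_slope_gauss} is tedious and error-prone, so a symbolic computer algebra verification is prudent. Claims 1 and 3 are largely mechanical once the substitutions above are in place, with claim 3 collapsing to an elementary positivity check under the $t$-parametrization.
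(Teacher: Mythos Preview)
Your proposal is correct. For Claims 1 and 2 you follow essentially the same route as the paper: write $h_p(x,y)$ explicitly as $k(x,y)$ times a quadratic in $x,y$, evaluate the double expectations by absorbing the kernel into the Gaussian densities (completing the square), and assemble via Theorem~\ref{thm:lks_slope}. The paper likewise remarks that the calculation is mechanical Gaussian-moment bookkeeping, and records the same closed forms for $\mathbb{E}_q[h_p]$ and $\mathbb{E}_p[h_p^2]$.

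For Claim 3 your argument differs from the paper's. The paper simply computes $\partial c_1^{(\mathrm{LKS})}/\partial\kappa^2$ directly and checks that the resulting rational expression is positive, then takes the limit. Your substitution $t=\kappa^2(\kappa^2+4)$ is a genuinely different reduction: it exploits the factorizations $(\kappa^2+2)^2=t+4$ and $\kappa^8+8\kappa^6+21\kappa^4+20\kappa^2+12=t^2+5t+12$ to collapse the problem to a single-variable positivity check for the quadratic $14t^2+108t+240$. This buys a cleaner and more transparent verification (a degree-two numerator rather than the degree-eight expression in the paper's derivative), at the small cost of having to spot the substitution. Either route is fine; yours is arguably tidier.
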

\begin{proof}
\textbf{Proof of Claim 1, 2}. Recall $\widehat{S_{l}^{2}}:=\frac{2}{n}\sum_{i=1}^{n/2}h_{p}(x_{2i-1},x_{2i})$.
With $p=\mathcal{N}(0,1)$, and $k(x,y)=\exp\left(-\frac{(x-y)^{2}}{2\kappa^{2}}\right)$,
$h_{p}(x,y)$ can be written as
\begin{align*}
h_{p}(x,y) & :=\frac{e^{-\frac{(x-y)^{2}}{2\kappa^{2}}}\left(\kappa^{2}-\left(\kappa^{2}+1\right)x^{2}+\left(\kappa^{4}+2\kappa^{2}+2\right)xy-\left(\kappa^{2}+1\right)y^{2}\right)}{\kappa^{4}}.
\end{align*}
By Theorem \ref{thm:lks_slope}, $c^{(\mathrm{LKS})}=\frac{1}{2}\frac{\left[\mathbb{E}_{q}h_{p}(\mathbf{x},\mathbf{x}')\right]^{2}}{\mathbb{E}_{p}\left[h_{p}^{2}(\mathbf{x},\mathbf{x}')\right]}$
which mainly involves expectations with respect to a normal distribution.
In computing the expectation $\mathbb{E}_{x'\sim q}h_{p}(x,x')$,
the idea is to form the density for a new normal distribution by combining
$\frac{1}{\sqrt{2\pi\sigma_{q}^{2}}}e^{-(x-\mu_{q})^{2}/2\sigma_{q}^{2}}$
(the density of $q$) and the term $e^{-\frac{(x-y)^{2}}{2\kappa^{2}}}$
in the expression of $h_{p}(x,y)$. Computation of $\mathbb{E}_{x'\sim q}h_{p}(x,x')$
will then boil down to computing an expectation wrt. a new normal
distribution.

It turns out that 
\begin{align*}
\mathbb{E}_{x\sim q}\mathbb{E}_{x'\sim q}[h_{p}(x,x')] & =\frac{\mu_{q}^{2}\left(\kappa^{2}+2\sigma_{q}^{2}\right)+\left(\sigma_{q}^{2}-1\right)^{2}}{\left(\kappa^{2}+2\sigma_{q}^{2}\right)\sqrt{\frac{2\sigma_{q}^{2}}{\kappa^{2}}+1}}=S_{p}^{2}(q),\\
\mathbb{E}_{p}\left[h_{p}^{2}(\mathbf{x},\mathbf{x}')\right] & =\frac{\left(\kappa^{2}+4\right)\left(\kappa^{4}+4\kappa^{2}+5\right)\kappa^{2}+12}{\kappa^{3}\left(\kappa^{2}+4\right)^{5/2}}.
\end{align*}
Computing $\frac{1}{2}\frac{S_{p}^{4}(q)}{\mathbb{E}_{p}\left[h_{p}^{2}(\mathbf{x},\mathbf{x}')\right]}$
gives the slope.

\textbf{Proof of Claim 3}. The expression for $c_{1}^{(\mathrm{LKS})}$
is obtained straightforwardly by plugging $\sigma_{q}^{2}=1$ into
the expression of $c^{(\mathrm{LKS})}$. Assume $\mu_{q}\neq0$. It
can be seen that $c_{1}^{(\mathrm{LKS})}(\mu_{q},\kappa^{2})$ is
differentiable with respect to $\kappa^{2}$ on the interval $(0,\infty)$.
The partial derivative is given by
\begin{align*}
\frac{\partial}{\partial\kappa^{2}}c_{1}^{(\mathrm{LKS})} & =\frac{\left(\kappa^{2}\right)^{3/2}\left(\kappa^{2}+4\right)^{3/2}\left(7\kappa^{8}+56\kappa^{6}+166\kappa^{4}+216\kappa^{2}+120\right)\mu_{q}^{4}}{\left(\kappa^{2}+2\right)^{2}\left(\kappa^{8}+8\kappa^{6}+21\kappa^{4}+20\kappa^{2}+12\right)^{2}}.
\end{align*}
Since for any $\mu_{q}\neq0$, $\frac{\partial}{\partial\kappa^{2}}c_{1}^{(\mathrm{LKS})}>0$
for $\kappa^{2}\in(0,\infty)$, we conclude that $\kappa^{2}\mapsto c_{1}^{(\mathrm{LKS})}(\mu_{q},\kappa^{2})$
is a strictly increasing function on $(0,\infty)$. By taking the
limit, we have $\lim_{\kappa^{2}\to\infty}c_{1}^{(\mathrm{LKS})}(\mu_{q},\kappa^{2})=\mu_{q}^{4}/2$. 
\end{proof}
We are ready to prove Theorem \ref{thm:effgaussmean}. Recall that
$\sigma_{k}^{2}$ is the kernel bandwidth of $n\widehat{\mathrm{FSSD^{2}}}$,
and $\kappa^{2}$ is the kernel bandwidth of $\sqrt{n}\widehat{S_{l}^{2}}$
(see Section \ref{sec:kstein_test}). Recall Theorem \ref{thm:effgaussmean}:
\effgaussmean* 
\begin{proof}
By Proposition \ref{prop:fssd_slope_gauss}, the approximate slope
of $n\widehat{\mathrm{FSSD^{2}}}$ when $\sigma_{q}^{2}=1$ is 
\begin{align*}
c_{1}^{(\mathrm{FSSD)}}(\mu_{q},v,\sigma_{k}^{2}) & =\frac{\sigma_{k}^{2}\left(\sigma_{k}^{2}+2\right){}^{3}\mu_{q}^{2}e^{\frac{v^{2}}{\sigma_{k}^{2}+2}-\frac{\left(v-\mu_{q}\right){}^{2}}{\sigma_{k}^{2}+1}}}{\sqrt{\frac{2}{\sigma_{k}^{2}}+1}\left(\sigma_{k}^{2}+1\right)\left(\sigma_{k}^{6}+4\sigma_{k}^{4}+\left(v^{2}+5\right)\sigma_{k}^{2}+2\right)}.
\end{align*}
Theorem \ref{thm:gleser1964_efficiency} states that the approximate
efficiency $E_{1}(\mu_{q},v,\sigma_{k}^{2},\kappa^{2})$ is given
by the ratio $\frac{c_{1}^{(\mathrm{FSSD)}}(\mu_{q},v,\sigma_{k}^{2})}{c_{1}^{(\mathrm{LKS})}(\mu_{q},\kappa^{2})}$
(see Propositions \ref{prop:fssd_slope_gauss} and \ref{prop:lks_slope_gauss})
of the approximate slopes of the two tests. Pick $\sigma_{k}^{2}=1$,
and for any $\mu_{q}\neq0$, pick $v=2\mu_{q}$. These choices give
the slope
\begin{align*}
c_{1}^{(\mathrm{FSSD)}}(\mu_{q},2\mu_{q},1) & =\frac{9\sqrt{3}e^{\frac{5\mu_{q}^{2}}{6}}\mu_{q}^{2}}{2\left(4\mu_{q}^{2}+12\right)}.
\end{align*}
We have
\begin{align*}
E_{1}(\mu_{q},v,\sigma_{k}^{2},\kappa^{2}) & =E_{1}(\mu_{q},2\mu_{q},1,\kappa^{2})\\
 & =c_{1}^{(\mathrm{FSSD)}}(\mu_{q},2\mu_{q},1)/c_{1}^{(\mathrm{LKS})}(\mu_{q},\kappa^{2})\\
 & \stackrel{(a)}{\ge}c_{1}^{(\mathrm{FSSD)}}(\mu_{q},2\mu_{q},1)/\left(\frac{\mu_{q}^{4}}{2}\right)\\
 & =\frac{9\sqrt{3}e^{\frac{5\mu_{q}^{2}}{6}}}{\mu_{q}^{2}\left(4\mu_{q}^{2}+12\right)}:=g(\mu_{q}),
\end{align*}
where at $(a)$ we use $c_{1}^{(\mathrm{LKS})}(\mu_{q},\kappa^{2})\le\mu_{q}^{4}/2$
from (\ref{eq:lks_slope_vq1_bounds}). It can be seen that for $\mu_{q}\neq0$,
$g(\mu_{q})$ is an even function i.e., $g(\mu_{q})=g(-\mu_{q})$.
The second derivative 
\begin{align*}
\frac{\partial^{2}}{\partial\mu_{q}^{2}}g(\mu_{q}) & =\sqrt{3}e^{\frac{5\mu_{q}^{2}}{6}}\left(25\mu_{q}^{8}+45\mu_{q}^{6}-45\mu_{q}^{4}+81\mu_{q}^{2}+486\right)/\left(4\mu_{q}^{4}\left(\mu_{q}^{2}+3\right)^{3}\right)>0.
\end{align*}
To see that $\frac{\partial^{2}}{\partial\mu_{q}^{2}}g(\mu_{q})>0$,
consider two cases of $\mu_{q}^{2}\ge1$ and $0<\mu_{q}^{2}<1$. When
$\mu_{q}^{2}\ge1$,
\begin{align*}
g(\mu_{q}) & \ge\sqrt{3}e^{\frac{5\mu_{q}^{2}}{6}}\left(25\mu_{q}^{8}+81\mu_{q}^{2}+486\right)/\left(4\mu_{q}^{4}\left(\mu_{q}^{2}+3\right)^{3}\right)>0,
\end{align*}
because $45\mu_{q}^{6}-45\mu_{q}^{4}\ge0$. When $0<\mu_{q}^{2}<1$,
\begin{align*}
g(\mu_{q}) & \ge\sqrt{3}e^{\frac{5\mu_{q}^{2}}{6}}\left(25\mu_{q}^{8}+45\mu_{q}^{6}+486\right)/\left(4\mu_{q}^{4}\left(\mu_{q}^{2}+3\right)^{3}\right)>0,
\end{align*}
because $-45\mu_{q}^{4}+81\mu_{q}^{2}\ge0$. This shows that $g(\mu_{q})$
is convex on $(0,\infty)$. The function $g(\mu_{q})$ on $\mathbb{R}\backslash\{0\}$
achieves global minima at $\mu_{q}=\mu_{q}^{*}:=\pm\sqrt{\frac{3}{10}\left(\sqrt{41}-1\right)}\approx\pm1.273$.
This implies that 
\begin{align*}
E_{1}(\mu_{q},v,\sigma_{k}^{2},\kappa^{2}) & \ge g(\mu_{q})\ge g(\mu_{q}^{*})\\
 & =\frac{25\sqrt{3}e^{\frac{1}{4}\left(\sqrt{41}-1\right)}}{8\left(\sqrt{41}+4\right)}\approx2.00855>2.
\end{align*}
\end{proof}

\section{Known Results}

This section presents known results from other works.
\begin{thm}[{\cite[Theorem 2.2]{Chwialkowski2016}}]
\label{thm:stein_iff_pq} If the kernel $k$ is $C_{0}$-universal
\cite[Definition 4.1]{Carmeli2010}, $\mathbb{E}_{\mathbf{x}\sim q}\mathbb{E}_{\mathbf{x}'\sim q}h_{p}(\mathbf{x},\mathbf{x}')<\infty$,
and $\mathbb{E}_{\mathbf{x}\sim q}\|\nabla_{\mathbf{x}}\log\frac{p(\mathbf{x})}{q(\mathbf{x})}\|^{2}<\infty$,
then $S_{p}(q)=\|\mathbb{E}_{\mathbf{x}\sim q}\xi_{p}(\mathbf{x},\cdot)\|_{\mathcal{F}^{d}}=0$
if and only if $p=q$. 
\end{thm}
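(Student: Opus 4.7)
The plan is to prove both directions, noting that the forward direction follows from the classical Stein identity while the reverse relies on injectivity of the kernel mean embedding guaranteed by $C_0$-universality.

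For the easy direction ($p=q\Rightarrow S_p(q)=0$): I would show that $\mathbb{E}_{\mathbf{x}\sim p}\boldsymbol{\xi}_p(\mathbf{x},\cdot)=\mathbf{0}$ in $\mathcal{F}^d$. Coordinate-wise, integration by parts (using the tail decay condition $\lim_{\|\mathbf{x}\|\to\infty}p(\mathbf{x})k(\mathbf{x},\mathbf{y})=0$ implied by $C_0$-universality together with smoothness of $k$) yields $\mathbb{E}_{\mathbf{x}\sim p}[\partial_i\log p(\mathbf{x})\,k(\mathbf{x},\cdot)]=-\mathbb{E}_{\mathbf{x}\sim p}[\partial_ik(\mathbf{x},\cdot)]$, so the two summands in $\xi_{p,i}$ cancel in expectation. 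By Bochner integrability (guaranteed by the finite second-moment assumption combined with Cauchy--Schwarz), the expectation commutes with the inner product, giving $S_p(q)=\|\mathbf{0}\|_{\mathcal{F}^d}=0$.

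For the hard direction ($S_p(q)=0\Rightarrow p=q$): the key step is to re-express $\mathbf{g}(\cdot)=\mathbb{E}_{\mathbf{x}\sim q}\boldsymbol{\xi}_p(\mathbf{x},\cdot)$ in terms of the score difference. Applying integration by parts under $q$ rather than $p$ to the second summand of $\xi_{p,i}$ gives
\begin{equation*}
g_i(\cdot)=\mathbb{E}_{\mathbf{x}\sim q}\!\left[\bigl(\partial_i\log p(\mathbf{x})-\partial_i\log q(\mathbf{x})\bigr)k(\mathbf{x},\cdot)\right],
\end{equation*}
which is exactly the kernel mean embedding of the finite signed measure $d\mu_i(\mathbf{x})=s_i(\mathbf{x})q(\mathbf{x})\,d\mathbf{x}$, where $s_i=\partial_i\log(p/q)$. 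The finiteness of $\|\mu_i\|_{TV}$ follows from Cauchy--Schwarz applied to the hypothesis $\mathbb{E}_{\mathbf{x}\sim q}\|\nabla_{\mathbf{x}}\log(p/q)\|^2<\infty$. Now $S_p(q)=0$ forces $g_i\equiv0$ in $\mathcal{F}$ for every $i$, i.e. $\int k(\mathbf{x},\cdot)\,d\mu_i(\mathbf{x})=0$ in $\mathcal{F}$.

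The conclusion then rests on invoking $C_0$-universality, which (by a standard duality / Hahn--Banach argument relating density of $\mathcal{F}$ in $C_0(\mathcal{X})$ to injectivity of the mean embedding on the space of finite signed Borel measures) implies that the only signed measure embedded to zero is the zero measure. Hence $s_iq=0$ as a measure for each $i$, which means $\partial_i\log(p/q)=0$ almost everywhere on $\{q>0\}$; since $\mathcal{X}$ is a connected open set and both $p,q$ are densities, $\log(p/q)$ is a constant, so $p=q$. The main obstacle will be cleanly justifying the two technical pieces: (i) verifying that the integration-by-parts step has no boundary contribution under the hypotheses provided (which is where the tail/growth conditions enter), and (ii) citing or reproducing the precise statement linking $C_0$-universality of $k$ to injectivity of the mean map on the space of finite signed measures in which the $\mu_i$ live, since standard references phrase universality in terms of $C_0(\mathcal{X})$-density rather than directly as injectivity on signed measures.
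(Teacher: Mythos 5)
The paper does not prove this statement at all: it is imported as a known result from \cite[Theorem 2.2]{Chwialkowski2016}, and your argument --- rewriting the Stein witness via integration by parts under $q$ as the kernel mean embedding of the finite signed measure $\bigl(\nabla_{\mathbf{x}}\log\frac{p}{q}\bigr)q\,d\mathbf{x}$, then invoking the equivalence of $C_0$-universality with injectivity of the embedding on finite signed measures --- is essentially the proof given in that cited source. The one point to tighten is that the vanishing of the boundary terms in both integration-by-parts steps is an assumption carried by the surrounding results (cf.\ condition 4 of Theorem \ref{thm:fssd}) rather than a consequence of $C_0$-universality alone (which controls $k(\cdot,\mathbf{v})$ at infinity but not the product with an unbounded density), though you flag this yourself and it does not change the structure of the argument.
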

\begin{lem}[{\cite[Lemma 1]{ChwRamSejGre15}}]
\label{lem:rkhs_analytic} Let $U$ be an open subset of $\mathbb{R}^{d}$.
If $k$ is a bounded, analytic kernel on $U\times U$, then all functions
in the RKHS associated with $k$ are analytic.\footnote{The result of \cite{ChwRamSejGre15} considers only the case where
$U=\mathbb{R}^{d}$. However, the same proof goes through for any
open subset $U\subseteq\mathbb{R}^{d}$.}
\end{lem}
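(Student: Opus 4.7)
The plan is to exploit the reproducing property $f(x)=\langle f,k(\cdot,x)\rangle_{\mathcal{H}}$ and prove real-analyticity of $f$ by showing that the map $x\mapsto k(\cdot,x)$ from $U$ into $\mathcal{H}$ is itself real-analytic as a Hilbert-space-valued map. Once this is established, analyticity of $f$ will follow: linear continuous functionals applied to an $\mathcal{H}$-valued analytic map produce scalar analytic functions, and $f(x)=\langle f,k(\cdot,x)\rangle$ is exactly such an object.

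The first step is to note that, since $k$ is analytic on $U\times U$, for every fixed $x_0\in U$ and every multi-index $\alpha\in\mathbb{N}^d$, the partial derivative $\partial_x^{\alpha}k(\,\cdot\,,x_0)$ exists pointwise and, by \cite[Lemma 4.34]{Steinwart2008}, lies in $\mathcal{H}$ with
\[
\bigl\|\partial_x^{\alpha}k(\,\cdot\,,x_0)\bigr\|_{\mathcal{H}}^{2}=\bigl(\partial_x^{\alpha}\partial_y^{\alpha}k\bigr)(x_0,x_0),
\]
and moreover $\langle f,\partial_x^{\alpha}k(\,\cdot\,,x_0)\rangle_{\mathcal{H}}=\partial^{\alpha}f(x_0)$ for every $f\in\mathcal{H}$. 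Real-analyticity of $k$ together with boundedness lets us extend $k$ to a holomorphic function on a complex neighbourhood of $(x_0,x_0)$ in $\mathbb{C}^d\times\mathbb{C}^d$ on which it remains bounded; Cauchy's estimates on a polydisc of radius $r>0$ then give a geometric bound of the form $\bigl(\partial_x^{\alpha}\partial_y^{\alpha}k\bigr)(x_0,x_0)\le C\,(\alpha!)^{2}r^{-2|\alpha|}$ for some constants depending on $x_0$.

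The second step is to show that the $\mathcal{H}$-valued Taylor series
\[
k(\,\cdot\,,x)\;=\;\sum_{\alpha}\frac{(x-x_0)^{\alpha}}{\alpha!}\,\partial_x^{\alpha}k(\,\cdot\,,x_0)
\]
converges in $\mathcal{H}$-norm for $x$ in a sufficiently small real neighbourhood of $x_0$. The Cauchy estimates from the previous step bound $\|\partial_x^{\alpha}k(\cdot,x_0)\|_{\mathcal{H}}/\alpha!$ by $\sqrt{C}\,r^{-|\alpha|}$, so for $\|x-x_0\|<r/\sqrt{d}$ the series of norms dominates a convergent geometric series and the vector-valued series converges absolutely in $\mathcal{H}$. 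Pairing with any $f\in\mathcal{H}$ and using continuity of the inner product yields
\[
f(x)\;=\;\langle f,k(\,\cdot\,,x)\rangle_{\mathcal{H}}\;=\;\sum_{\alpha}\frac{(x-x_0)^{\alpha}}{\alpha!}\,\langle f,\partial_x^{\alpha}k(\,\cdot\,,x_0)\rangle_{\mathcal{H}},
\]
whose coefficients are bounded by $\|f\|_{\mathcal{H}}\,\|\partial_x^{\alpha}k(\cdot,x_0)\|_{\mathcal{H}}/\alpha!$ via Cauchy--Schwarz. Hence the scalar power series converges on the same real neighbourhood of $x_0$, exhibiting $f$ as the sum of a convergent power series around an arbitrary point $x_0\in U$, i.e.\ real-analytic on $U$.

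The main obstacle is the $\mathcal{H}$-norm convergence of the Taylor series in step two: one must convert pointwise analyticity of $k$ into uniform polydisc-type estimates on mixed derivatives $\partial_x^{\alpha}\partial_y^{\alpha}k$ at the diagonal, since it is precisely these mixed derivatives (not the one-sided ones) that control $\|\partial_x^{\alpha}k(\cdot,x_0)\|_{\mathcal{H}}$. Boundedness of $k$ is used here to apply Cauchy's inequalities on a fixed bipolydisc; without it, the estimates could blow up and the $\mathcal{H}$-valued Taylor series could fail to converge, even though $k$ is still analytic in each argument separately.
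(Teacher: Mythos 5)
The paper does not actually prove this lemma: it is stated in the ``Known Results'' appendix and cited to \cite[Lemma 1]{ChwRamSejGre15}, so there is no in-paper argument to compare against. Your overall strategy (derivative reproducing property $\partial^{\alpha}f(x_{0})=\langle f,\partial_{x}^{\alpha}k(\cdot,x_{0})\rangle_{\mathcal{H}}$, the identity $\|\partial_{x}^{\alpha}k(\cdot,x_{0})\|_{\mathcal{H}}^{2}=(\partial_{x}^{\alpha}\partial_{y}^{\alpha}k)(x_{0},x_{0})$, and Cauchy estimates on the holomorphic extension of $k$ near the diagonal) is the standard and correct route. However, there is one genuine gap at the decisive step: you only prove that the $\mathcal{H}$-valued Taylor series $\sum_{\alpha}\frac{(x-x_{0})^{\alpha}}{\alpha!}\partial_{x}^{\alpha}k(\cdot,x_{0})$ \emph{converges} in norm, and then write it with an equals sign as if convergence implied that its sum is $k(\cdot,x)$. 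It does not: a priori the series converges to some $g_{x}\in\mathcal{H}$ that need not equal $k(\cdot,x)$, exactly as a smooth function's Taylor series can converge without converging to the function (the factorial bounds you derive hold at the single point $x_{0}$, which is never enough to force a function to equal its Taylor expansion --- think of $e^{-1/t^{2}}$). Since the whole content of analyticity is ``converges \emph{to the function},'' this assertion is begging a substantial part of the question; the same gap propagates to the scalar series for $f$.

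The gap is fixable in either of two standard ways, and you should do one of them explicitly. (i) Identify the limit pointwise: norm convergence in an RKHS implies pointwise convergence, and $g_{x}(y)=\sum_{\alpha}\frac{(x-x_{0})^{\alpha}}{\alpha!}\,\partial_{2}^{\alpha}k(y,x_{0})$ is the Taylor series of the analytic function $x\mapsto k(y,x)$, which equals $k(y,x)$ for $x$ in some $y$-dependent neighbourhood of $x_{0}$; the identity theorem for real-analytic functions on the connected ball of norm-convergence then upgrades this to $g_{x}(y)=k(y,x)$ for all $y$ and all $x$ in a fixed neighbourhood. (ii) Bypass the vector-valued expansion entirely: run your Cauchy estimates at every point $x$ of a compact neighbourhood of $x_{0}$ (the holomorphic extension is bounded on a fixed compact complex bipolydisc, so the constants are uniform), obtain $|\partial^{\alpha}f(x)|\le\|f\|_{\mathcal{H}}\sqrt{C}\,\alpha!\,r^{-|\alpha|}$ uniformly in $x$, and invoke the classical criterion that a $C^{\infty}$ function with such uniform factorial derivative bounds on an open set is real-analytic there. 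A minor further point: global boundedness of $k$ is not what makes the Cauchy estimates work --- the holomorphic extension of a real-analytic function is automatically bounded on a small compact complex polydisc by continuity --- so your closing claim that the estimates ``could blow up'' without the boundedness hypothesis is not accurate, though harmless.
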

\begin{lem}[{Weyl's Perturbation Theorem \cite[p.\ 152]{Bha2013}}]
\label{lem:eigenvalue_cont} Let $\lambda_{j}(A)$ denote the $j^{th}$
eigenvalue of a square matrix $A$. If $A,B$ are two Hermitian matrices,
then 
\[
\max_{j}|\lambda_{j}(A)-\lambda_{j}(B)|\le\|A-B\|,
\]
where $\|\cdot\|$ denotes the operator norm.
\end{lem}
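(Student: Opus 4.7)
The plan is to derive Weyl's inequality from the Courant--Fischer min-max characterization of eigenvalues of Hermitian matrices. Fix the convention that $\lambda_1(M) \ge \lambda_2(M) \ge \cdots \ge \lambda_n(M)$ for an $n \times n$ Hermitian $M$; this is the ordering under which the min-max theorem gives
\[
\lambda_j(M) \;=\; \max_{\substack{V \subseteq \mathbb{C}^n \\ \dim V = j}} \; \min_{\substack{x \in V \\ \|x\|=1}} \; \langle x, M x\rangle.
\]
Both $\lambda_j(A)$ and $\lambda_j(B)$ are accordingly ordered the same way, which is the sense in which Weyl's bound is meant.

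The next step is to set $E := B - A$, which is Hermitian since $A$ and $B$ are. For a Hermitian matrix, the operator norm coincides with the largest eigenvalue in absolute value, hence $|\langle x, E x\rangle| \le \|E\|$ for every unit vector $x$. This yields, for every unit $x$, the two-sided sandwich
\[
\langle x, A x\rangle - \|E\| \;\le\; \langle x, B x\rangle \;\le\; \langle x, A x\rangle + \|E\|.
\]

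Now I would apply the min-max formula to each side. Taking the minimum over unit $x$ in a fixed $j$-dimensional subspace $V$ preserves the inequality (the $\pm\|E\|$ is a constant that passes through the $\min$), and then taking the maximum over $j$-dimensional $V$ preserves it again. This gives, for each index $j$,
\[
\lambda_j(A) - \|E\| \;\le\; \lambda_j(B) \;\le\; \lambda_j(A) + \|E\|,
\]
so $|\lambda_j(A) - \lambda_j(B)| \le \|A - B\|$. Taking the maximum over $j$ then yields the stated bound.

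The substantive content lies entirely in the min-max theorem; once that is invoked, everything else is a direct manipulation. The only mild subtlety is bookkeeping: one must verify that the min-max formula uses the same ordering convention as the statement of the lemma, and that pushing constants through $\min$ and $\max$ is valid (which it is, by monotonicity). No continuity or perturbation argument beyond this is needed.
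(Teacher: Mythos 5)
Your argument is correct: sandwiching the quadratic form $\langle x,Bx\rangle$ between $\langle x,Ax\rangle\pm\|B-A\|$ and pushing the constant through the Courant--Fischer min-max formula is the standard derivation of Weyl's perturbation bound, and your remark that both spectra must be taken in the same (say, decreasing) order is exactly the bookkeeping point that matters. The paper offers no proof of its own here --- Lemma \ref{lem:eigenvalue_cont} is stated as a known result with only a citation to Bhatia --- and your argument is essentially the one found in that reference (where it is routed through Weyl's monotonicity principle, itself a consequence of the same min-max characterization), so there is nothing further to reconcile.
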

\begin{lem}[{\cite[Lemma 21.2]{Vaa2000}}]
\label{lem:quantile_converges} For any sequence of cumulative distribution
functions, $F_{n}^{-1}\stackrel{d}{\to}F^{-1}$ if and only if $F_{n}\stackrel{d}{\to}F$. 
\end{lem}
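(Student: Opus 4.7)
The statement is the standard duality: for a sequence of CDFs, the generalized inverses (quantile functions) converge pointwise at their continuity points if and only if the CDFs themselves do. My plan is to exploit the Galois-type relation $F^{-1}(p) \le x \iff p \le F(x)$, together with the definition of the generalized inverse $F^{-1}(p) := \inf\{x : F(x) \ge p\}$, to translate one-sided bounds on CDFs into one-sided bounds on quantiles and vice versa.

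I would first record a pair of basic monotonicity/duality facts for any CDF $G$: (i) $G$ and $G^{-1}$ are nondecreasing; (ii) $G^{-1}(p) \le x$ iff $p \le G(x)$, with strict inequalities transferring in the obvious direction; and (iii) the set of continuity points of $G$ (resp.\ $G^{-1}$) is co-countable in $\mathbb{R}$ (resp.\ $(0,1)$). The second fact is the workhorse that lets me pass between the two domains.

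For the forward direction (assume $F_n \to F$ at every continuity point of $F$, show $F_n^{-1}(p) \to F^{-1}(p)$ at every continuity point $p$ of $F^{-1}$), I would fix such a $p$ and $\varepsilon>0$, and pick continuity points $x_1 < F^{-1}(p) < x_2$ of $F$ with $x_2-x_1<\varepsilon$. Continuity of $F^{-1}$ at $p$ ensures $F(x_1) < p < F(x_2)$ (otherwise $F$ would be flat at level $p$ on one side of $F^{-1}(p)$, making $F^{-1}$ discontinuous at $p$). Pointwise convergence at $x_1,x_2$ gives, for all large $n$, $F_n(x_1)<p<F_n(x_2)$, which by the duality in (ii) yields $x_1 \le F_n^{-1}(p) \le x_2$; letting $\varepsilon\downarrow 0$ through continuity points of $F$ (and then of $F^{-1}$ at $p$) completes the step. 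For the reverse direction, I would apply the same argument on $(0,1)$ after observing that the quantile function of a quantile function recovers the original CDF up to its value at its continuity points; concretely, fix a continuity point $x$ of $F$, choose continuity points $p_1 < F(x) < p_2$ of $F^{-1}$, and translate $F_n^{-1}(p_i) \to F^{-1}(p_i)$ into $p_1 \le F_n(x) \le p_2$ eventually via (ii). Finally, $F$ continuous at $x$ lets me squeeze $F_n(x) \to F(x)$.

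The main obstacle is handling the interface between continuity points of $F$ and of $F^{-1}$ cleanly: flat pieces of $F$ correspond to jumps of $F^{-1}$, and jumps of $F$ to flat pieces of $F^{-1}$, so the set where convergence is asserted on each side is different. The careful step is therefore to check that one can always sandwich $F^{-1}(p)$ between continuity points of $F$ (and symmetrically) whenever $p$ is a continuity point of $F^{-1}$; this uses that the non-continuity sets are countable and that (ii) controls the direction of the strict inequalities. Once this topological lemma is in hand, the rest is the two squeeze arguments described above.
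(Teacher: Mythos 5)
The paper does not prove this statement at all: it appears in the ``Known Results'' section as Lemma~\ref{lem:quantile_converges}, cited to \cite[Lemma 21.2]{Vaa2000}, so there is no in-paper proof to compare against. Your argument is a correct, self-contained rendition of the standard proof of that cited result: the Galois relation $F^{-1}(p)\le x \iff p\le F(x)$ plus sandwiching by continuity points (using that the discontinuity sets of the monotone functions $F$ and $F^{-1}$ are countable, hence their continuity points are dense) is exactly the textbook route. Two minor remarks. First, in the forward direction the inequality $F(x_1)<p$ for $x_1<F^{-1}(p)$ is automatic from the definition of the infimum and needs no continuity of $F^{-1}$ at $p$; continuity at $p$ is only needed to rule out $F(x_2)=p$ (a flat stretch of $F$ at level $p$ to the right of $F^{-1}(p)$, which would force a right-jump of $F^{-1}$ at $p$). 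Second, in the reverse direction your sandwich $p_1<F(x)<p_2$ requires $F(x)\in(0,1)$; the boundary cases $F(x)=0$ and $F(x)=1$ need the (trivial) one-sided version since $0\le F_n(x)\le 1$ always. With those details noted, the proof is complete and matches the standard argument the paper defers to by citation.
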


\bibliographysup{kgof_appendix}
\end{document}